\documentclass{article}

\usepackage{amsmath}
\usepackage{amssymb}
\usepackage{mathtools}
\usepackage{amsthm}
\newtheorem{definition}{Definition}
\usepackage{booktabs} 
\usepackage{multirow}
\newtheorem{theorem}{Theorem}
\usepackage{algorithm}
\usepackage{algorithmic}
\usepackage{bm}
\newcommand{\separator}{
  \begin{center}
    \rule{\columnwidth}{0.3mm}
  \end{center}
}

 \def\11{{\textbf{1}}}
\newcommand{\beq}{\begin{eqnarray*}}
\newcommand{\eeq}{\end{eqnarray*}}
\newcommand{\beqn}{\begin{eqnarray}}
\newcommand{\eeqn}{\end{eqnarray}}
\newcommand{\bemn}{\begin{multiline}}
\newcommand{\eemn}{\end{multiline}}

\def\N{\mathbb{N}}

\def\R{\mathbb{R}}

\usepackage{amsmath,amssymb,amsfonts,bm}

\def\R{\mathrm{I\kern-0.4ex R}}
\def\N{\mathrm{I\kern-0.4ex N}}
\def\E{\mathrm{I\kern-0.4ex E}}
\newcommand{\Set}[1]{\mathcal{#1}}

\newcommand{\ve}[1]{\bm #1}

    \PassOptionsToPackage{numbers, compress}{natbib}
\usepackage[preprint]{neurips_2026}

\usepackage[utf8]{inputenc} % allow utf-8 input
\usepackage[T1]{fontenc}    % use 8-bit T1 fonts
\usepackage{hyperref}       % hyperlinks
\usepackage{url}            % simple URL typesetting
\usepackage{booktabs}       % professional-quality tables
\usepackage{amsfonts}       % blackboard math symbols
\usepackage{nicefrac}       % compact symbols for 1/2, etc.
\usepackage{microtype}      % microtypography
\usepackage{xcolor}         % colors
\usepackage[capitalize,noabbrev]{cleveref}
\title{Stagnant Neuron: Towards Understanding the Plasticity Loss in Multi-Agent Reinforcement Learning Value Factorization Methods}

\author{
\shortstack[c]{
Zhengzhu Liu\textsuperscript{a,b},
Zeming Gao\textsuperscript{a,b},
Haoyuan Qin\textsuperscript{a,b},
Jiawei Hu\textsuperscript{a,b},
Junhao Wu\textsuperscript{a,b} \\
Miao Zhu\textsuperscript{a,b},
Haipeng Zhang\textsuperscript{a,b},
Chennan Ma\textsuperscript{a,b},
Siqi Shen\textsuperscript{a,b},
Cheng Wang\textsuperscript{a,b} \\
\textsuperscript{a}Fujian Key Laboratory of Urban Intelligent Sensing and Computing, \\
School of Informatics, Xiamen University (XMU), China \\
\textsuperscript{b}Key Laboratory of Multimedia Trusted Perception and Efficient Computing, XMU, China
}
}

\begin{document}

\maketitle

\begin{abstract}

Multi-Agent Reinforcement Learning (MARL) value factorization methods can suffer from a loss of plasticity, gradually failing to adapt when transferring to new task instances. We trace this issue to stagnant neurons, units whose gradient updates become negligibly small relative to their weights, thereby hindering learning. While existing plasticity injection methods exist, they prove ineffective for such neurons. To address this, we propose \textbf{K}nowledge-retentive \textbf{N}euron-level Plast\textbf{I}city \textbf{F}ocusing Inj\textbf{E}ction (KNIFE), a novel method that directly targets stagnant neurons. KNIFE replaces each stagnant neuron with a composite unit comprising three specialized components: a frozen knowledge neuron to preserve acquired knowledge, a re-initialized active neuron to restore learning capacity, and a compensation neuron to ensure the combined output matches the original, thus maintaining previous learned cooperation knowledge. Extensive experiments on SMACv2, SMAC, predator-prey, and matrix games demonstrate that KNIFE significantly outperforms state-of-the-art plasticity injection methods. %in both final performance and learning efficiency.

\end{abstract}

\section{Introduction}

%In a typical MARL value factorization method~\cite{QMIX}, once the neural networks are trained, the neurons of the neural network are frozen and used for evaluation. This contrasts with human learning which requires continual learning on new tasks~\cite{nature24plasticity}.

%In value factorization methods, the joint state-action value function is represented through neural networks consisting of agent networks and mixing networks.

Multi-Agent Reinforcement Learning (MARL) is widely used in domains such as traffic control~\cite{MARLSurvey} and games~\cite{QMIX}. In practical deployments, the environment dynamics or cooperation objectives may change over time, requiring continual adaptation~\cite{tomilin2025meal}. While plasticity loss has been studied in reinforcement learning~\cite{plasticity,plasticity2,nature24plasticity} and in MARL for stationary tasks~\cite{reborn}, it remains unclear whether value factorization methods~\cite{QMIX,qplex} suffer from plasticity loss under sequential task changes.

% We study this question in MARL settings built from the one-step matrix game, predator-prey, and a modified SMACv2 benchmark~\cite{smacv2}. Tasks are drawn from a small repertoire and switched periodically at episode boundaries. We observe consistent degradation as the number of task switch accumulate: learning bias in the matrix game grows and win rates in SMACv2 drop. Moreover, under the same total training budget, MARL methods (e.g., QMIX~\cite{QMIX} and QPLEX~\cite{qplex}) underperform their non-switching counterparts on each task, indicating a loss of plasticity for periodic task switches. We focus on adaptation degradation under periodic switching, rather than catastrophic forgetting on a non-revisiting task stream.

We study this question in MARL settings built from the one-step matrix game, predator-prey, and a modified SMACv2 benchmark~\cite{smacv2}. Agents are required to sequentially transfer across newly configured task instances. We observe consistent adaptation degradation as the number of sequential transfers accumulates: learning bias in the matrix game grows and win rates in SMACv2 drop. To rigorously evaluate this degradation, we demonstrate this by returning the agent to a previously mastered task; despite an equal and ample training budget, fine-tuned MARL methods (e.g., QMIX~\cite{QMIX} and QPLEX~\cite{qplex}) completely fail to re-acquire their previous performance, significantly underperforming a randomly initialized (trained from scratch) baseline.

To diagnose the source of adaptation degradation, we analyze neuron-level update dynamics and identify two extreme groups of neurons: stagnant neurons and volatile neurons, based on relative update activity. \emph{Stagnant neurons} have small relative updates (a low ratio between the gradient norm and the incoming weight norm), while \emph{volatile neurons} exhibit the opposite. We find that stagnant neurons consistently appear in value factorization methods and are concentrated in the mixing network. These neurons correlate with slower learning and are distinct from dormant neurons~\cite{redo} and GraMa neurons~\cite{regrama}. A stagnant neuron tends to remain in the same group for long periods, which motivates targeted neuron-level intervention.
%Once a neuron becomes stagnant/volatile, it tends to remain in the same group for long periods, which motivates targeted neuron-level intervention.

To mitigate plasticity loss during sequential task transfers, we propose \textbf{K}nowledge-retentive \textbf{N}euron-level Plast\textbf{I}city \textbf{F}ocusing Inj\textbf{E}ction (KNIFE). KNIFE applies a knowledge-invariant neuron surgery. In the injection stage, KNIFE replaces each stagnant/volatile neuron with three neurons (knowledge, active, compensation) so that the combined output matches the original at the intervention boundary. The knowledge and compensation are frozen while the active neuron is re-initialized to restore plasticity. 

We theoretically show that the injection stage of KNIFE satisfies the knowledge invariant++ (KI++) principle proposed in this work. Experiments across the SMACv2, predator-prey, and one-step payoff matrix benchmarks show that KNIFE improves adaptation under task switches compared to existing plasticity injection baselines, thanks to its ability to reduce the stagnant neurons.  We show that stagnant neurons provide a more accurate or sensitive tracking of plasticity loss in MARL than existing indicators such as dormant neurons.%Moreover, the ablation study justifies the design choice that satisfies the KI++ principle.

\section{Background}

\subsection{Dec-POMDPs}

For cooperative MARL, the Decentralized Partially Observable Markov Decision Processes (Dec-POMDPs)~\cite{DEC-POMDP}, can be formally described by the tuple $\langle \Set S, \{\Set A_i\}_{i=1}^N, P, r, \{\Set O_i\}_{i=1}^N, \{\Omega_i\}_{i=1}^N, N \rangle$, where $\mathcal{N}$ is the number of agents, $\mathcal{S}$ is a finite set of states, and $\Set{A}_i$ is the set of actions available to agent $i$. At each step $t$, agent $i$ chooses an action $a_i^t \in \Set A_i$, forming a joint action $\mathbf{a}^t \in \Set{A}^N = \Set A_1 \times \ldots \times \Set A_N$. This joint action lead to a new state $s^{t+1} \sim P(\cdot|s^t, \mathbf{a}^t)$ and a reward $r^t$. Each agent access an individual observation $o_i^t \in O_i$, which is drawn from $o_i^t \sim \Omega_i^i(\cdot|s^t)$. The individual policy of agent $i$ is $\pi_i(a_i|\tau_i)$, $\tau_i = (O_i \times A_i)^*$ is the local action-observation history for agent $i$. The state-action value function for agent $i$ is denoted as $Q_i(\tau_i, a_i)$.
The joint action-observation history is denoted as $ \mathbf{\tau} \in \mathcal{T}^N:=\tau_1 \times \ldots \times \tau_N$, on which it conditions the joint policy $\pi=<\pi_1, \ldots, \pi_N>$. The joint policy $\pi$ has a joint state-action value function $Q_{tot}(\mathbf{\tau},\mathbf{a})$.

\subsection{Value Function Factorization}

Value factorization is a widely adopted class of MARL methods. In these methods, the neural network typically comprises an agent network and a mixing network. The mixing network represents the joint state‑action value function $Q_{tot}(\mathbf{\tau}, \mathbf{a})$ by taking as input each agent’s individual utility function $Q_i$. \citet{reborn} proposes the following knowledge invariant principle for a neuron-level method, which assesses whether a method preserves the learned cooperative knowledge.

\begin{definition}[Knowledge Invariant Principle (KI)~\cite{reborn}]
\label{def:ki}
A joint state-action value function is represented as 
$Q_{tot}^{\theta,\phi}(\tau,\mathbf{a}) = f_{\theta}(Q_1^{\phi},...,Q_N^{\phi})$, where $f_{\theta}$ is the mixing function that mixes $Q_i$ into $Q_{tot}$, $\tau$ is joint observation-action history, $\mathbf{a} = [a_1,...a_N]$ is the joint action of multi-agent, $g: \mathbb{R} \mapsto \mathbb{R}$ is a function that maps weights in $\theta$ to $\hat{\theta}$, $g(\theta) = \hat{\theta}$. $h: \mathbb{R} \mapsto \mathbb{R}$, $h(\phi) = \tilde{\phi}$, $h$ map the weights in $\phi$ to $\tilde{\phi}$. If the following condition holds:
\begin{equation*}
\label{eq:ki}
\begin{aligned}
Q_{\mathrm{tot}}^{\theta,\phi}(\tau,\mathbf{a})
&\ge
Q_{\mathrm{tot}}^{\theta,\phi}(\tau,\mathbf{a}')
\Rightarrow
Q_{\mathrm{tot}}^{\hat{\theta},\tilde{\phi}}(\tau,\mathbf{a})
\ge
Q_{\mathrm{tot}}^{\hat{\theta},\tilde{\phi}}(\tau,\mathbf{a}'), 
\; \exists!~k:\ a_k\neq a_k'.
\end{aligned}
\end{equation*}
then, the two functions \( g \) and \( h \) satisfy the Knowledge Invariant Principle for \( Q_{\mathrm{tot}}^{\theta,\phi} \), where \(\left[Q_i\right]_{i=1}^N\) is individual agent utility function, \( N \) is the number of agents, \( \tau_i \) and \( a_i \) are the observation-action history and action of agent \( i \), respectively. \( \exists! \) represents the concept of unique existence.
\end{definition}

In this work, we relax the $\exists!~k:\ a_k\neq a_k'$ condition in KI, and propose the KI++ principle (see~\cref{def:ki++}).

\begin{figure}[!th]
	\centering
%	\begin{minipage}[b]{0.49\linewidth} % 第一行画图结果 一个现象，两个指标
%		\centering
		\includegraphics[width=0.32\columnwidth]{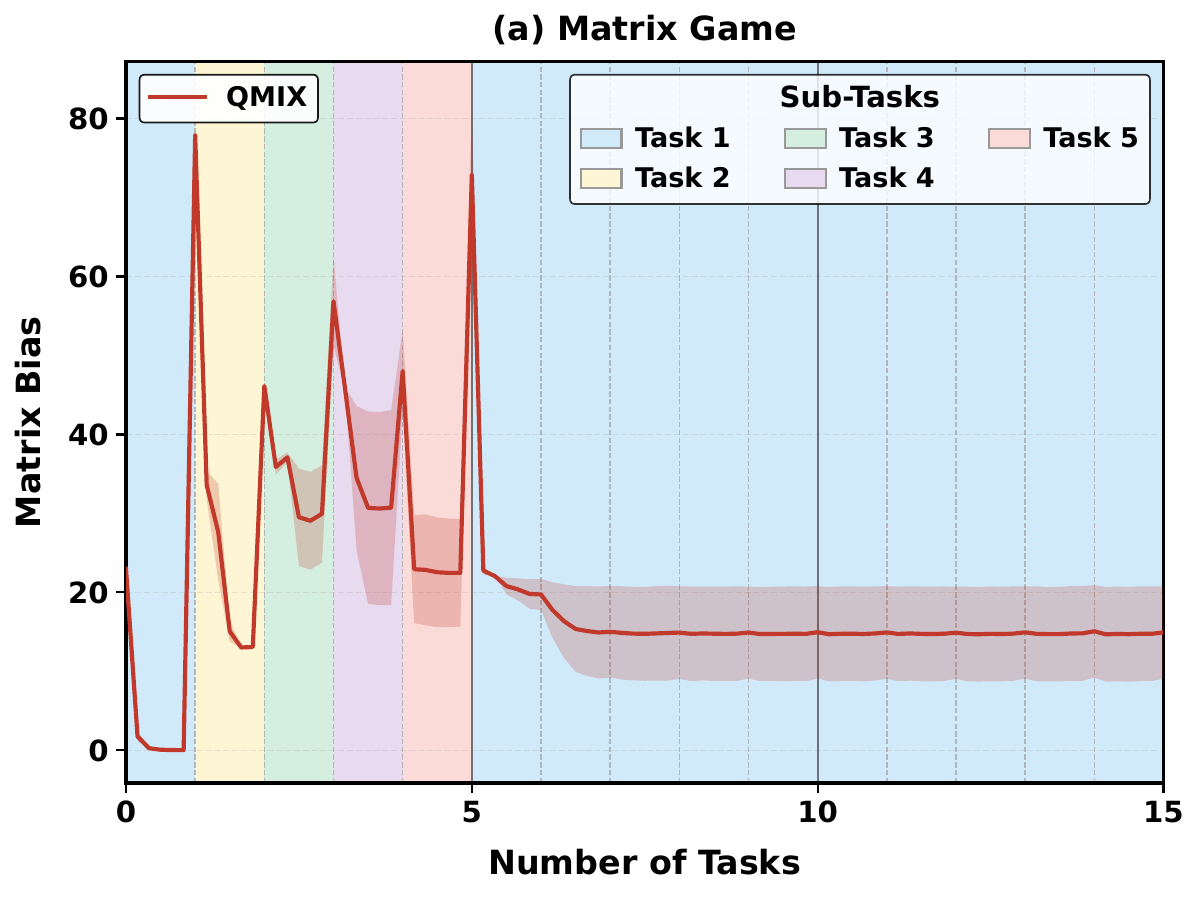} 
        \includegraphics[width=0.32\columnwidth]{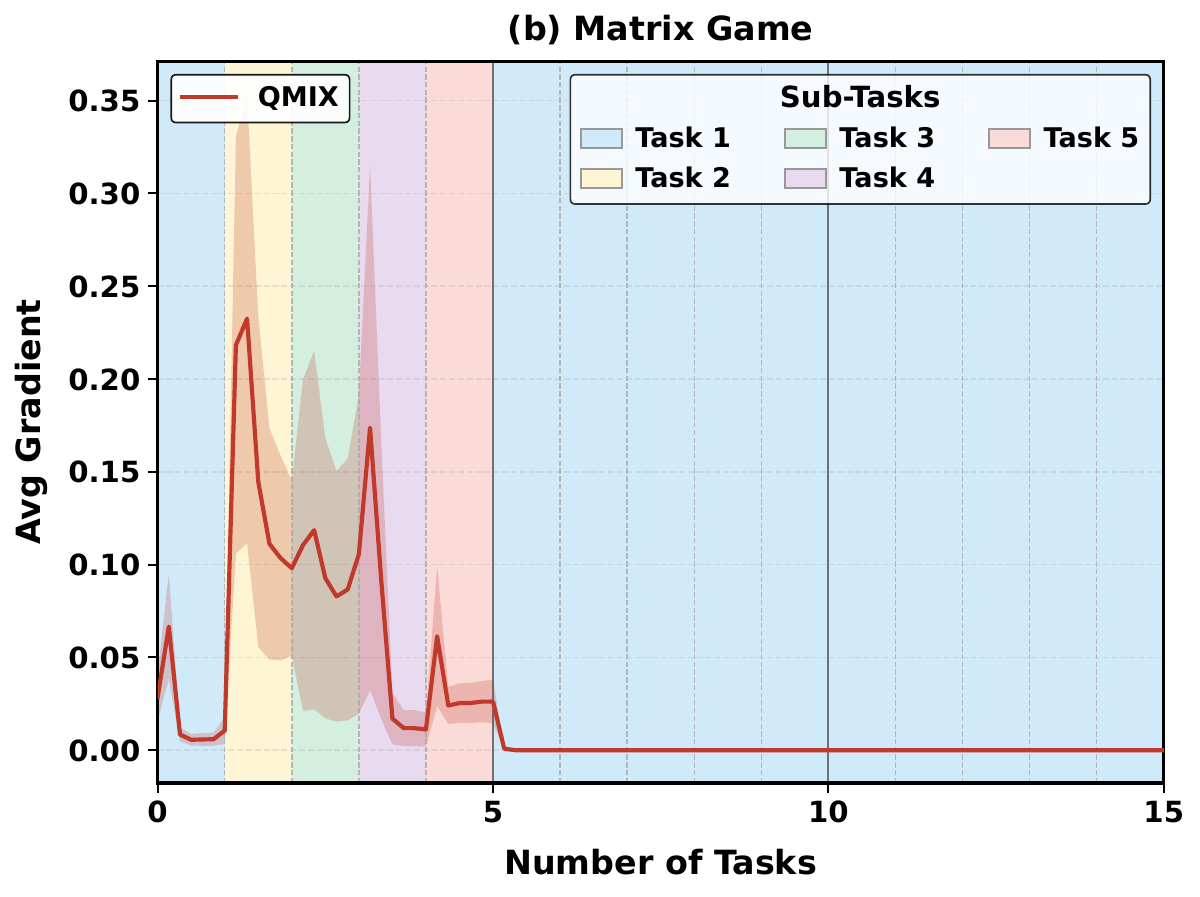} 		
        \includegraphics[width=0.32\columnwidth]{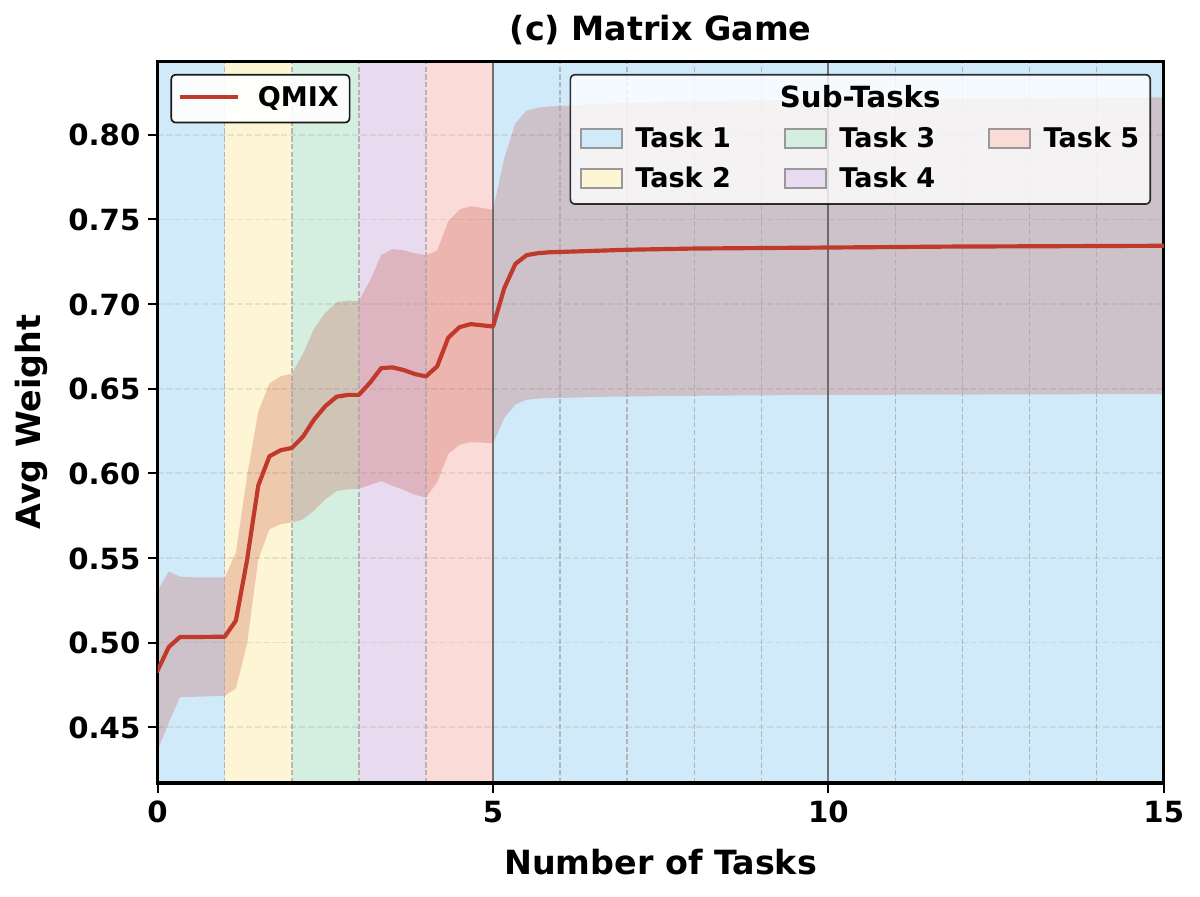} 		
  %       \includegraphics[width=0.49\columnwidth]{ICML26/experiment/stagnant/[1.4]qmix_multi_matrix_game_matrix_bias_task0.pdf}
		% \includegraphics[width=0.49\columnwidth]{ICML26/experiment/stagnant/[1.5]multi_matrix_game_matrix_bias_task1.pdf} 	
        % \includegraphics[width=0.49\columnwidth]{ICML26/experiment/stagnant/[1.3]multi_matrix_game_matrix_bias_task1.pdf}
%	\end{minipage}
        \caption{Two agents sequentially learn 5 different tasks, and then they go back to Task 1:
        (a) the matrix approximation bias (b) the average gradient norm (c) the average weight norm.}
  %        \caption{Two agents learns a sequence of 4 new tasks periodically. (a) Four one-step matrix games, the matrix approximation bias under all tasks across cycles for (b) QMIX and (c) QPLEX (d) Bias for QMIX under Task 1 across cycles.}
	\label{stagnant:matrix}
	\vspace{-0.3cm}
\end{figure}

\section{Related Work}\label{sec:related}

\subsection{Continual Reinforcement Learning (CRL)}

In Continual reinforcement learning (CRL)~\cite{abel2023definition}, the agent should continually learn diverse policies in changing environments and tasks. The prior work generally falls into three categories: regularization-based, architecture-based, and replay-based approaches. Regularization-based methods constrain parameter updates to preserve prior knowledge~\cite{kirkpatrick2017overcoming, zenke2017continual, aljundi2018memory, lewandowski2024learning}. Architecture-based methods mitigate interference through structural modularization~\cite{mallya2018packnet, liu2024neuroplastic}, while replay-based methods rehearse past experience~\cite{atkinson2021pseudo, wang2025experience}. Research in continual MARL remains limited, yet it requires both individual policies and cooperative strategies to adapt to other tasks~\cite{S2Q}. The continual MARL benchmark from MEAL~\cite{tomilin2025meal} shows that directly applying continual methods to MARL could fail.
\subsection{Plasticity Loss}

%CBP~\cite{dohare2021continual} maintains adaptability by periodically replacing selected parts of the network, introducing architectural plasticity during training.

We focus on plasticity injection methods which change/pertube the parameters of neurons. CBP~\cite{dohare2021continual} maintains adaptability by periodically replacing parts of neural networks. Redo~\cite{redo} identifies the existence of dormant neurons, whose activation score is low during the training procedure. ReGraMa~\cite{regrama} goes further by detecting whether gradients actually update a neuron, and then resets inactive ones. Reset~\cite{ReSet2} improves the plasticity through periodic resets of the weights of neurons in the last layer of a neural network. \cite{nature24plasticity} proposes continue backpropagation algorithm which periodically re-initiates some less-used neurons. MARR~\cite{yang2024sample} periodically resets weights for plasticity loss caused by a high replay ratio. Reborn~\cite{reborn} perturbs the parameters of dormant neurons in the MARL mixing network. Plasticity Injection (PI)~\cite{plasticity2} injects plasticity by injecting multiple last neural network layers. In contrast, our proposed KNIFE targets plasticity loss explicitly at the neuron level to avoid blind layer duplication. Please refer to Appendix \ref{sec:appendix_pi} and \ref{sec:appendix_init} for detailed comparison.

Our work, KNIFE, focuses on MARL value factorization methods~\cite{MARLSurvey,QMIX,wqmix,ResQ,riskq,Qatten,updet}. Moreover, it works on other MARL methods such as MADDPG~\cite{MADDPG}, MAPPO~\cite{MAPPO}, and DGN~\cite{DGN}.

\section{The Stagnant Neuron in MARL}

% 1.定义指标
% 2.摆烂现象存在marl，用smac举例
% 2.1摆烂神经元主要存在混合网络 已经跑了MMM2 ,3s_vs_5z, 2c_vs_64zg（如果这两个没有只能在v2上面说了）
% 2.2 摆烂神经元影响对新任务学习（人为构造不同个数的摆烂神经元？mask数量0,1,2,3，让他权重不更新）单步矩阵
% 2.3 摆烂神经元不怎么变化（热力图，标注谁摆烂）Overlap Coefficient
% 2.4 捣蛋鬼与摆烂的联系(摆烂越多捣蛋鬼越多)
% 3.加剧摆烂指标的原因
% 3.1 target non-stationarity/explicit goal switches exacerbate stagnation，越频繁越加剧 用500k的图切换任务只画摆烂率(这个要用不同的频率，至少三个)
% 3.2 rnn加深（rnn加深/ 性能变差，摆烂变多）
% 3.3 

% 本章旨在系统刻画一种在多智能体强化学习（MARL）值分解网络中普遍存在的可塑性退化现象：网络中部分神经元虽然并未“死亡”（并非持续零激活），但其参数更新在训练中逐渐失去有效性，表现为“活着但学不动”。我们称这类神经元为摆烂神经元（stagnant neurons），并将其根本病理归因于优化惰性（Optimization Inertia）：权重范数逐渐膨胀、而相对更新率趋近于零，导致该神经元在后续训练与分布变化下缺乏适应能力。

% 本章首先给出神经元可塑性指标与摆烂判据（Sec. 4.1），随后在 SMAC/SMACv2 环境中展示摆烂现象的存在性与普遍性（Sec. 4.2），并进一步分析其空间分布（更集中于 mixing network）、其对新任务适应的危害、其跨时间的身份稳定性（“同一批神经元长期摆烂”）以及其与“捣蛋鬼”（高尾部过激更新神经元）之间的联系（Sec. 4.2.1–4.2.4）。最后，我们用受控实验揭示两类会显著加剧摆烂的因素：显式目标非平稳性与 RNN 深度（Sec. 4.3）。

In this section, we study the loss of plasticity in MARL. We find plasticity loss in MARL across multiple benchmarks and value factorization methods during task transfers. We identify a critical source of this plasticity loss during continual adaptation: stagnant neurons, whose weights become excessively large while their relative updates are negligible. Stagnant neurons occur across multiple MARL methods and mainly reside in the mixing network. Ultimately, these stagnant neurons severely hinder MARL learning and adaptation when transferring to new environments.

\subsection{The Plasticity Loss during Sequential Task Transfers}

We study the plasticity loss of MARL during sequential task transfers by modifying a one-step two-agent matrix game and the StarCraft Multi-Agent Challenge v2 (SMACv2)~\cite{smacv2} for continual learning scenarios.

In the one-step matrix game, each unique payoff matrix constitutes a task. The agents are required to sequentially transfer across five distinct tasks (Task 1 to 5) before returning to Task 1 to evaluate their re-adaptation capacity. Figure~\ref{stagnant:matrix} (a) depicts the overall matrix approximation bias of QMIX across this continual adaptation process. Furthermore, Figure~\ref{stagnant:matrix} (b) and (c) delve into the micro-level dynamics of the network during these transfers, illustrating the average gradient and the average weight magnitude, respectively. The horizontal axis represents the sequential transfer stage, while the vertical axes indicate the corresponding metrics. The sequence of tasks is visually separated by distinct background rectangles, with the color indicating the currently active task environment.

% In the one-step matrix game, each unique payoff matrix constitutes a task. The agents must learn a sequence of four tasks (Task 1–4), with the active payoff matrix switching periodically at fixed intervals, as illustrated in Figure~\ref{stagnant:matrix} (a). Figure~\ref{stagnant:matrix} (b) depicts the overall approximation bias of QMIX across all tasks, respectively. The horizontal axis represents the cycle number, while the vertical axis indicates the bias. Each cycle consists of four consecutive tasks, represented by blue, yellow, green, and pink rectangles, with the color indicating the currently active task.

%Figure~\ref{stagnant:matrix} (b) and (c) depict the overall approximation bias of QMIX and QPLEX across all tasks, respectively. In these figures, the horizontal axis represents the cycle number, while the vertical axis indicates the bias. Each cycle consists of four consecutive tasks, represented by blue, yellow, green, and pink rectangles, with the color indicating the currently active task.

\begin{figure}[!th] %这里放112311和  112311 + 3s vs 5z的摆烂率
	\centering
	
   % \begin{minipage}[b]{\linewidth} % 第一行画图结果
		%\centering
        \includegraphics[width=0.32\columnwidth]{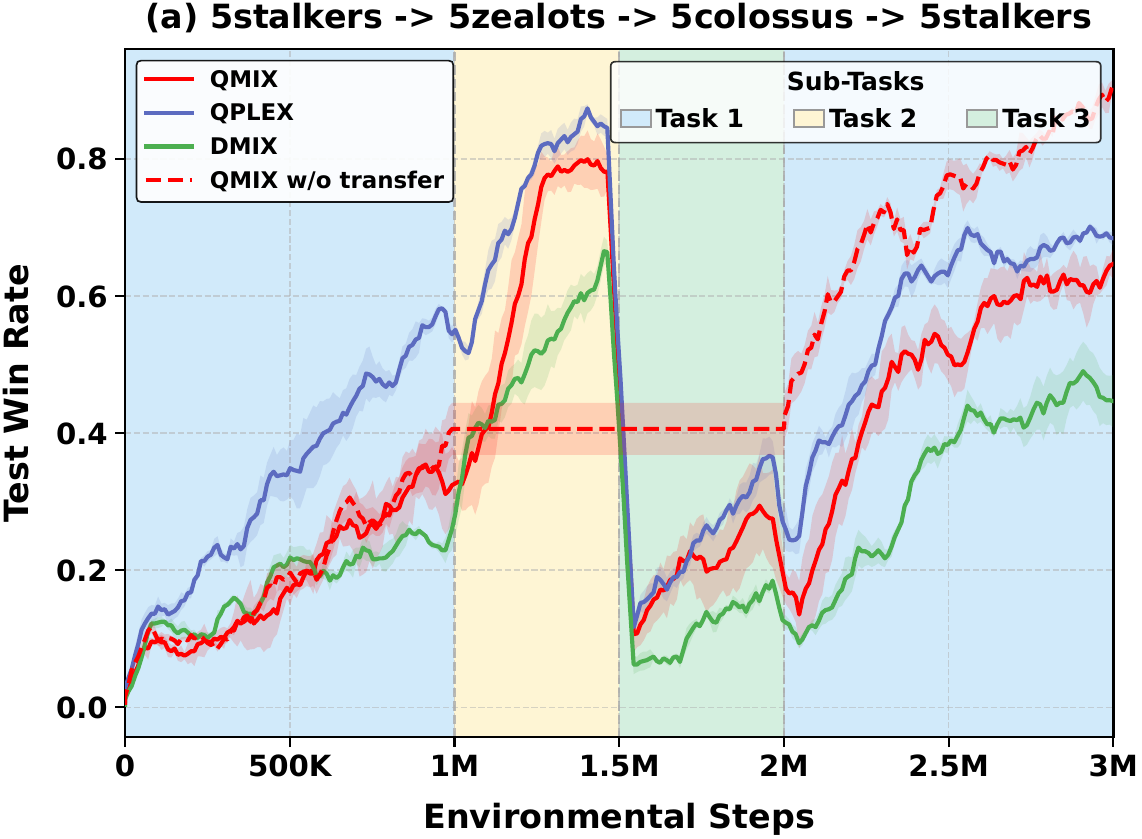} 
        \includegraphics[width=0.32\columnwidth]{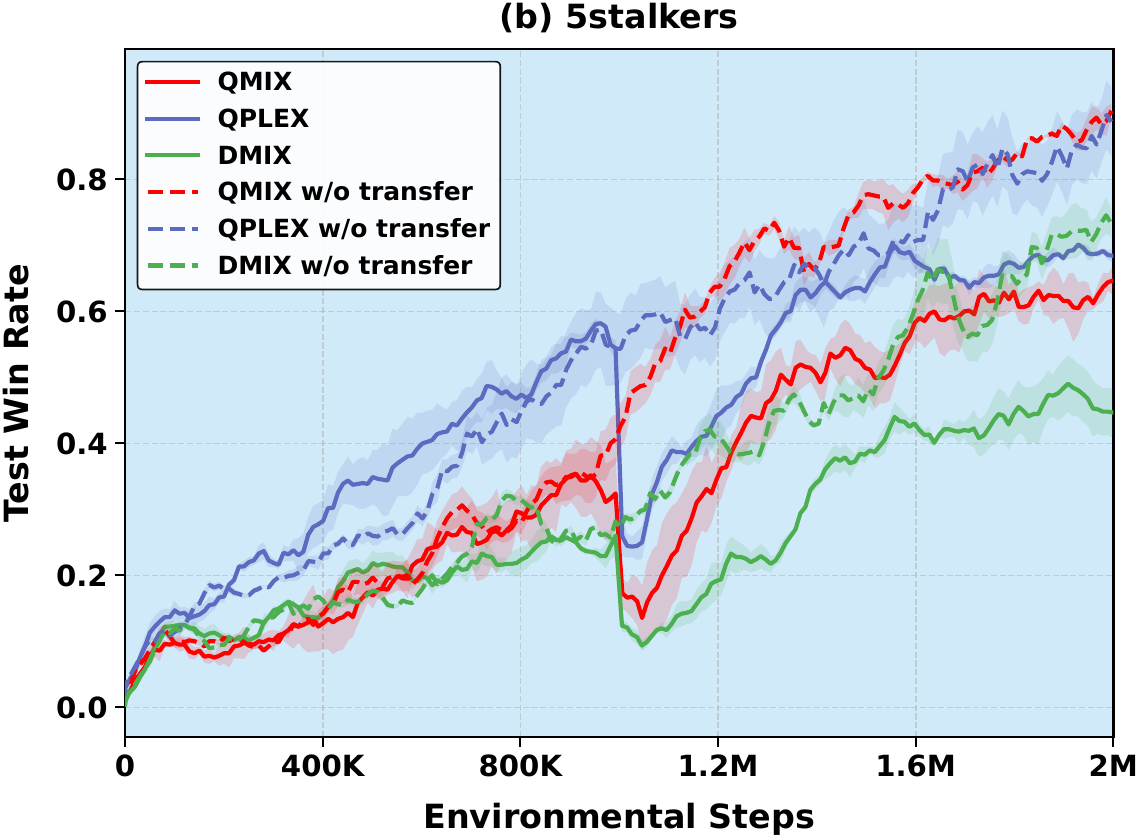} 	
		\includegraphics[width=0.32\columnwidth]{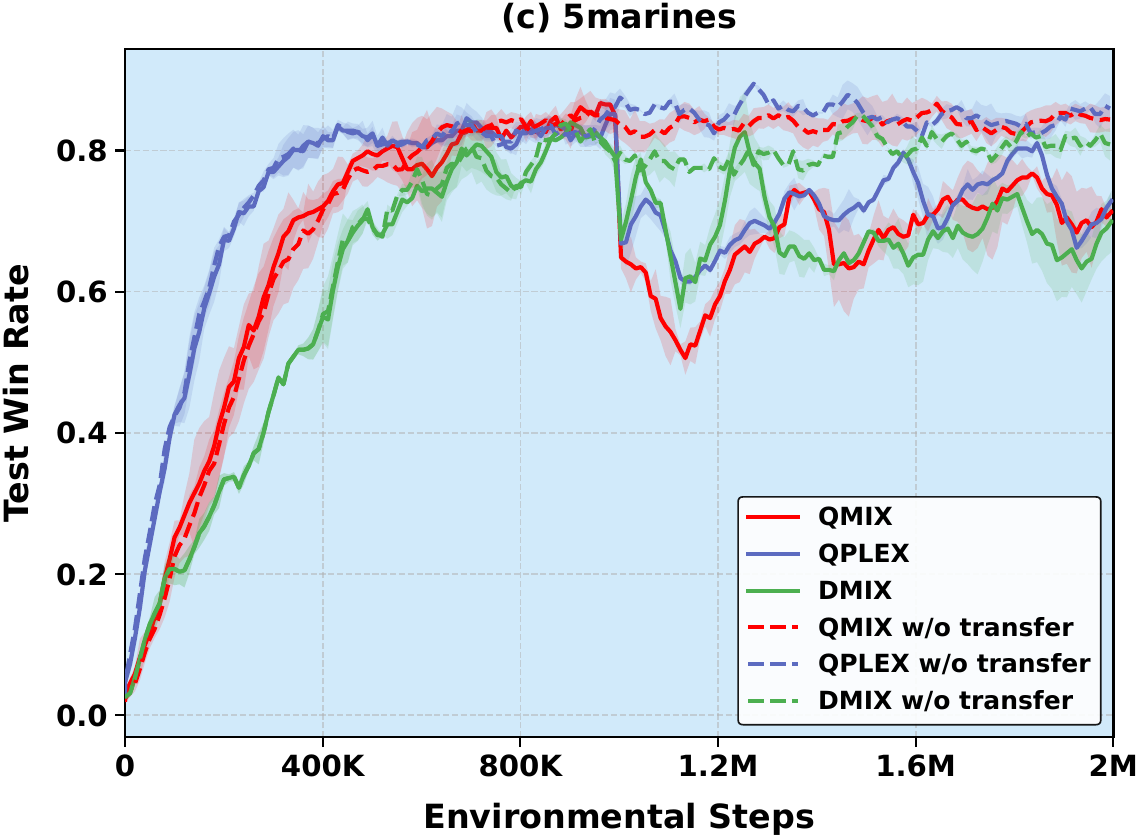} 
        % \includegraphics[width=0.24\columnwidth]{ICML26/experiment/stagnant/[3.3]3s_vs_5z_Ratio_Combined.pdf} 
%		\includegraphics[width=0.32\columnwidth]{ICML26/experiment/stagnant/[2.3]5gen_protoss_tasks_v6_500k_task1.pdf}
%	\end{minipage}
        \caption{Plasticity loss in modified SMACv2 with sequential tasks. (a) Overall win rate for all Protoss tasks, (b) Win rate for Protoss task 1, (c) Win rate for Terran task 1. } %and (d) 3s\_vs\_5z.  The tasks switch between 5 Stalkers vs. 5 Stalkers and 5 Zealots vs. 5 Zealots.
%        \caption{Periodic task switching reveals plasticity loss in modified SMACv2. The tasks switch between 5 Stalkers vs. 5 Stalkers and 5 Zealots vs. 5 Zealots. (a) Overall win rate; (b) Task win rate compared to single-task training (“w/o new task”).}
	\label{plasticity:smacv2}
	\vspace{-0.3cm}
\end{figure}

\begin{figure}[!th] %这里放112311和  112311 + 3s vs 5z的摆烂率
	\centering
	
        \includegraphics[width=0.32\columnwidth]{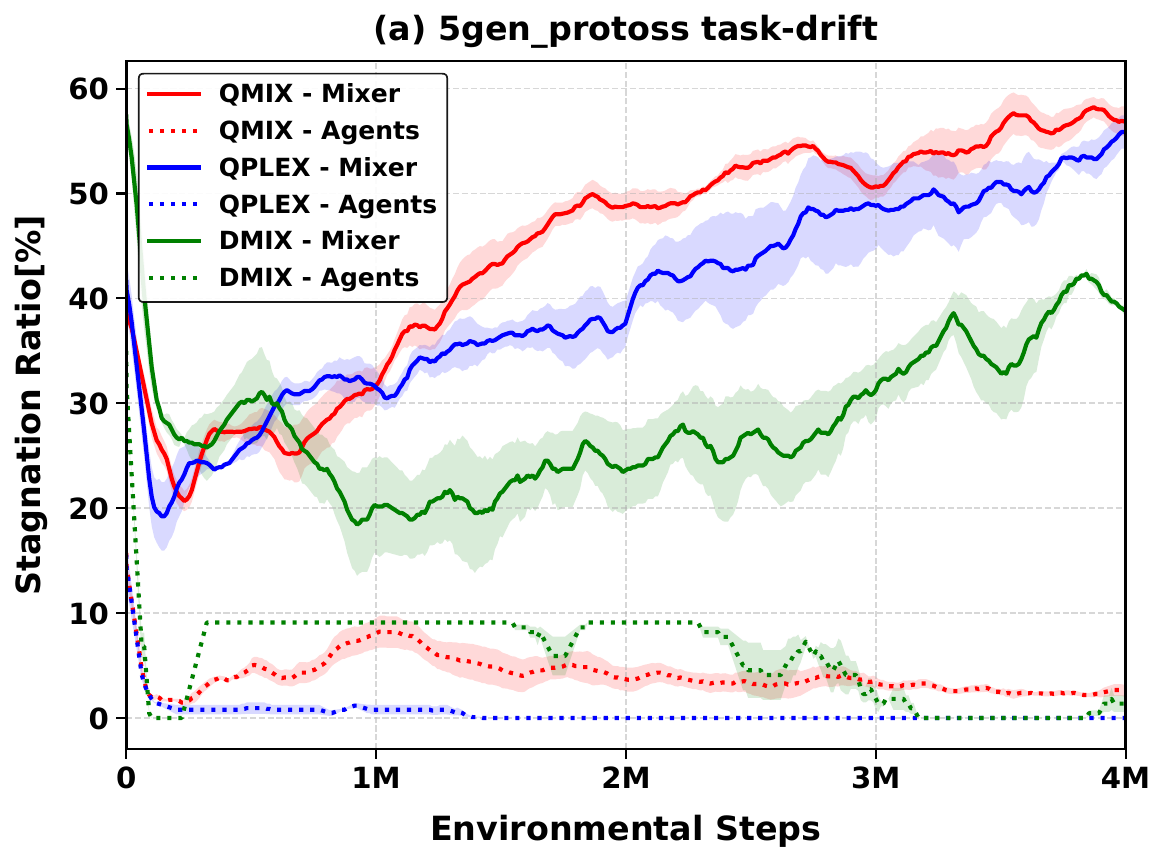} 
        \includegraphics[width=0.32\columnwidth]{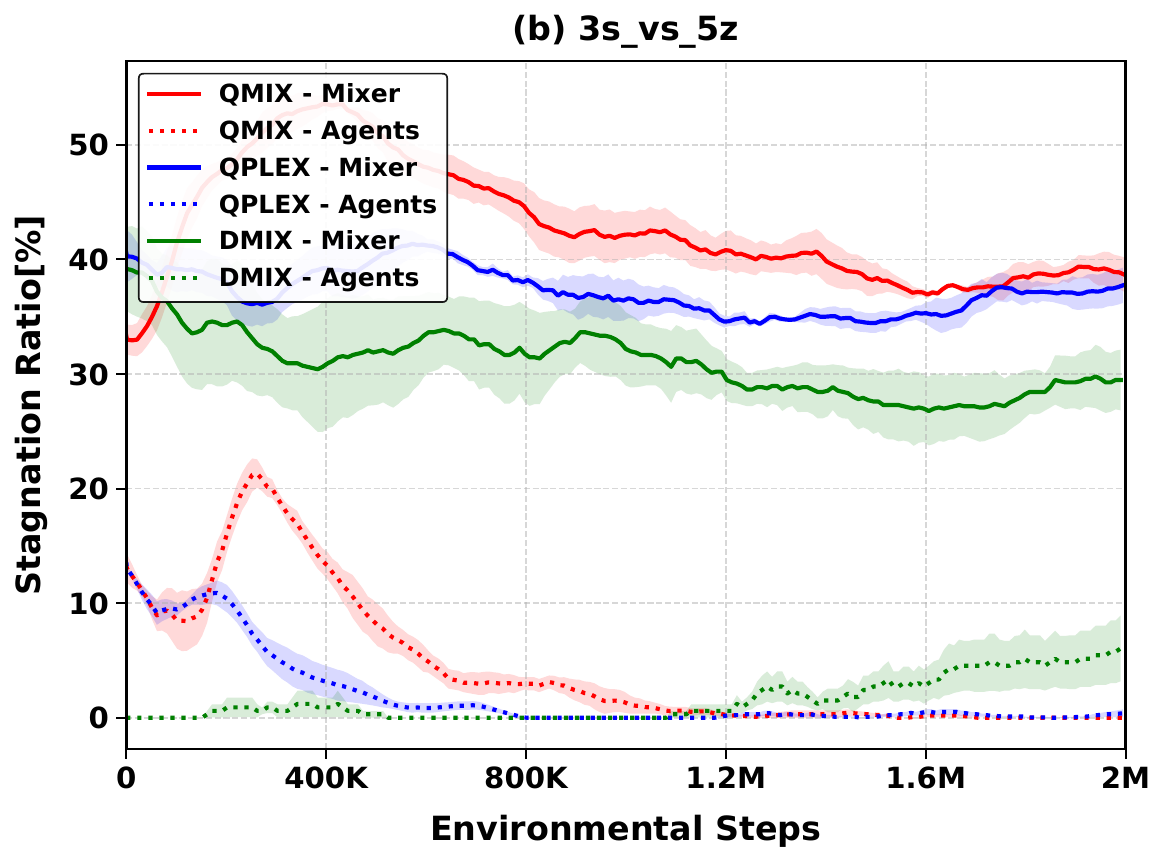} 
        \includegraphics[width=0.32\columnwidth]{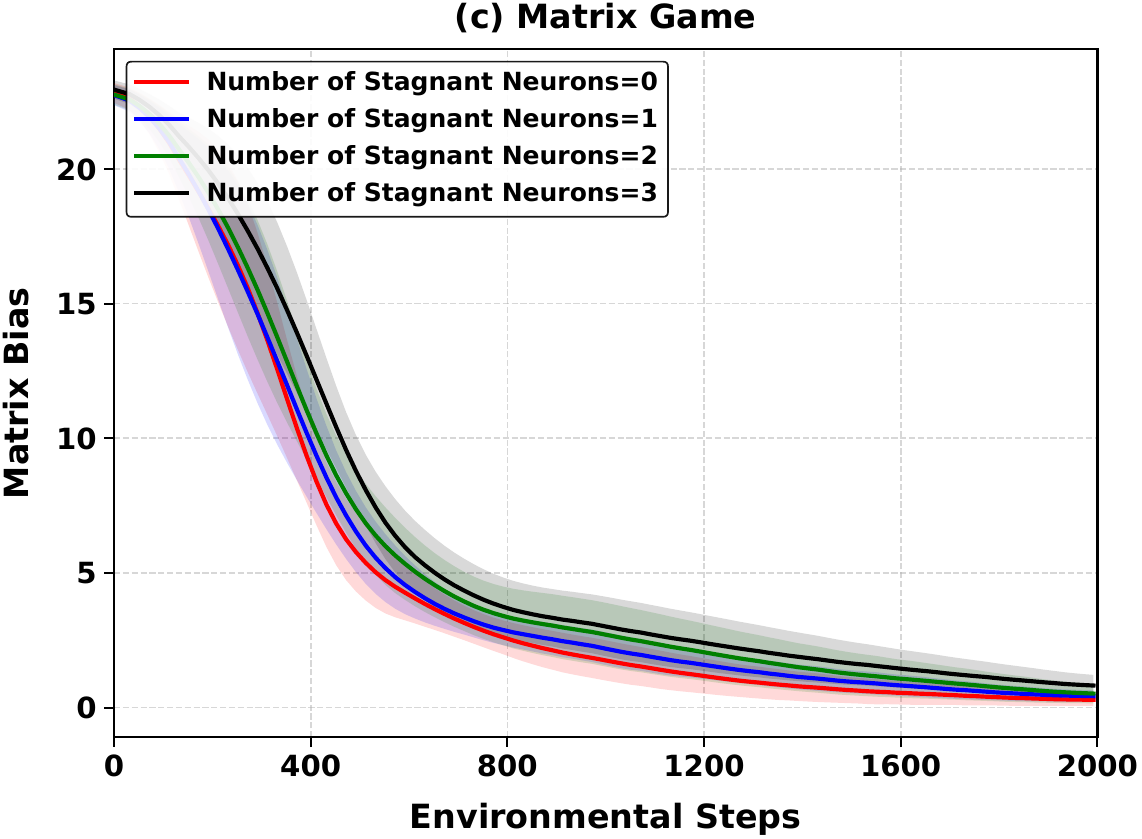} 		
		%\includegraphics[width=0.32\columnwidth]{ICML26/experiment/stagnant/[4.2]Stagnant_Ratio_Volatile_neurons_Matrix_Game.pdf} 	
       % \includegraphics[width=0.32\columnwidth]{ICML26/experiment/stagnant/[3.3]3s_vs_5z_Ratio_Combined.pdf} 
%		\includegraphics[width=0.32\columnwidth]{ICML26/experiment/stagnant/[2.3]5gen_protoss_tasks_v6_500k_task1.pdf}
%	\end{minipage}
        \caption{Stagnant neurons are prevalent across environments, and are consistently more concentrated in the mixer network than in the agent. Stagnant neurons correlate with increase matrix bias.  (a) SMACv2 5 protoss task-drift, (b) SMAC 3s\_vs\_5z. (c) Matrix bias with different stagnant neurons.}
	\label{stagnant:smacv2andmatrix-change}
	\vspace{-0.3cm}
\end{figure}

% \begin{figure}[!th]
% 	\centering
	
% %    \begin{minipage}[b]{\linewidth} % 第一行画图结果
% 		\centering
%         \includegraphics[width=0.32\columnwidth]{ICML26/experiment/stagnant/[3.1]5gen_protoss_tasks_v6_500k_Ratio_Combined.pdf}
% %        \includegraphics[width=0.49\columnwidth]{ICML26/experiment/stagnant/[3.2]5gen_protoss_Ratio_Combined.pdf} 
% 		 \includegraphics[width=0.32\columnwidth]{ICML26/experiment/stagnant/[3.3]3s_vs_5z_Ratio_Combined.pdf} 	
%     \includegraphics[width=0.32\columnwidth]{ICML26/experiment/stagnant/[3.3]3s_vs_5z_Ratio_Combined.pdf} 	
% 	% \end{minipage}
%         \caption{Stagnant neurons are prevalent across environments, and are consistently more concentrated in the mixer network than in the agent networks: (a) 5gen\_protoss task switching and (b) 3s\_vs\_5z.}
% 	\label{stagnant:analysis1}
% 	\vspace{-0.3cm}
% \end{figure}

The model exhibits a progressive loss of plasticity. As illustrated in Figure~\ref{stagnant:matrix} (a), when the agents return to the initially mastered Task 1 (the final extended phase), the matrix approximation bias plateaus at a significantly higher level compared to its performance during the initial exposure. This macroscopic degradation is directly explained by the micro-level dynamics detailed in Figure~\ref{stagnant:matrix} (b) and (c). Upon returning to Task 1, the average gradient vanishes to near zero, while the average weight magnitude norm is much larger than the weight norm. This stark contrast reinforces that the network has not merely forgotten the task, but rather, the plasticity loss is systemic, fundamentally hindering the agents' capacity to re-adapt.

In the modified SMACv2, two teams of five agents combat each other. Each task switches the agents on \emph{both} sides on a fixed schedule. Figure~\ref{plasticity:smacv2}(a) compares the win rates of QMIX, QPLEX, and DMIX under this continual task setting. Figure~\ref{plasticity:smacv2}(b) focuses on one specific scenario (task 1), which stitch the curves of two task 1 together from (a). Figure~\ref{plasticity:smacv2}(c) shows the performance of the Terran in Task 1. Given the same training time, these is a clear performance drop across methods that are trained consistently without task-transfer (e.g., QMIX w/o transfer) compare to their transfer counter-part (e.g., QMIX). This demonstrates a loss of plasticity for them under continue learning setting. %For detailed settings, please refer to the appendix~\ref{app:fig:112311}.

%Together, these results demonstrate that MARL plasticity diminishes as the task sequence progresses.

%In SMACv2, two group of agents combat with each others. In each cycle, the combating agents of the two sides are changed according to a schedule at fixed time intervals. For example, in one time interval, the task is stalkers vs stalkers, and the next task at next time interval is zealots vs zealots. Figure~\ref{stagnant:smacv2} (a) depict the win ratio of different scenarios with periodical new tasks for QMIX, QPLEX, and DMIX. The performance of these methods without new tasks are plotted as well. Figure~\ref{stagnant:smacv2} (b) depict the win ratio for one of the tasks (5 stalkers vs 5 stalkers). These methods suffer perform poor compare to their non-new-task scenarios.

%QMIX suffers from plasticity loss when facing new tasks of SMACv2. 

%紧接着一个smacv2的图，三张，使用标准qmix训练当作标准w/o new task
%We study the impact of new tasks for QPLEX and DMIX. As it is depicted in Figure~\ref{stagnant:smacv2} , QPLEX and DMIX suffer from plasticity loss as well. 

\subsection{Stagnant Neurons}

%\begin{figure*}[h]

To understand potential sources of plasticity loss, we examine neurons in neural networks. We define a few metrics and neurons as follows. 
%qplex，dmix也有，smacv2图来跑，用标准的自己方法当作 w/o new task 六根线。

\begin{definition}[Relative Update Activity (RUA)]
For a neural network layer ${\ell}$, let $w_i^{\ell}$ denote the incoming weight vector of neuron $i$ and $g_i^{\ell}=\nabla_{w_i^{\ell}}\mathcal{L}$ denote its gradient. We define the \textbf{RUA}:
%We quantify neuron-wise plasticity at the parameter level. For a linear layer ${\ell}$, let $w_i^{\ell}$ denote the incoming weight vector of neuron $i$ and $g_i^{\ell}=\nabla_{w_i^{\ell}}\mathcal{L}$ denote its gradient. We define the \textbf{Update Activity (UA)}:

\begin{equation*}
\label{eq:rua}
\mathrm{UA}_i^{\ell} = \frac{\|g_i^{\ell}\|_2}{\|w_i^{\ell}\|_2 + \epsilon}, \quad
\mathrm{\overline{UA}}_i^{\ell} = \frac{1}{T}\sum_{t=1}^T\mathrm{UA}_i^{\ell}, \quad
\mathrm{RUA}_i^{\ell} = \frac{\mathrm{\overline{UA}}_i^{\ell}}{\mathbb{E}_{j\in{\ell}}[\mathrm{\overline{UA}}_j^{\ell}]}
\end{equation*}

where $\epsilon$ is a small constant for numerical stability. $\mathrm{\overline{UA}}_i^{\ell}$ is the average update activity over time interval $T$. %Since absolute scales vary across layers and training stages, we normalize $\overline{UA}$ within each layer ${\ell}$:
%\begin{equation}

%\end{equation}

% For simplicity, we exclude the bias term, as the inertia we study is dominated by the growth of $\|w_i\|$. Including bias in the definition leads to the same qualitative conclusions (Appendix~X).
\end{definition}

\begin{figure}[!th]
	\centering
	 \begin{minipage}[b]{\linewidth} % 第一行画图结果
	 	\centering
 
         \includegraphics[width=0.32\columnwidth]{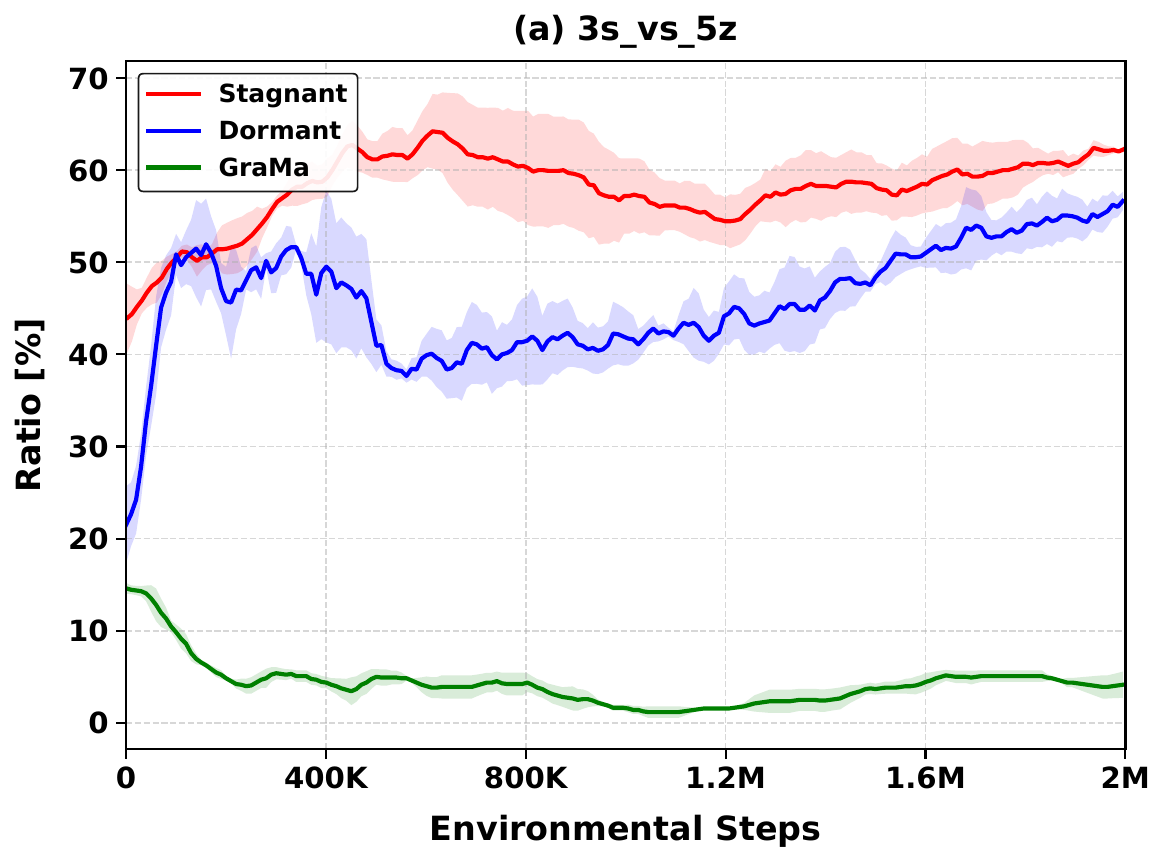}
         \includegraphics[width=0.32\columnwidth]{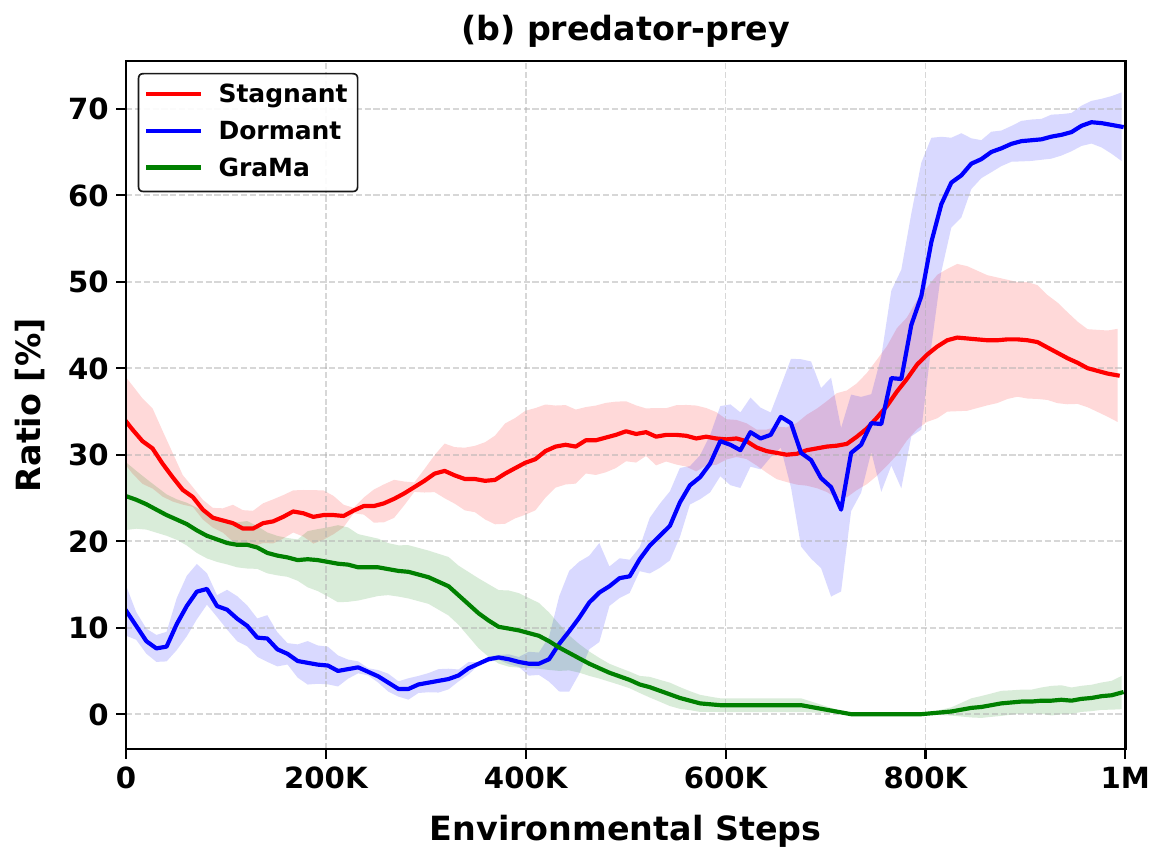} 
                \includegraphics[width=0.32\columnwidth]{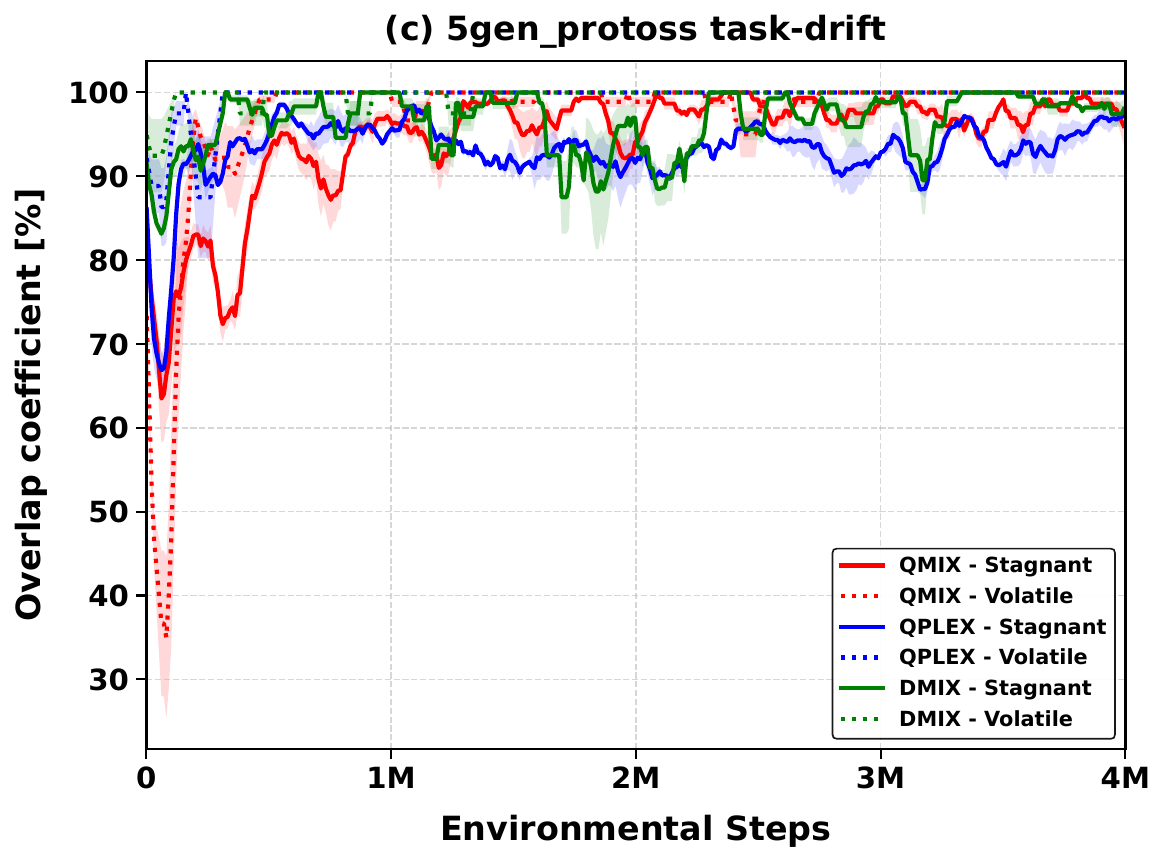} 
	 \end{minipage}
      \caption{The ratio of the Stagnant, Dormant, and GraMa neurons in (a) 3s\_vs\_5z of SMAC and (b) Predator-prey. (c) Overlap coefficient stagnant/volatile Neurons between the current iteration and the previous iteration for SMACv2 5 protoss task-drift}\label{fig:overlapAndOtherNeurons}
\end{figure}

\begin{definition}[Stagnant Neuron]
A neuron i in layer ${\ell}$ is a stagnant neuron if its score $\mathrm{RUA}_i^{\ell} < \alpha$ (i.e., 0.25). % 阈值是多少得填一下
\end{definition}

Stagnant neurons exhibit excessive optimization inertia (e.g., $\lVert \mathbf{w_i^\ell}\rVert_2 \gg 0$ while $|g_i^\ell\|_2$ is small), representing \emph{wasted capacity} that cannot adapt effectively.

\begin{definition}[Volatile Neuron]
A neuron i in layer ${\ell}$ is a volatile neuron if its score $\mathrm{RUA}_i^{\ell} > \beta$. %(i.e., 3). % 阈值是多少得填一下
\end{definition}

Figure~\ref{stagnant:smacv2andmatrix-change} (a) depicts the ratio of stagnant neurons for QMIX, QPLEX, and DMIX in the agent network and the mixing network. We find that stagnant neurons concentrate heavily in the mixing network. Interestingly, these stagnant neurons emerge not only in continual tasks but also in single-task learning scenarios. As shown in Figure~\ref{stagnant:smacv2andmatrix-change} (b), stagnant neurons still accumulate in the 3s\_vs\_5z scenario of SMAC~\cite{SMAC}, where agents learn one task only. We find that this neuron pathology is not an architectural artifact unique to hypernetworks~\cite{Hypernet} used in MARL, but persists across MLP and CNN. This accumulation is consistently accompanied by a steady decrease in Weight Spectra Entropy, confirming a systemic loss of representational expressivity and excessive optimization inertia. Detailed structural ablations and entropy analyses are provided in Appendix~\ref{sec:appendix_dehypernetwork}.

%The neural networks of MARL value factorization methods consists of mixing network and agent network. 

%引入摆烂影响效果的图，td更新间隔，任务切换频率，还有活跃神经元。
% Stagnant neurons can be one source of plasticity loss in MARL. We find that stagnant neurons can impact the learning efficiency of MARL. As shown in Figure~\ref{stagnant:analysis2}(a), the matrix bias in the one-step matrix game increases as the number of stagnant neurons grows.
% We find that with the increase in the number of volatile neurons, the stagnant ratio increases, as depicted in Figure~\ref{stagnant:analysis2} (b).
\emph{Stagnant neurons can be one source of plasticity loss in MARL}. We find that stagnant neuron correlates with the slowdown in learning efficiency. We artificially inject them by manipulating backpropagated gradients without altering forward computations (details in Appendix \ref{app:staginmarl}). As shown in Figure~\ref{stagnant:smacv2andmatrix-change}(c), the matrix bias in the one-step matrix game increases as the number of stagnant neurons grows. We find that with the increase in the number of volatile neurons, the stagnant ratio increases, as depicted in Appendix~\ref{app:sec:influOfStag} Figure~\ref{app:fig:influOfStag}.%We find that with the increase in the number of volatile neurons, the stagnant ratio increases, as depicted in Figure~\ref{stagnant:analysis2} (b).

\paragraph{Difference among other Neurons} The stagnant neuron is different from the dormant neuron~\cite{redo} and the GraMa neuron~\cite{regrama} that cause plasticity loss in RL. Figure~\ref{fig:overlapAndOtherNeurons} (a) and (b) depict the ratios of different neurons in the mixing network of QMIX for the 3s\_vs\_5z and the predator-prey environments. Their curves exhibit different trends, indicating differences among these neurons, even in this non-continual setting. Moreover, we show in Figure~\ref{exp:compare:main} that the treatments (e.g, redo) from the Dormant and GraMa neurons are ineffective for stagnant neurons.

We study whether the detected stagnant/volatile neurons persist across time. To this end, at each iteration (every time interval), we calculate the overlap coefficient between the current and previous iterations for stagnant/volatile neurons. As shown in Figure~\ref{fig:overlapAndOtherNeurons} (c), there are large overlaps between stagnant neurons across iterations, as well as the volatile neurons. This indicates that once a neuron becomes stagnant/volatile, it will remain the same type of neuron over time with high probability. As we show that these neurons can affect plasticity, \emph{a surgical intervention on these neurons is needed to alleviate the plasticity loss problem.}

\section{KNIFE: A Plasticity Injection Method}
We propose a MARL plasticity injection method dubbed KNIFE, which stands for \textbf{K}nowledge-retentive \textbf{N}euron-level Plast\textbf{I}city \textbf{F}ocusing Inj\textbf{E}ction. KNIFE is executed every $T_{k}$ steps to inject plasticity into MARL neural networks. It consists of three major operations: identification, injection, and pruning. The identification operation detects the set of neurons $\mathcal{S}^\ell_{\text{repair}}$ that require neuron surgery. The injection operation injects plasticity into the neurons in $\mathcal{S}^\ell_{\text{repair}}$. The pruning operation gradually shrinks the weights of frozen neurons and prunes frozen neurons whose weights are close to zero. A schematic plot of KNIFE is depicted in Figure~\ref{fig:method:KNIFE}.

\begin{figure*}[!tbp]% 图片放置最佳位置
	\centering
    %\begin{minipage}[t]{\textwidth}
	\includegraphics[width=0.99\textwidth]{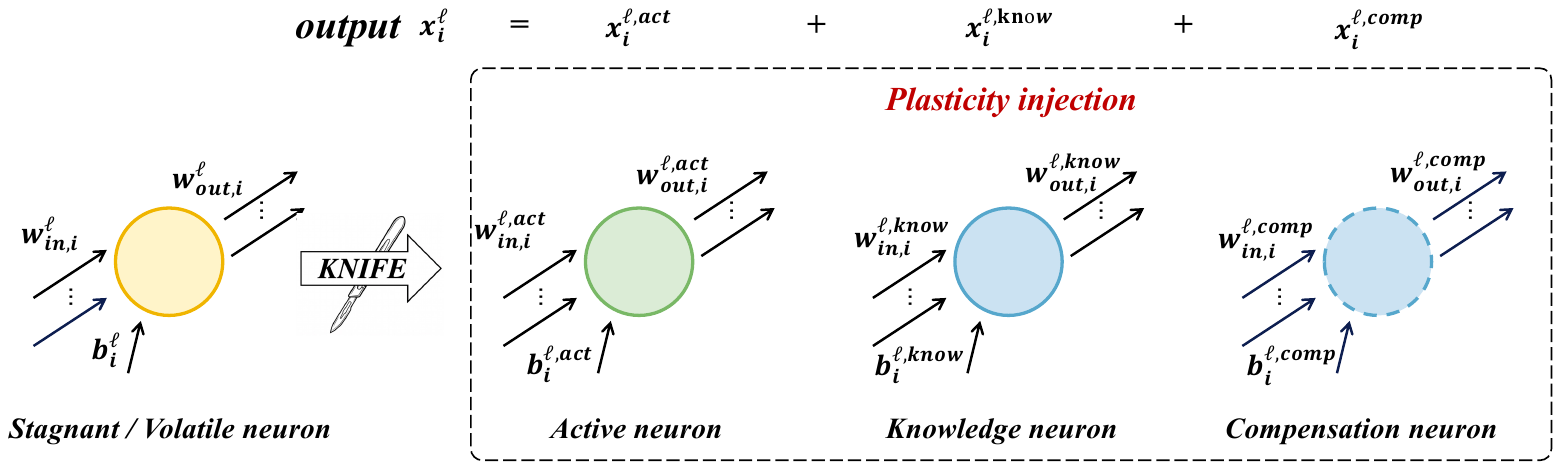}
    %	\end{minipage}
	\caption{KNIFE creates a knowledge, an active, and a compensation neuron for a stagnant neuron.}
 %\vspace{-0.3cm}
	\label{fig:method:KNIFE}
\end{figure*}

%Our method identifies neurons suffering from \textbf{Optimization Inertia} (stagnation) or \textbf{Hyper-Activity} (volatility) and revitalizes them through a mild intervention. AZTI operates in a periodic \emph{detect-and-repair} cycle with three stages: \textbf{Dual-Target Identification}, \textbf{Zero-Sum Topology Surgery}, and \textbf{Dual-Scaffold Annealing}.

% ------------------------------------------------------------
% 5.1 Periodic Dual-Target Identification
% ------------------------------------------------------------
\subsection{Identification Operation}

The identification operation identifies the set of neurons $\mathcal{S}^\ell_{\text{repair}}$ that receive neuron surgery in the plasticity injection operation, where $\mathcal{S}^\ell_{\text{repair}} = \mathcal{S}_{\text{stagnant}}^{\ell} \cup \mathcal{S}_{\text{volatile}}^{\ell} , \quad\mathcal{S}_{\text{stagnant}}^{\ell} = \{ i \mid \mathrm{RUA}_i^{\ell} < \alpha \},\quad\mathcal{S}_{\text{volatile}}^{\ell}  = \{ i \mid \mathrm{RUA}_i^{\ell} > \beta \}.$
\subsection{Plasticity Injection Operation}

For each neuron $i$ in $\mathcal{S}^\ell_{\text{repair}}$, we apply a plasticity injection surgery. This surgery aims to inject plasticity while retaining the learned cooperation knowledge. For neuron $i$, we replace it with a composite neuron consisting of an active neuron, a knowledge neuron, and a compensation neuron. They all take the same input. Their outputs are summed and used as input to the other neurons. The weights of the knowledge and the compensator neurons are frozen, whereas the weights of the active neurons are not. The knowledge neuron is used to preserve previously learned knowledge, and the compensation neuron is used to compensate for the output of the active neuron.

%Unlike aggressive reset methods (e.g., setting outgoing weights to zero), which may cause abrupt loss spikes and policy collapse in MARL, AZTI guarantees \textbf{zero perturbation at the injection step} by construction.

%For each neuron in $\mathcal{S}_{\text{repair}} \in \mathcal{S}_{\text{stagnation}} \cup \mathcal{S}_{\text{volatile}}$, we apply a topology-preserving surgery. Unlike aggressive reset methods (e.g., setting outgoing weights to zero), which may cause abrupt loss spikes and policy collapse in MARL, AZTI guarantees \textbf{zero perturbation at the injection step} by construction.

%For each target neuron in $\mathcal{S}_{\text{target}} \in \mathcal{S}_{stag} \cup \mathcal{S}_{vol}$, we apply a topology-preserving surgery. Unlike aggressive reset methods (e.g., setting outgoing weights to zero), which may cause abrupt loss spikes and policy collapse in MARL, AZTI guarantees \textbf{zero perturbation at the injection step} by construction.

Let $a_i^\ell$ denote the activation (a scalar) of neuron $i$ in layer $\ell$ as 
%\begin{equation*}
$a_i^\ell = \sigma\!\left( (\mathbf{w}_{in,i}^\ell)^\top \mathbf{x}^{\ell-1} + b_i^\ell \right)
$, where $\mathbf{x}^{\ell-1}$ is the input vector to layer $\ell$,
$\mathbf{w}_{in,i}^\ell$ and $b_i^\ell$ are the incoming weights and bias, $\sigma$ is the activation function. The output $\mathbf{x}_{i}^\ell$ of neuron $i$ to the next layer is defined 
$
\mathbf{x}_{i}^\ell = a_i^\ell \mathbf{w}_{out,i}^\ell$, where $ \mathbf{w}_{out,i}^\ell$ is the output weight of neuron $i$.
%The input of neuron $j$ from neuron $i$ is calculated as follows.

%The activation $a_i^\ell$ is sent to neuron $j$ in the next layer through outgoing weights $\mathbf{w}_{out,i,j}^\ell$.
%We define the resulting \emph{message} (i.e., the contribution to the next-layer pre-activation )
%We define the resulting \emph{message} (i.e., the input to the next-layer neuron $j$) as
%The input of neuron $j$ from neuron $i$ is calculated as follows.

As the learned knowledge is encoded in neural networks, to preserve the knowledge, it is important to ensure that $\mathbf{x}_{i}^\ell$ remains the same after the plasticity injection surgery. The surgery split the original neuron's output $\mathbf{x}_{i}^\ell$ into three parallel branches:
\begin{equation*}
\mathbf{x}_i^{\ell} =
\mathbf{x}_i^{\ell,\mathrm{know}}
+
\mathbf{x}_i^{\ell,\mathrm{active}}
+
\mathbf{x}_i^{\ell,\mathrm{comp}}
\end{equation*}

%\begin{equation*}
%\mathbf{x}_i^{\ell} =
%\underbrace{\mathbf{x}_i^{\ell,\mathrm{know}}}_{\text{frozen}}
%+
%\underbrace{\mathbf{x}_i^{\ell,\mathrm{active}}}_{\text{trainable}}
%+
%\underbrace{\mathbf{x}_i^{\ell,\mathrm{comp}}}_{\text{frozen}}
%\end{equation*}

%\begin{equation}
%\mathbf{m}_i^{\ell} =
%\underbrace{\mathbf{m}_i^{\ell,\mathrm{old}}}_{\text{frozen}}
%+
%\underbrace{\mathbf{m}_i^{\ell,\mathrm{new}}}_{\text{trainable}}
%+
%\underbrace{\mathbf{m}_i^{\ell,\mathrm{compensation}}}_{\text{frozen}}
%\end{equation}

$\mathbf{x}_i^{\ell,\mathrm{know}}$, $\mathbf{x}_i^{\ell,\mathrm{active}}$, and $\mathbf{x}_i^{\ell,\mathrm{comp}}$ are the output of the knowledge, the active, and the compensation neuron. We describe the details of these outputs and neurons in the following paragraphs.

%\paragraph{Tri-branch decomposition.}

%Here the superscripts $\mathrm{old}$, $\mathrm{new}$, and $\mathrm{cancel}$ are \emph{branch labels} (not exponents):
%$\mathrm{old}$ is the frozen copy of the original neuron, $\mathrm{new}$ is the only trainable branch, and
%$\mathrm{cancel}$ is a frozen branch constructed to cancel $\mathrm{new}$ at the injection step.

% ------------------------------------------------------------
% 5.2.1 Component Definition
% ------------------------------------------------------------

%\subsubsection{Branch specification}
%The original parameters of neuron $i$ in layer $\ell$ are denoted as$(\mathbf{w}_{in,i}^{\ell},\, b_i^{\ell},\, \mathbf{w}_{out,i}^{\ell})$. Each new neuron $r\in\{\mathrm{know},\mathrm{active},\mathrm{comp}\}$ has its own parameters $(\mathbf{w}_{in,i}^{\ell,r},\, b_i^{\ell,r},\, \mathbf{w}_{out,i}^{\ell,r})$ and output $\mathbf{x}_i^{\ell,r} = a_i^{\ell,r}\mathbf{x}_{out,i}^{\ell,r}$ 

\paragraph{Knowledge neuron (frozen).}
KNIFE copies the original stagnant neuron and freezes its parameters to rigorously preserve previously acquired cooperative capabilities:
%\begin{equation*}
$\mathbf{w}_{in,i}^{\ell,\mathrm{know}} \leftarrow \mathbf{w}_{in,i}^{\ell},\quad
b_i^{\ell,\mathrm{know}} \leftarrow b_i^{\ell},\quad
\mathbf{w}_{out,i}^{\ell,\mathrm{know}} \leftarrow \mathbf{w}_{out,i}^{\ell}.$
%\end{equation*}

\paragraph{Active neuron (state-aware initialization).}
Unlike methods (e.g., PI~\cite{plasticity2} and ReDo~\cite{redo}) that apply Kaiming initialization for neuron weights, KNIFE creates a fresh neuron branch equipped with a \textit{state-aware initialization} strategy. This design is explicitly tailored to bridge the gradient flow and prevent newly injected neurons from falling into activation dead zones (e.g., dying ReLUs). 

First, the incoming weights are sampled using Kaiming initialization:
\begin{equation*}
\mathbf{w}_{in,i}^{\ell,\mathrm{active}} \sim \mathcal{N}(0, \delta_{in}^2), \quad
\delta_{in} \leftarrow \sqrt{2/d_{\ell-1}},
\end{equation*}
where $\mathcal{N}(0,\delta_{in}^2)$ is a zero-centered Gaussian distribution with variance $\delta_{in}^2$, and $d_{\ell-1}$ is the output dimension of layer $\ell-1$. 

To increase the likelihood of positive-valued activations and ensure the neuron remains active to receive backpropagated gradients immediately, the input bias is explicitly initialized to a positive value proportional to the weight standard deviation $
b_i^{\ell,\mathrm{active}} =  0.5 \delta_{in}$.

To ensure the magnitude of the forward signal is fully compatible with the current optimization state of the existing network, the outgoing weights are initialized with a contextual, scale-matched Gaussian:
\begin{equation*}
\mathbf{w}_{out,i}^{\ell,\mathrm{active}} \sim \mathcal{N}(0,\delta_{out}^2),\quad
\delta_{out} \leftarrow \max(\mathrm{std}(w^{\ell+1}),10^{-2}),
\end{equation*}
where $w^{\ell+1}$ is the weight matrix of the subsequent layer $\ell{+}1$, and $\mathrm{std}(\cdot)$ computes the standard deviation. The $\max(\cdot)$ operation serves as a safeguard with a lower bound of $10^{-2}$, guaranteeing a viable output channel to the next layer.

% %122
% \paragraph{Knowledge neuron (frozen).}
% KNIFE copies the original neuron and freezes it:
% \begin{equation*}
% \mathbf{w}_{in,i}^{\ell,\mathrm{know}} \leftarrow \mathbf{w}_{in,i}^{\ell},\quad
% b_i^{\ell,\mathrm{know}} \leftarrow b_i^{\ell},\quad
% \mathbf{w}_{out,i}^{\ell,\mathrm{know}} \leftarrow \mathbf{w}_{out,i}^{\ell},
% \end{equation*}
% %and stop all gradients on this branch.

% \paragraph{Active neuron}
% KNIFE creates a fresh branch to restore plasticity and sample incoming weights with Kaiming initialization:

% \begin{equation*}
% \mathbf{w}_{in,i}^{\ell,\mathrm{active}} \sim \mathcal{N}(0, \delta_{in}^2), \quad
% \delta_{in} \leftarrow \sqrt{2/d_{\ell-1}},
% \end{equation*}
% where $\mathcal{N}(0,\delta_{in}^2)$ is a Gaussian distribution with mean $0$ and variance $\delta_{in}^2$, $d_{\ell-1}$ is the output dimension of layer $\ell-1$. The bias is initialized as $b_i^{\ell,\mathrm{active}} =  0.5 \delta_{in}$.
% %\begin{equation}
% %\end{equation}

% The outgoing weights are initialized with a scale-matched Gaussian
% \begin{equation*}
% \mathbf{w}_{out,i}^{\ell,\mathrm{active}} \sim \mathcal{N}(0,\delta_{out}^2),\quad
% \delta_{out} \leftarrow \max(\mathrm{std}(w^{\ell+1}),10^{-2}),
% \end{equation*}

% where $w^{\ell+1}$ is the input weight matrix of layer $\ell{+}1$,  $\mathrm{std}(\cdot)$ is the standard deviation, $\max(a,b)$ choose the larger one between $a$ and $b$.
% %denotes the standard deviation over all entries of the matrix.

\paragraph{Compensation neuron (frozen).}
We construct a compensation branch to compensate the active branch at the injection step by sharing the same incoming parameters and
negating the outgoing weights $
\mathbf{w}_{in,i}^{\ell,\mathrm{comp}} \leftarrow \mathbf{w}_{in,i}^{\ell,\mathrm{active}},\;b_i^{\ell,\mathrm{comp}} \leftarrow b_i^{\ell,\mathrm{active}},\;
\mathbf{w}_{out,i}^{\ell,\mathrm{comp}} \leftarrow -\,\mathbf{w}_{out,i}^{\ell,\mathrm{active}}$

%and we stop all gradients on this branch.

%\paragraph{Zero perturbation at injection.}
%At the injection moment, we initialize the new and cancel branches to have identical pre-activations, thus $a_i^{\ell,\mathrm{cancel}} \equiv a_i^{\ell,\mathrm{new}}$. Meanwhile, we set $\mathbf{w}_{out,i}^{\ell,\mathrm{cancel}} \equiv -\mathbf{w}_{out,i}^{\ell,\mathrm{new}}$.

\paragraph{Knowledge retentive of the Neuron Surgery}

After the neuron surgery, we update only the \emph{active} neurons by gradient descent, and freeze the two other neurons by masking their gradients to zero. The output of the new big neuron is calculated as 
%\begin{equation*}
%\begin{aligned}
%\mathbf{x}_i^{\ell}&=
$\mathbf{x}_i^{\ell,\mathrm{know}}
+
\mathbf{x}_i^{\ell,\mathrm{active}}
+
\mathbf{x}_i^{\ell,\mathrm{comp}} 
=\mathbf{x}_i^{\ell,\mathrm{know}}
=\mathbf{x}_i^{\ell}.$ We show that the injection phase of KNIFE satisfies the KI++ principle (see Appendix~\ref{app:ki++}), as formally stated in Theorem~\ref{thm:main:pis_ki}. The KI++ principle generalizes the original KI principle by removing one of its restrictive conditions. Complete proofs are provided in the appendix.
\begin{theorem}
\label{thm:main:pis_ki}
For any MARL value factorization method, after the plasticity injection operation of KNIFE, the state-action value function \( Q_{tot}^{\theta,\phi}(\ve \tau, \ve a) \) satisfies the KI++ principle.
\end{theorem}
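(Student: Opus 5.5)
The plan is to show that the injection operation leaves $Q_{tot}$ unchanged as a function, not just order-preserving, and then read off KI++. We do not yet have the formal Definition~\ref{def:ki++}, but the text says it is KI with the restriction $\exists!~k:\ a_k\neq a_k'$ removed. So I take KI++ to require that, for the maps $g,h$ induced by the surgery on the mixing parameters $\theta$ and agent parameters $\phi$,
\[
Q_{\mathrm{tot}}^{\theta,\phi}(\tau,\mathbf{a})\ge Q_{\mathrm{tot}}^{\theta,\phi}(\tau,\mathbf{a}')\;\Rightarrow\;Q_{\mathrm{tot}}^{\hat\theta,\tilde\phi}(\tau,\mathbf{a})\ge Q_{\mathrm{tot}}^{\hat\theta,\tilde\phi}(\tau,\mathbf{a}')
\]
holds for all pairs of joint actions $\mathbf{a},\mathbf{a}'$. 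The sufficient condition I will prove is $Q_{\mathrm{tot}}^{\hat\theta,\tilde\phi}\equiv Q_{\mathrm{tot}}^{\theta,\phi}$ pointwise on all $(\tau,\mathbf{a})$. Once that holds, the implication is trivial for every pair, with no restriction on how many agents change their action.

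\textbf{Step 1 (single neuron).} Fix a repaired neuron $i$ in layer $\ell$ and any input $\mathbf{x}^{\ell-1}$. The compensation neuron copies the active neuron's incoming weights and bias, so
\[
a_i^{\ell,\mathrm{comp}}=\sigma\!\left((\mathbf{w}_{in,i}^{\ell,\mathrm{active}})^\top\mathbf{x}^{\ell-1}+b_i^{\ell,\mathrm{active}}\right)=a_i^{\ell,\mathrm{active}}.
\]
Its outgoing weight is $-\mathbf{w}_{out,i}^{\ell,\mathrm{active}}$, hence $\mathbf{x}_i^{\ell,\mathrm{active}}+\mathbf{x}_i^{\ell,\mathrm{comp}}=0$. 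The knowledge neuron is an exact copy of the original neuron, so $\mathbf{x}_i^{\ell,\mathrm{know}}=\mathbf{x}_i^{\ell}$. The composite output therefore equals the original contribution $\mathbf{x}_i^\ell$ for every input. This does not depend on the random draw of the active weights, on the positive bias, or on $\sigma$; we only need $\sigma$ to be a deterministic function.

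\textbf{Step 2 (whole network).} Induct over layers in forward order. Suppose the input to layer $\ell$ is unchanged. Non-repaired neurons are untouched, and each repaired neuron's summed contribution is unchanged by Step 1, so the pre-activations of layer $\ell+1$ are identical. This applies to layers of the agent networks, giving unchanged $Q_i^{\tilde\phi}=Q_i^{\phi}$. It also applies to the mixing network, including hypernetworks that generate mixing weights from the state: any hypernetwork output is a function of unchanged layer outputs, so the generated weights are identical. Hence $f_{\hat\theta}(Q_1^{\tilde\phi},\dots,Q_N^{\tilde\phi})=f_\theta(Q_1^{\phi},\dots,Q_N^{\phi})$ for every $(\tau,\mathbf{a})$. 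Recurrent layers (GRU) are handled the same way by induction over time steps, since the hidden state is also preserved.

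\textbf{Step 3 and main obstacle.} From pointwise equality, both sides of the implication compare the same numbers, so the inequality transfers for arbitrary $\mathbf{a},\mathbf{a}'$. This establishes KI++ for any value factorization method, because nothing about the form of $f_\theta$ (monotonic, QPLEX-style, and so on) was used. The main obstacle is formal rather than mathematical. KI is stated with maps $g,h$ on weights that preserve the parameter space, whereas the surgery changes the architecture by adding two neurons. I would address this by viewing the original network as the composite network with active and compensation neurons whose outgoing weights are zero, so that $g$ and $h$ act on a fixed enlarged parameter space. I would also make explicit that the guarantee holds at the injection step only, before any subsequent gradient updates to the active branch.
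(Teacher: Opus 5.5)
Your proposal is correct and follows the paper's own argument: the knowledge/active/compensation split leaves each composite output equal to the original, so $f_{\hat\theta}(\tau,\boldsymbol{q})=f_\theta(\tau,\boldsymbol{q})$ pointwise, and hence both sides of any comparison between $\mathbf{a}$ and $\mathbf{a}'$ are unchanged and the KI++ implication holds regardless of how many agents' actions differ. Your layer-by-layer induction, your nontrivial $h$ on the agent networks, and your fixed enlarged parameter space add care beyond the paper, which only modifies the mixing parameters and keeps $\phi$ fixed, but the route is the same.
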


\subsection{Pruning Operation}

%The injection is function-preserving at the injection moment (Sec.~5.2): the \emph{new} and \emph{cancel} branches cancel each other, so the network output is unchanged. To enable a smooth handover, AZTI applies a \textbf{dual-scaffold annealing} schedule.

 To recycle the neuron resources, thus reducing memory consumption. The weights of frozen neurons are shrunk gradually by multiplying the weights with $\gamma\;\; 0<\gamma<1$. Once the weights approach zero, the frozen neuron is pruned.

\section{Empirical Evaluations}

\begin{figure*}[!t]
	
	\centering
 
	\begin{minipage}[t]{\linewidth} % 第一行画图结果
		\centering
        \includegraphics[width=0.32\columnwidth]{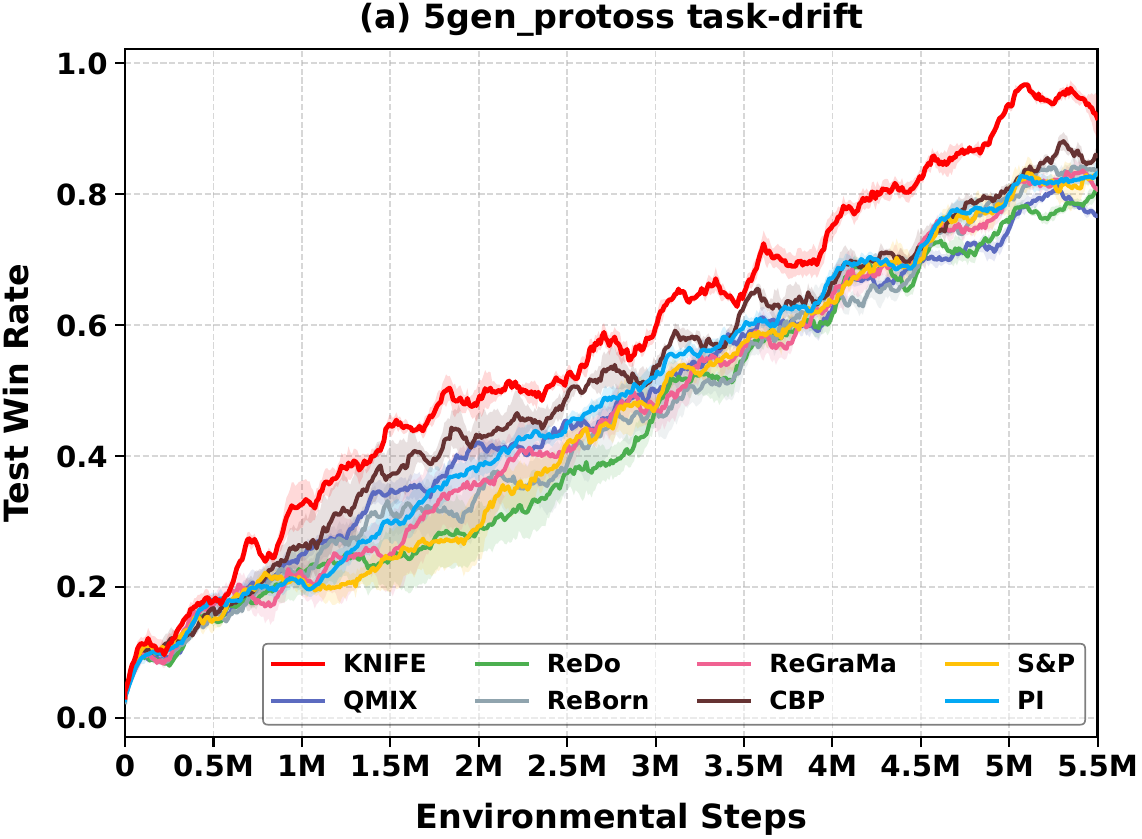} 	
            \includegraphics[width=0.32\columnwidth]{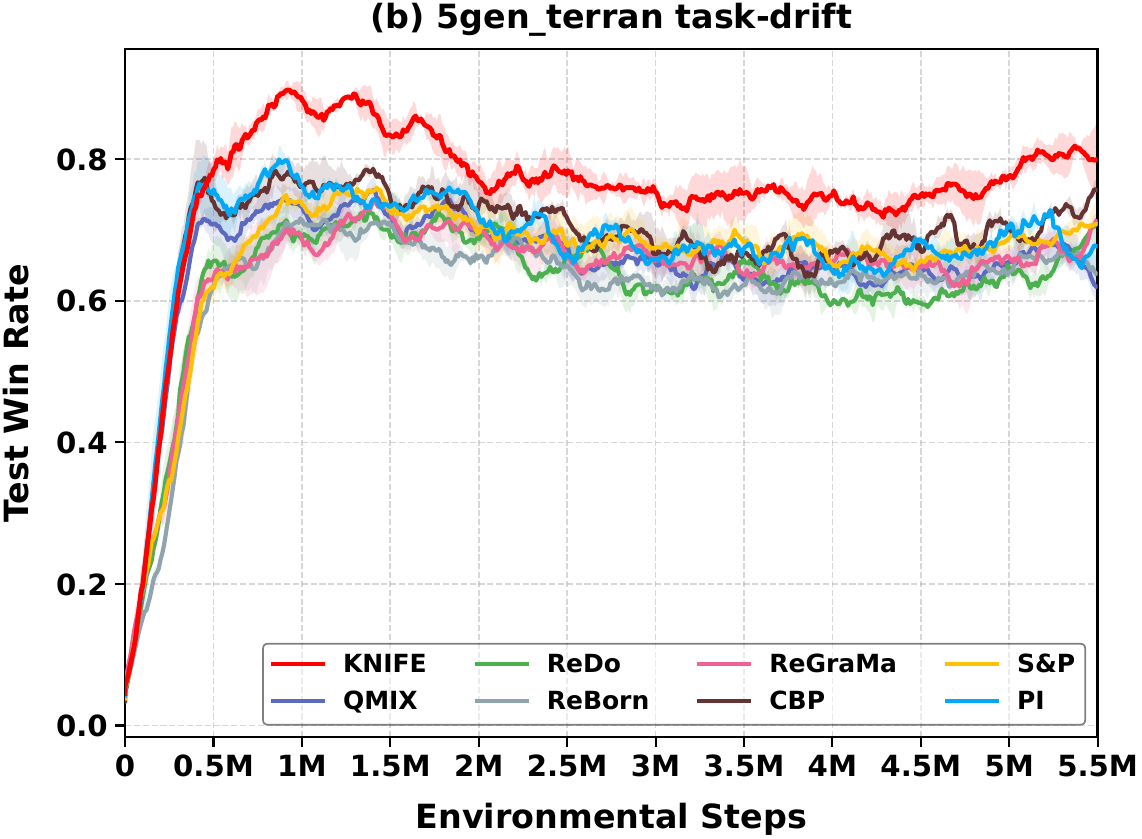} 		
            \includegraphics[width=0.32\columnwidth]{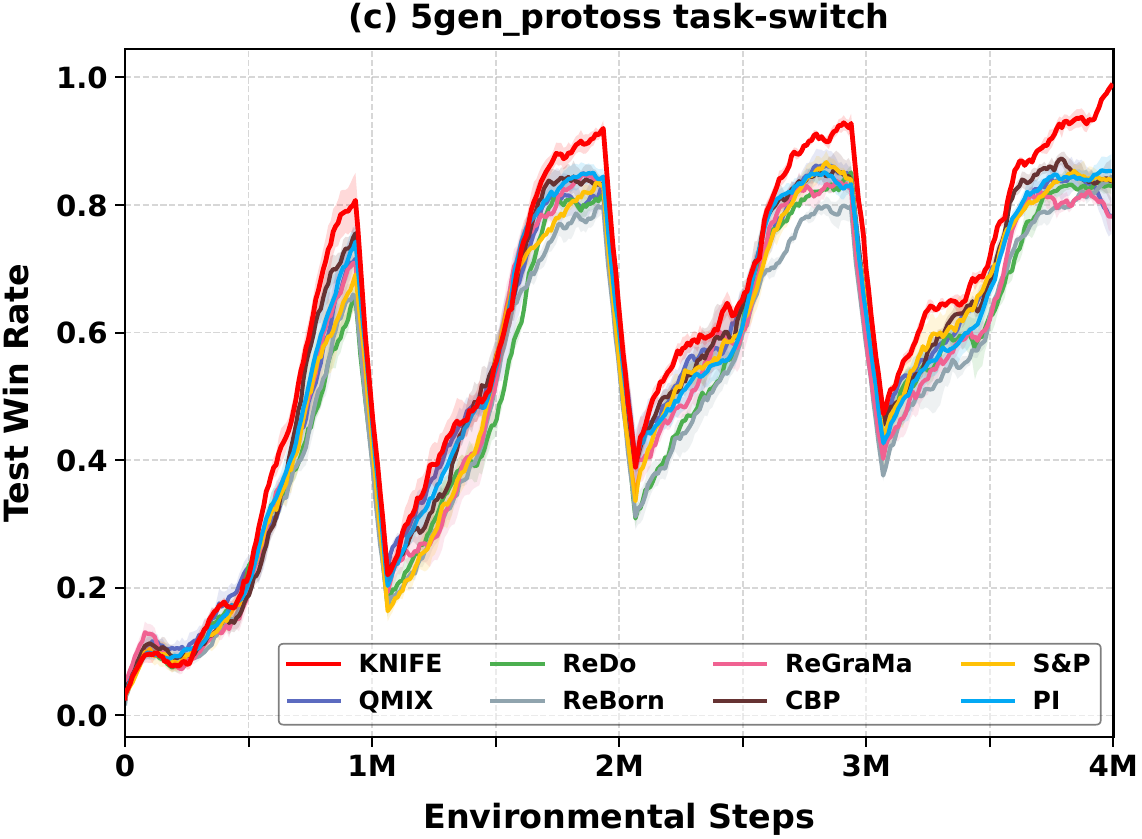} 	
            
	\end{minipage}
    \begin{minipage}[t]{\linewidth} % 第一行画图结果
		\centering
           
            \includegraphics[width=0.32\columnwidth]{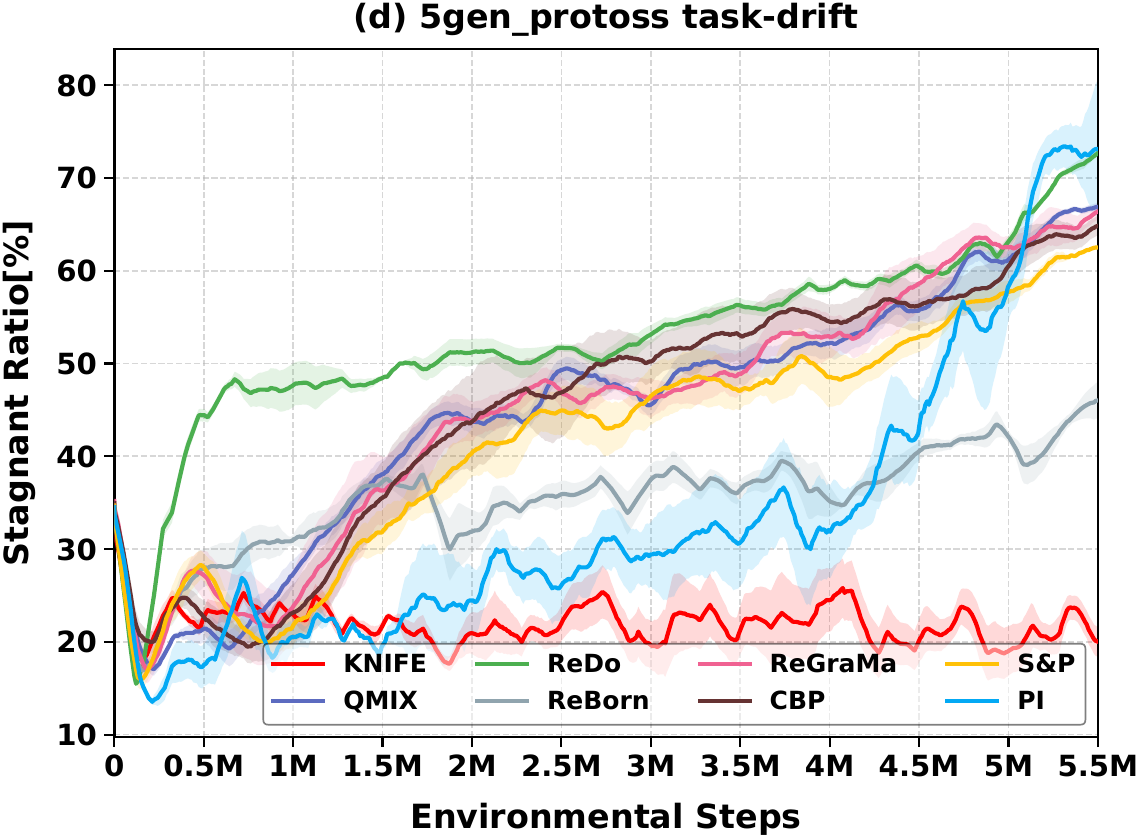} 		
		\includegraphics[width=0.32\columnwidth]{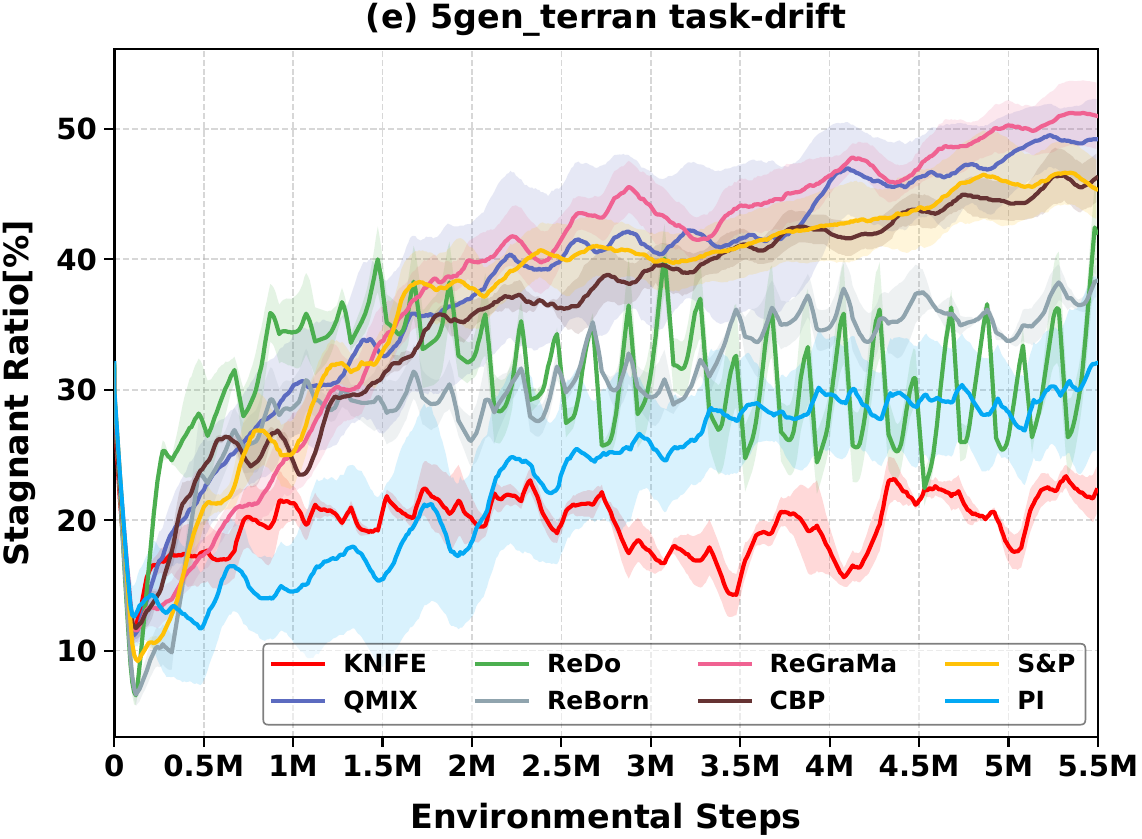} 
         \includegraphics[width=0.32\columnwidth]{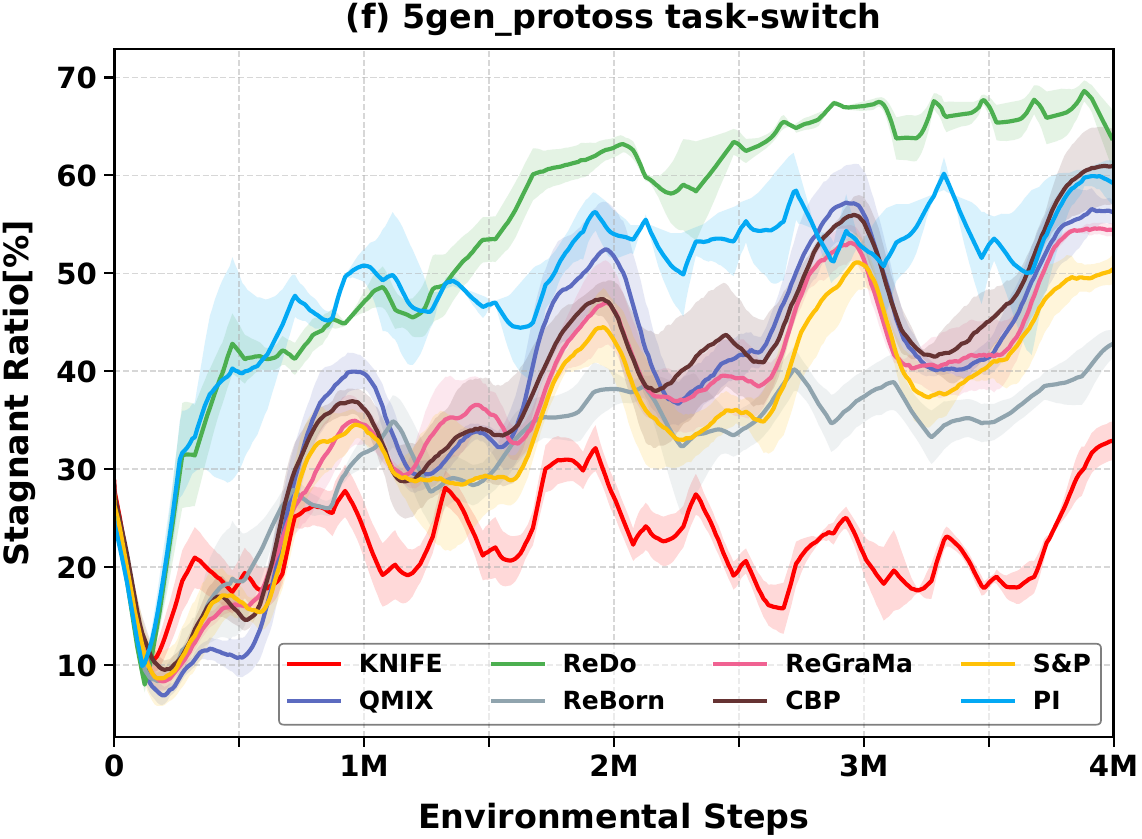} 	
	\end{minipage}
        \begin{minipage}[b]{\linewidth} % 第一行画图结果
		\centering
            \includegraphics[width=0.32\columnwidth]{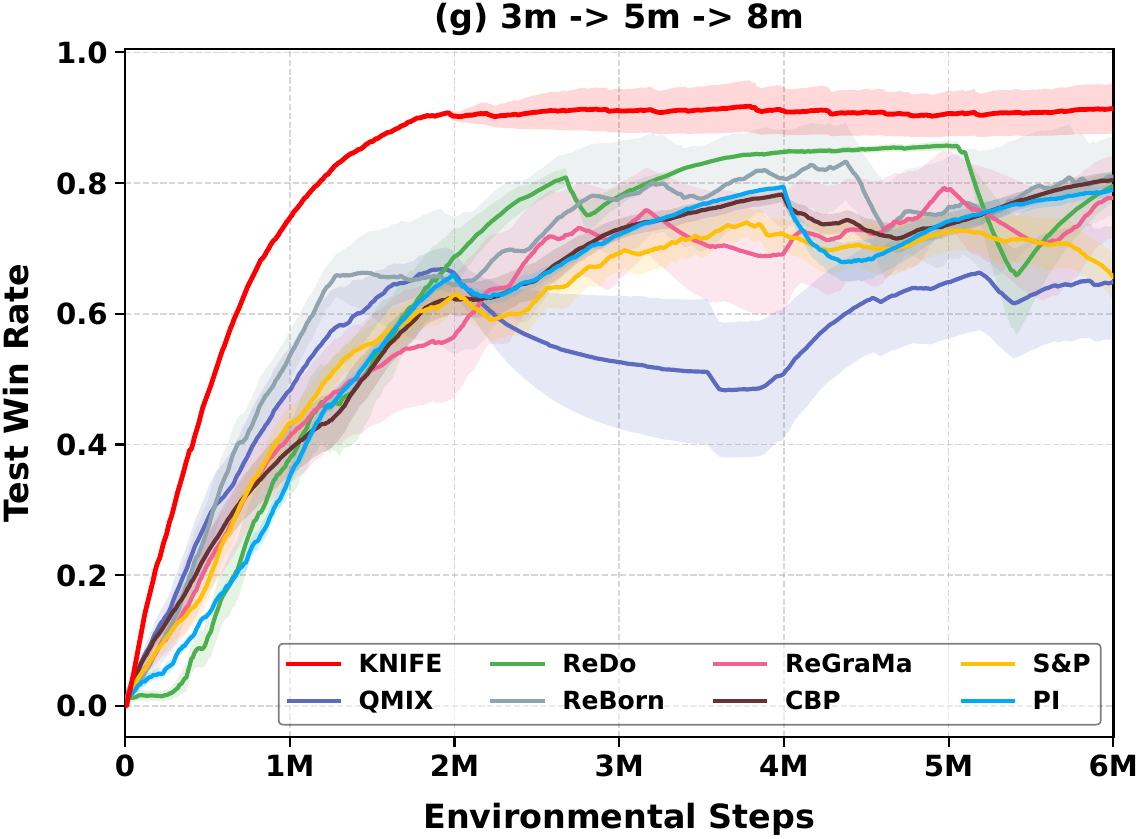} 		
		\includegraphics[width=0.32\columnwidth]{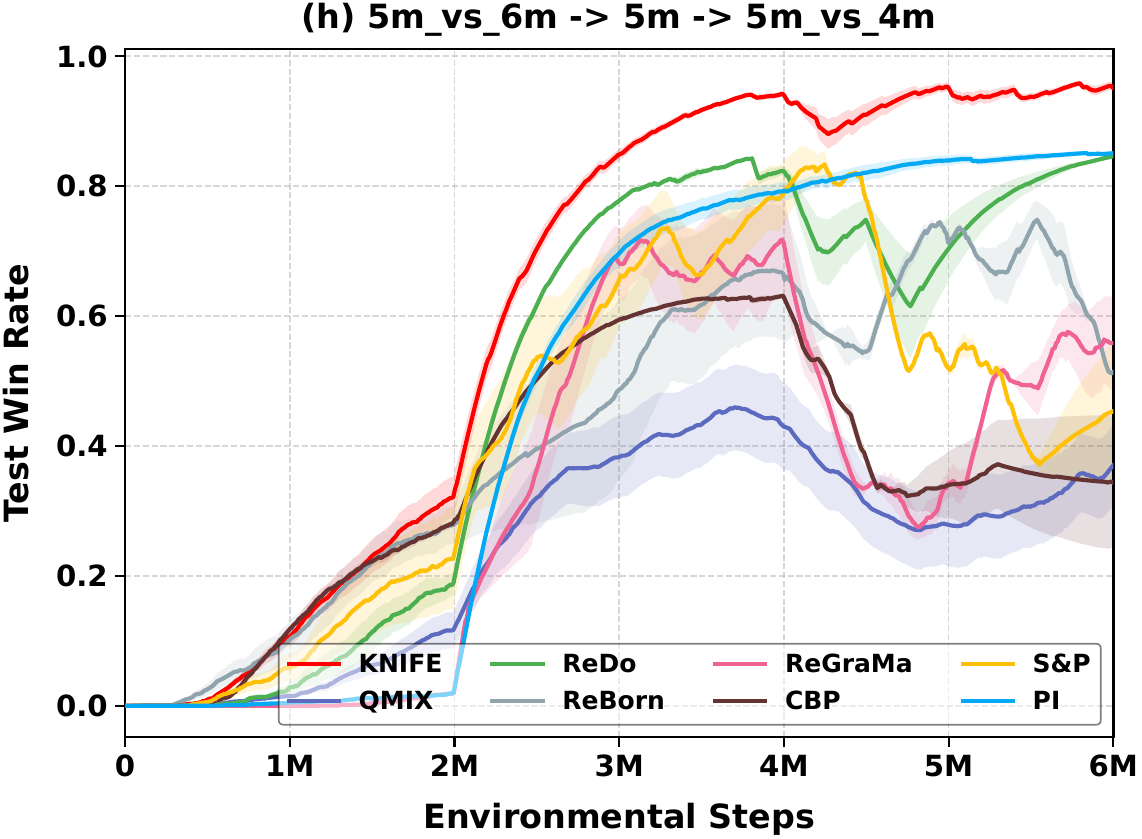} 	
            \includegraphics[width=0.32\columnwidth]{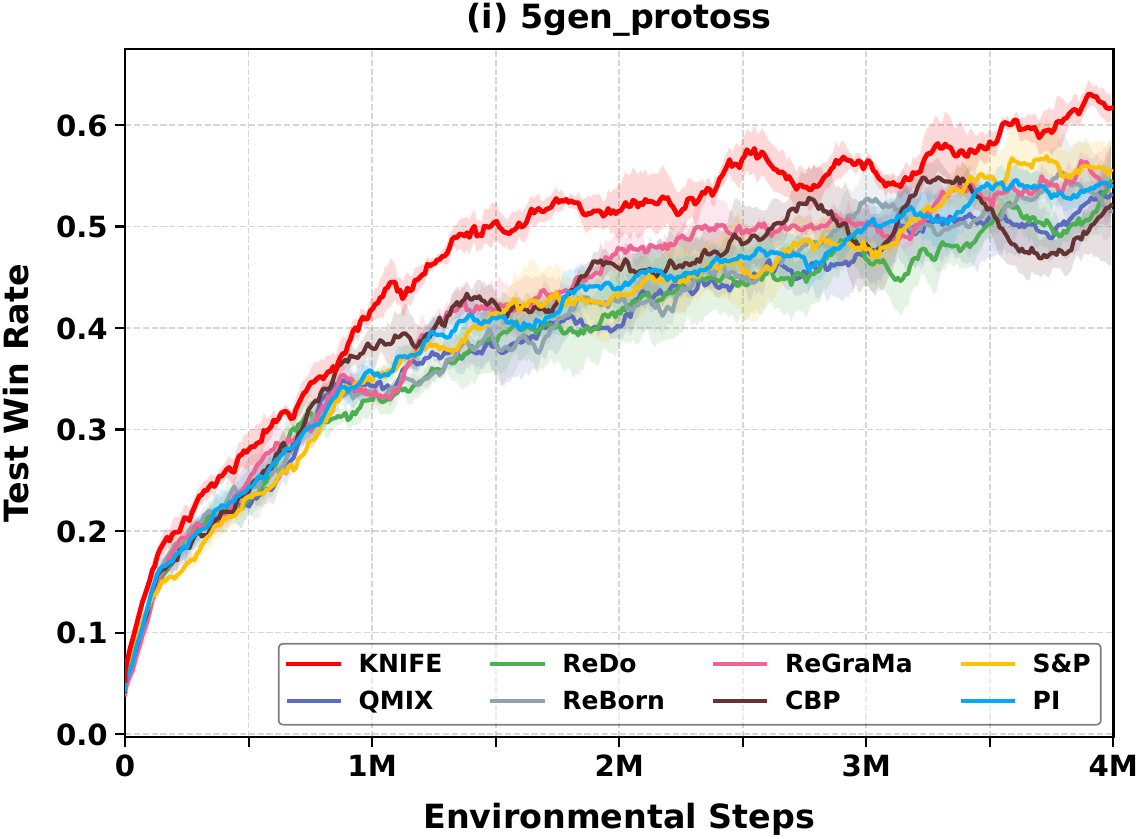} 		
	\end{minipage}
%        \caption{KNIFE performs better than other plasticity injection methods on SMACv2 (task-switch)}
        \caption{Top Row: The win rate for task-drift setting (SMACv2 5gen\_protoss, a), task-drift setting (SMACv2 5gen\_terran, b), and Task-switch (SMACv2 5gen\_protoss, c). Middle Row: The stagnant ratio for the environments in the top row, respectively. Bottom Row: The win rate for scaling-up setting (SMAC 3m $\rightarrow$ 5m $\rightarrow$ 8m, g), scaling-down setting (SMAC 5m\_vs\_6m $\rightarrow$ 5m $\rightarrow$ 5m\_vs\_4m, h) and non-continual setting (SMACv2, 5gen\_protoss, i)}
	\label{exp:compare:main}

%    \begin{minipage}[b]{\linewidth} % 第一行画图结果
%		\centering
%\includegraphics[width=0.32\columnwidth]{ICML26/experiment/smacv2/v5/[exp]stagnant_ratio_5gen_protoss_tasks_v5_500k.pdf} 	           %\includegraphics[width=0.32\columnwidth]{ICML26/experiment/smacv2/v5/[exp]stagnant_ratio_5gen_terran_tasks_v5_500k.pdf} 	%\includegraphics[width=0.32\columnwidth]{ICML26/experiment/smacv2/v5/5gen_zerg_tasks_v5_500k_Ratio.pdf} 	
%	\end{minipage}

%###紫色变淡一点，或者换成PI，迁移放到下面3张
  %  \begin{minipage}[b]{\linewidth} % 第一行画图结果
%		\centering
           
            % \includegraphics[width=0.32\columnwidth]{ICML26/experiment/smacv2/v5/[exp]win_rate_5gen_terran_tasks_v5_500k.pdf} 		
            % \includegraphics[width=0.32\columnwidth]{ICML26/experiment/smacv2/5gen_protoss_test_battle_won_mean.pdf} 		
		%\includegraphics[width=0.32\columnwidth]{ICML26/experiment/smacv2/v5/5gen_zerg_tasks_v5_500k_test_battle_won_mean.pdf} 	
%	\end{minipage}
      %  \caption{KNIFE performs better than other plasticity injection methods on (a) 5gen\_protoss of SMACv2 (task-drift), (b) 5gen\_terran of SMACv2 (task-drift), (c) 5gen\_protoss of original SMACv2.}
\end{figure*}

%\begin{figure*}[htbp]	
%	\centering 
%	\begin{minipage}[b]{\linewidth} % 第一行画图结果
%		\centering
%            \includegraphics[width=0.32\columnwidth]{ICML26/experiment/smacv2/v5/[exp]win_rate_5gen_protoss_tasks_v5_500k.pdf} 	
%            \includegraphics[width=0.32\columnwidth]{ICML26/experiment/smacv2/v5/[exp]win_rate_5gen_terran_tasks_v5_500k.pdf} 		
		%\includegraphics[width=0.32\columnwidth]{ICML26/experiment/smacv2/v5/5gen_zerg_tasks_v5_500k_test_battle_won_mean.pdf} 	
%	\end{minipage}
%    \begin{minipage}[b]{\linewidth} % 第一行画图结果
%		\centering
%\includegraphics[width=0.32\columnwidth]{ICML26/experiment/smacv2/v5/[exp]stagnant_ratio_5gen_protoss_tasks_v5_500k.pdf} 	           %\includegraphics[width=0.32\columnwidth]{ICML26/experiment/smacv2/v5/[exp]stagnant_ratio_5gen_terran_tasks_v5_500k.pdf} 	%\includegraphics[width=0.32\columnwidth]{ICML26/experiment/smacv2/v5/5gen_zerg_tasks_v5_500k_Ratio.pdf} 	
%	\end{minipage}
%        \caption{KNIFE performs better than other plasticity injection methods on SMACv2 ()}
%	\label{exp:compare:smacv2v5}
%\end{figure*}

% We show that KNIFE performs better than the other neuron plasticity methods for the one-step matrix game, MPE, SMAC, and SMACv2 benchmarks. KNIFE can improve the performance of MARL algorithms by reducing the percentage of stagnant neurons in networks. Please refer to Appendix~\ref{appendix:exp} for a detailed setup and more experimental results.

In this section, we evaluate KNIFE on representative MARL algorithms. We show that KNIFE outperforms other neuron plasticity methods on the one-step matrix game, predator-prey, SMAC, and SMACv2 benchmarks under the continue-learning setting, thanks to its ability to reduce the stagnant neuron ratio. KNIFE works in multiple MARL methods. We demonstrate that the stagnant neuron can be a better indicator for plasticity loss for MARL. The experimental results justify the design choice that satisfies the KI++ principle. Please refer to the Appendix for details.

\subsection{Environment Setup } %(SMACv2 with Controlled Non-stationarity)
\paragraph{Benchmarks} We modify existing MARL benchmarks (SMAC, SMACv2, Predator Prey) for four continual learning settings: task-drift, task-switch, scaling up, and scaling down. Moreover, we also evaluate non-continual learning settings. We describe the settings as follows.

(1) \textbf{The task-drift setting} comprises 11 SMACv2 tasks. Each team has 5 agents consisting of two different types whose probability changes every task. (2) \textbf{The task-switch setting} switches 2 SMACv2 tasks periodically.  For example, in the 5gen\_protoss scenario, the tasks switch between 5 Stalker vs 5 Stalker and 5 Zealot vs 5 Zealot. Predator Prey is evaluated under similar settings. (3) \textbf{The scaling up setting} comprises 3 SMAC tasks where the number of agents and the opponents increase from 3 to 5 and 8. (4) \textbf{The scaling down setting} evaluates 3 SMAC tasks where 5 agents fight against 6, 5, and 4 enemies, respectively. (5) \textbf{Non-continue setting} evaluate the original SMAC and SMACv2.

\paragraph{Baselines.} We compare KNIFE with 6 plasticity injection methods including ReDo~\cite{redo}, ReBorn~\cite{reborn}, ReGraMa~\cite{regrama}, S\&P~\cite{s&p}, CBP~\cite{nature24plasticity}, ReSet~\cite{ReSet2}, and Plasticity Injection (PI)~\cite{plasticity2} for different MARL. %ReDo re-initializes the parameters of dormant neurons, whereas ReBorn re-initializes the parameters of dormant and overactive neurons in MARL. ReGraMA re-initializes the parameters of neurons whose gradients are low. S\&P performs shrinkage and perturbation on neurons. CBP selectively re-initializes low-utility neurons. 
%ReSet re-initialize the parameters of the last layer of a neural network.
%To ensure a fair comparison, all methods use identical network architectures, optimizer settings, detailed implementations, and parameter configurations are available in Appendix ~\ref{appendix:exp:setup}.

%To ensure a fair comparison, all methods use identical network architectures, optimizer settings, and total training steps; they differ only in the neuron intervention mechanism. ~\ref{appendix:exp:setup}.

%We compare GradPS with various parameter sharing methods including (1) Full Parameter Sharing (FuPS),  (2) Full Parameter Sharing with index (FuPS+id), (3) Selective Parameter Sharing (SePS)~\cite{SePS}, (4) Structured Network Pruning with parameter Sharing (S\&P)~\cite{S\&P}, (5) Kaleidoscope~\cite{kalei}, (6) No Parameter Sharing (NoPS). We evaluate the performance of GradPS and all these methods with the QMIX~\cite{QMIX} value factorization function. Detailed implementations and parameter configurations are available in Appendix ~\ref{appendix:exp:setup}.

\subsection{KNIFE Performs Better than other Plasticity Injection Methods for MARL Tasks}\label{exp:compare}

The performance of QMIX with different plasticity injection methods for the task-drift setting and the task-switch setting is depicted in the top row of Figure~\ref{exp:compare:main}. The stagnant neuron ratios are plotted in the middle row of Figure~\ref{exp:compare:main}. KNIFE achieves the best performance in terms of win rate by reducing the number of stagnant neurons, which hinders learning ability. ReDo cannot deal with the stagnant neurons well, nor can ReGraMa. Although CBP is developed for the continual RL setting, it does not work for the MARL setting. ReBorn is developed for the dormant neurons in MARL. Although it satisfies the KI principle, it underperforms compared to KNIFE. 

%Regarding the stagnant neuron ratio, only ReSet outperforms KNIFE on this metric for a single environment, albeit at the cost of compromised overall performance.
%We explore the performance of KNIFE by applying different plasticity injection methods to QMIX across various experimental scenarios.

%In Figure~\ref{exp:compare:smacv2v6}

The win rates for the scaling-up and the scaling down settings are plotted in bottom row of Figure~\ref{exp:compare:main} (g and h). The results confirm the superiority of KNIFE as a plasticity injection method. Moreover, we also test the performance of KNIFE in the standard SMACv2 (5gen\_protoss). As it is depicted in bottom row of Figure~\ref{exp:compare:main} (i), KNIFE works better than others in original SMACv2, where stagnant neurons are present. For the continual predator-prey and the continual matrix game, their results are plotted in Appendix Figure ~\ref{app:bench1} and Figure~\ref{app:fig:matrix_periodic}. As shown in the figures, KNIFE performs better than others in the predator-prey and the matrix game.

%Although ReSet is very effective at reducing stagnant neurons, it comes at the cost of poor win rate due to loss of previous knowledge. 

%Although ReSet can reduce the stagnant ratio in this experiment, such large-scale resetting improves plasticity at the cost of forgetting previously learned knowledge, leading to slow convergence and poor overall performance.

% The experimental results for various PS methods and GradPS are shown in \cref{exp:compare:smac}. For the 2c\_vs\_64zg, the 27m\_vs\_30m, the 3s5z\_vs\_3s6z environments, as is shown in \cref{exp:compare:smac} (a-c), GradPS achieves the best performance in terms of win rate thanks to its ability to reduce the futile neurons better than other PS methods, as is shown in \cref{exp:compare:smac} (d-f). For the 5m\_vs\_6m, MMM2, 1c3s8z\_vs\_1c3s9z environments, GradPS performs the best among all the methods as well. \cref{exp:compare:stag_hunt} depicts the results for the Predator-Prey environments. As shown in the graph, GradPS performs better than other PS methods for the Predator-Prey environments. 

% In addition, we implemented GradPS based on PyMARL3 and extended the environmental steps, as shown in the Figure~\ref{fig:compare:extend}. More results are presented in Appendix~\ref{exp:compare:appendix:pymarl} and~\ref{exp:compare:appendix:pymarl3}.

\subsection{Neuron Analysis and Ablation Study}\label{exp:ablation}

%\paragraph{Injection scope.}
%We next vary the intervention scope: \emph{agent-only}, \emph{mixer-only}, and \emph{all} (agent + mixer). The results show that intervening on the mixing network is the dominant contributor to the gains: \emph{mixer-only} achieves most of the performance improvement and substantially reduces the stagnant ratio. \emph{agent-only} yields weaker and less consistent gains, which aligns with our earlier observation that stagnation is markedly concentrated in the mixer (Figure~\ref{stagnant:analysis1}). Applying KNIFE to \emph{all} networks provides at most marginal additional benefit over \emph{mixer-only}, suggesting that preserving plasticity in the mixer is the key bottleneck in these value-decomposition learners.

%To study whether the stagnant neuron is a useful metric for plasticity loss. 
We replace the stagnant neuron used in KNIFE with the dormant neurons, and replace the dormant neuron used in ReDo and ReBorn with the stagnant neurons. And then compare them with their original version. The results depicted in Figure~\ref{exp:compare:ablation} (a) show the performance advantage of using the stagnant neurons over the dormant neurons. The non-zero advantage demonstrates that stagnant neurons are a better indicator of plasticity loss than the dormant neurons. 

We study the impact of injecting plasticity into different sets of neurons: \emph{stagnant-only}, \emph{volatile-only}, and \emph{both} neurons. The win rate for the task-drift setting (SMACv2 5gen\_protoss) is plotted in Figure~\ref{exp:compare:ablation} (b). Repairing \emph{both} yields the best overall performance and the lowest stagnant ratio, while \emph{stagnant-only} performs the second. Activating the stagnant neurons is the main source of improvement, and repairing volatile neurons helps plasticity injection.
%helps re-balance updates under target changes, further mitigating stagnation.

We study the impact of the knowledge-retentive injection operation of KNIFE through (a) w/o knowledge neuron, (b) w/o compensation neuron, and (c) w/o both. Removing either the knowledge or the compensation neuron can lead to violation of the knowledge-invariant++ (KI++) principle (see ~\cref{def:ki}). Figure~\ref{exp:compare:ablation} (c) shows that violating KI++ principle could lead to a performance drop.

\begin{figure*}[!t]
	\begin{minipage}[!t]{\linewidth} % 第一行画图结果
		\centering
              \includegraphics[width=0.32\columnwidth]{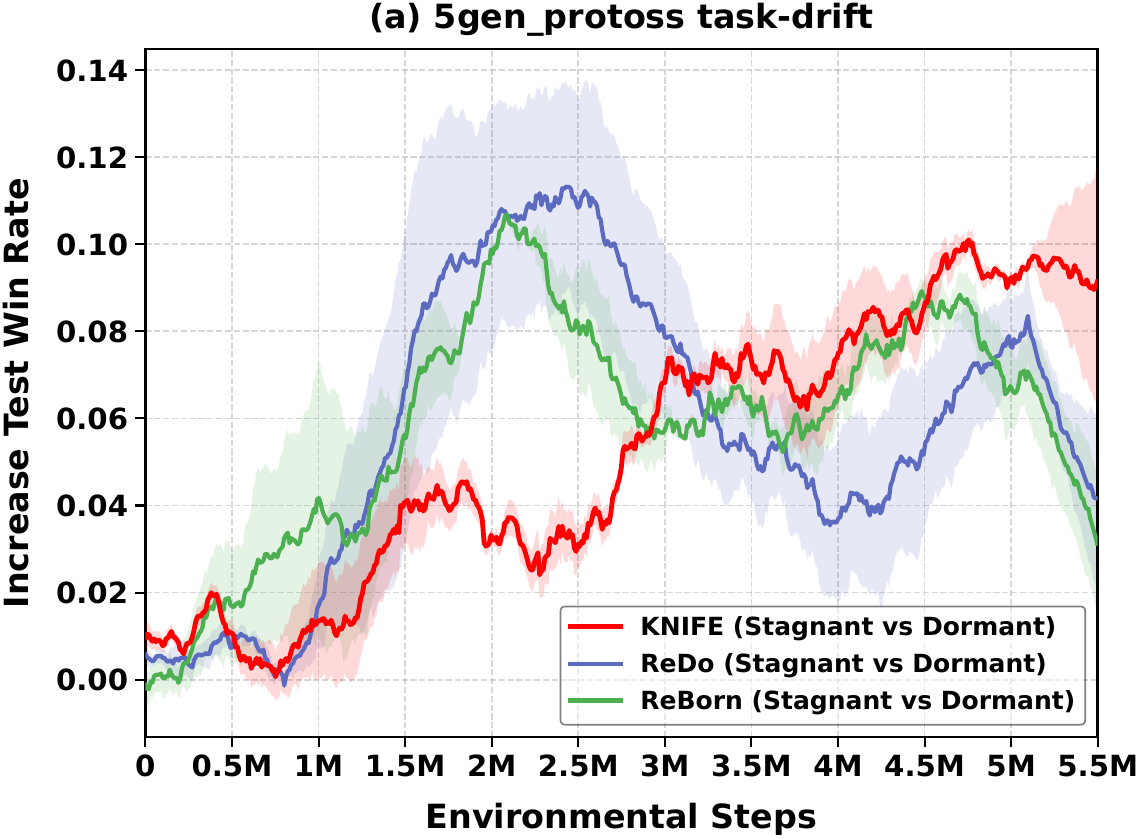} 		
		\includegraphics[width=0.32\columnwidth]{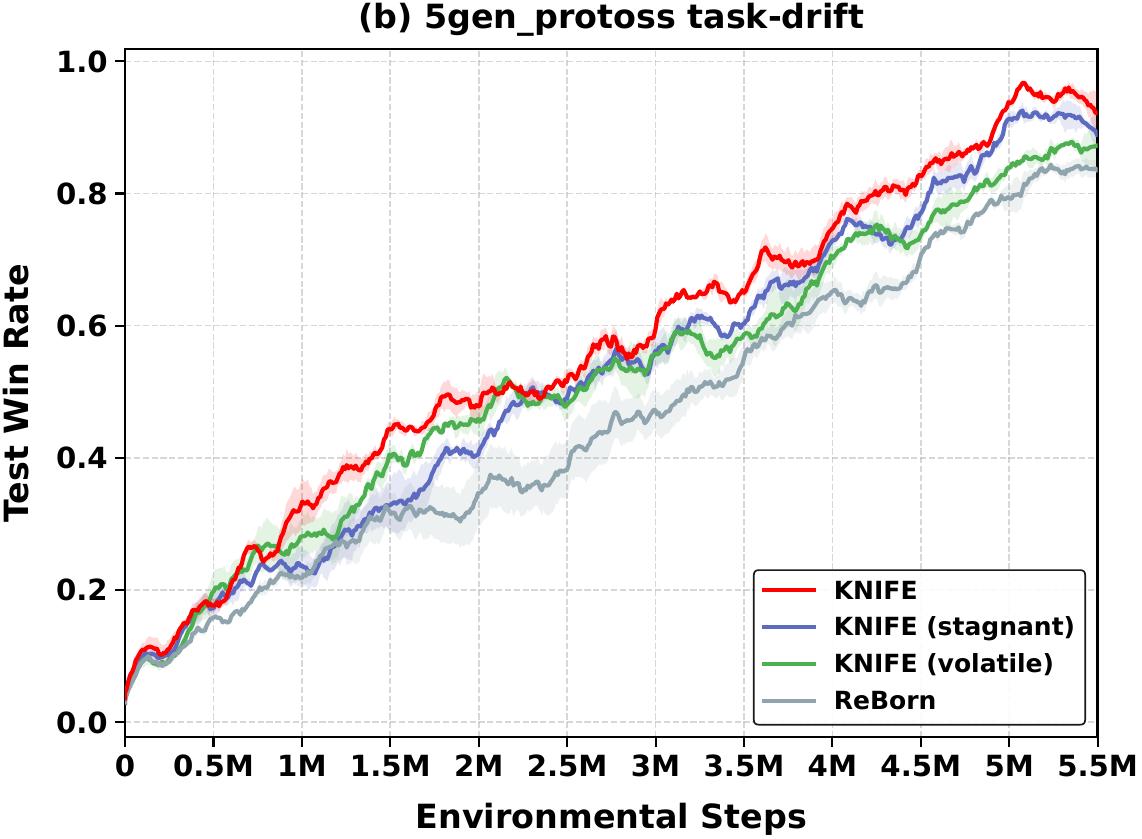} 
         \includegraphics[width=0.32\columnwidth]{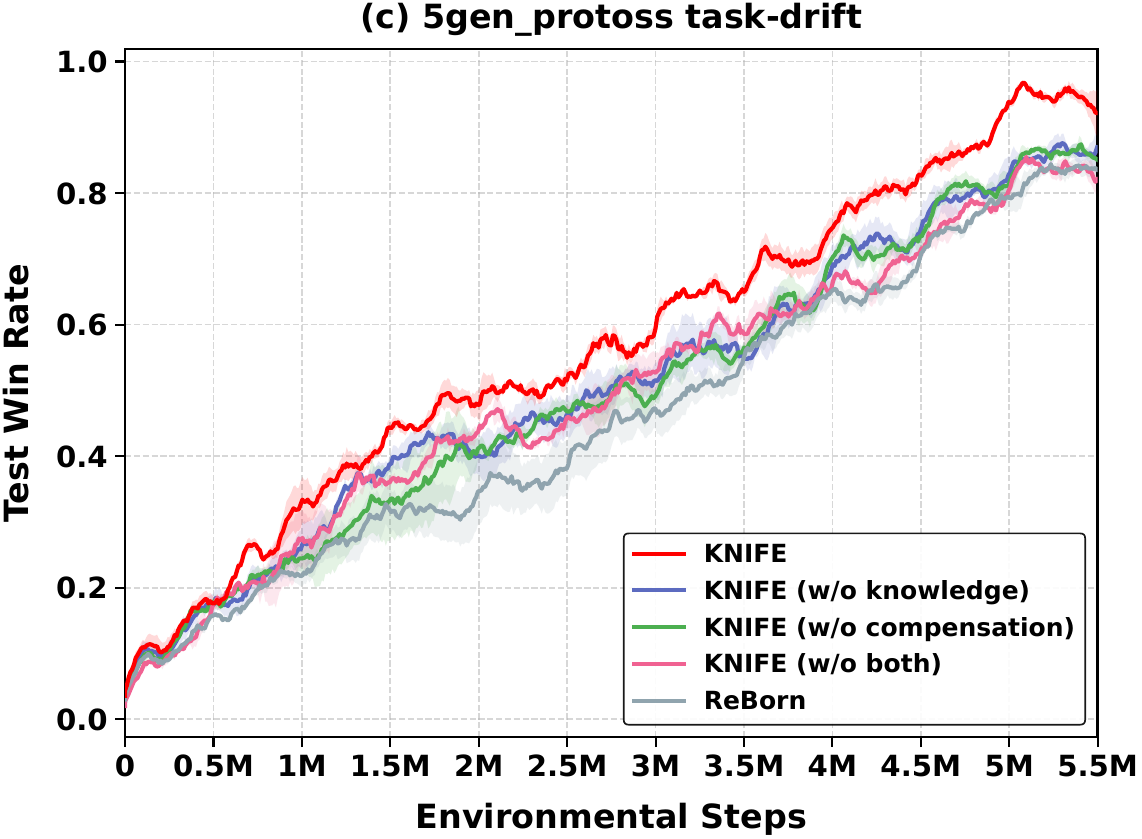} 		
	\end{minipage}
            \caption{Neuron Analysis and Ablation Study. (a) Neuron-level methods work better based on stagnant neurons, (b) KNIFE on stagnant/volatile neurons (c) Impact of the knowledge-retentive injection operations}   \label{exp:compare:ablation}
\end{figure*}   

%###做差，看性能提升，同种方法用相同颜色

\begin{figure*}[!t]
	\begin{minipage}[!t]{\linewidth} % 第一行画图结果
		\centering
    \includegraphics[width=0.32\columnwidth]{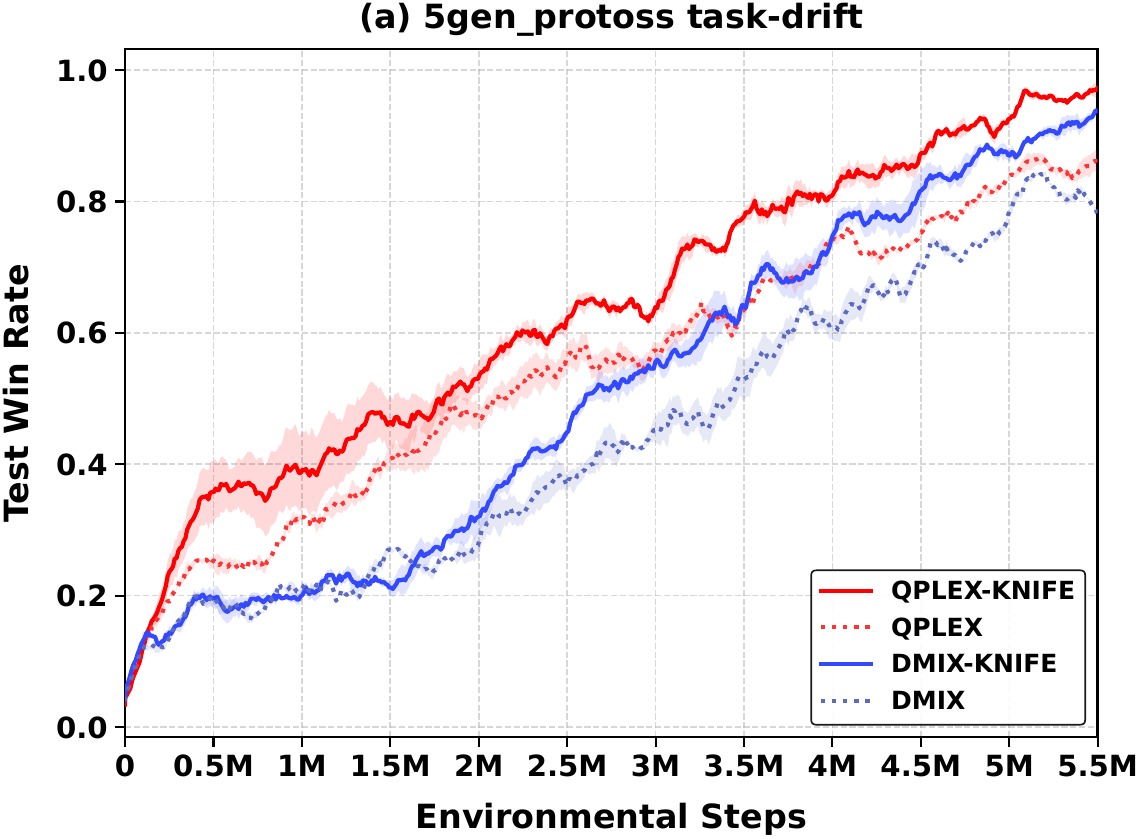}  
        \includegraphics[width=0.32\columnwidth]{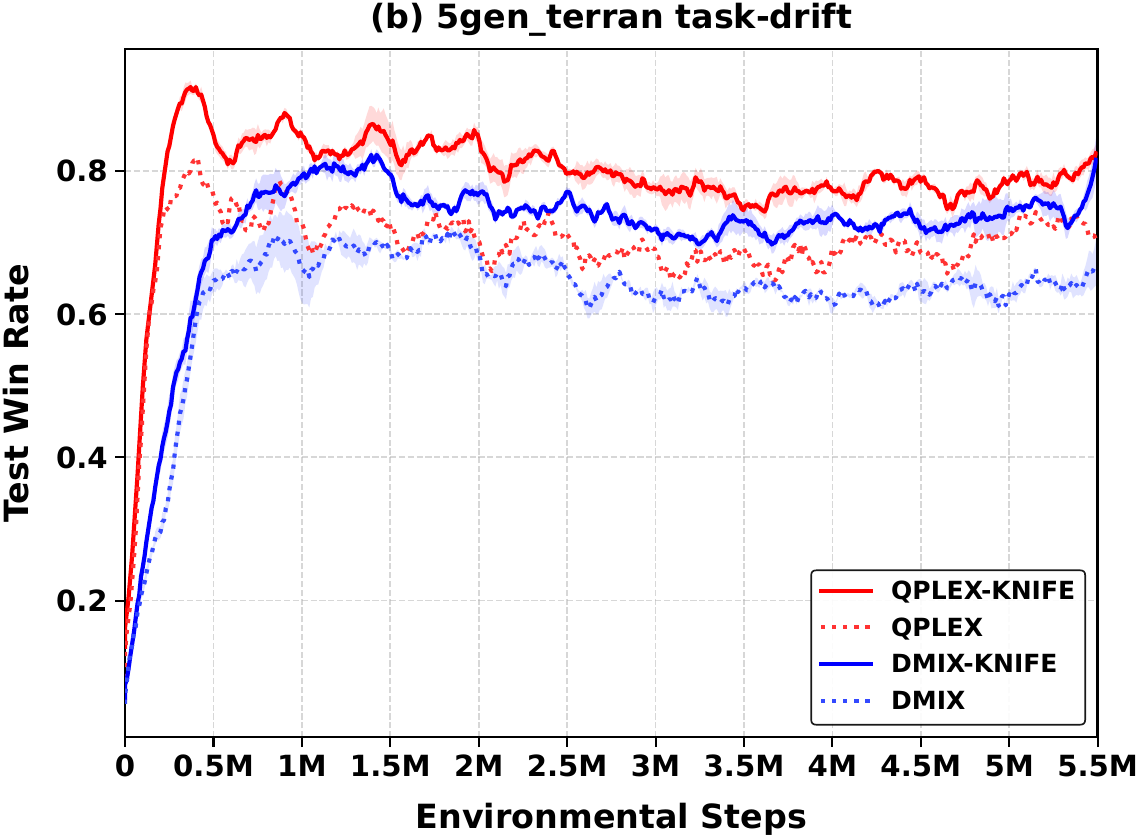}
         \includegraphics[width=0.32\columnwidth]{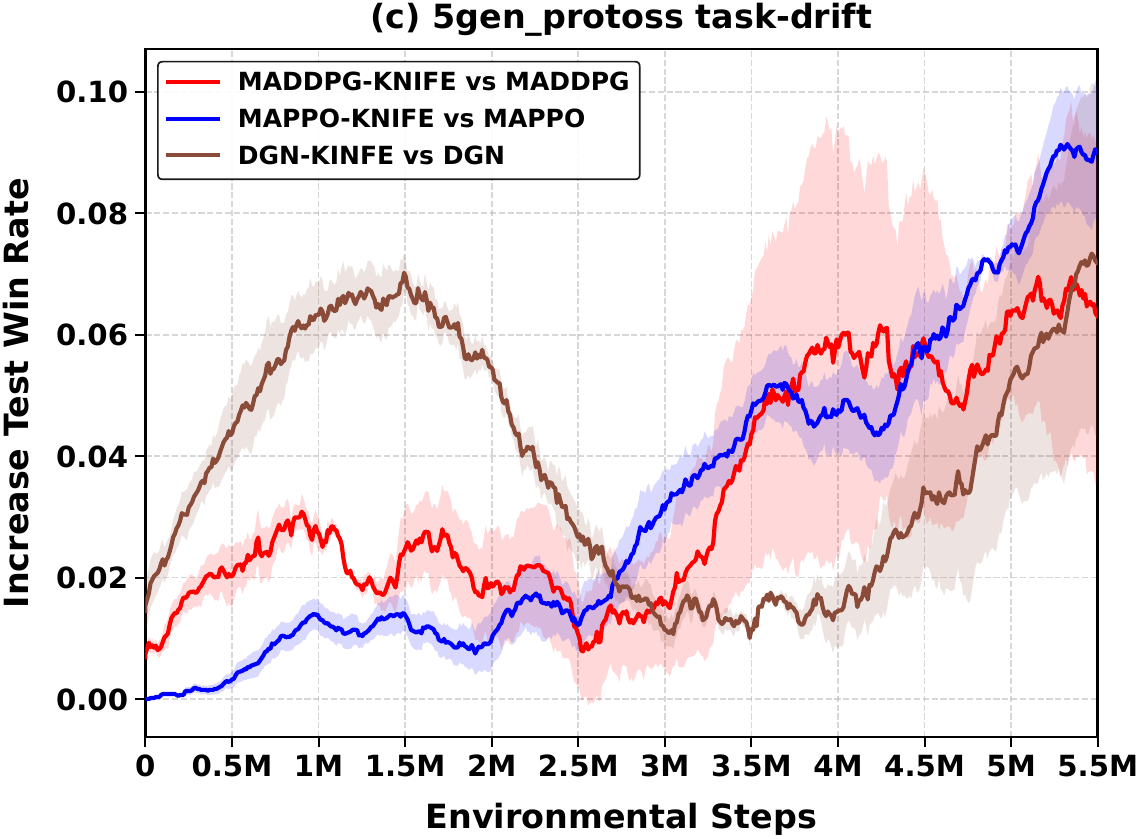} 
	\end{minipage}
            % \caption{(a) The bias for one-step payoff matrix, (b) return for Predator Prey, and (c) win rate for 3s\_vs\_5z of SMAC}
            \caption{KNIFE can reduce the plasticity loss for QPLEX and QMIX in (a) task-drift (SMACv2 5gen\_protoss), (b) task-drift (SMACv2 5gen\_terran), and for MADDPG, MAPPO and DGN in (c) 5gen\_protoss.} 
    \label{exp:compare:qplexdmix_main}
\end{figure*}

%s: \emph{stagnant-only}, \emph{volatile-only}, and \emph{both} neurons. The win rate and the neuron ratios for the Protoss environment of SMACv2 (task-switch) are plotted in Figure~\ref{exp:compare:ablation} (a) and (b), respectively. Repairing \emph{both} yields the best overall performance and the lowest stagnant ratio, while \emph{stagnant-only} recovers most of the gain. This indicates that activating the stagnant neurons is the main source of improvement, and additionally repairing volatile neurons help plasticity injection.
%helps re-balance updates under target changes, further mitigating stagnation.

% \begin{figure*}[!t] %移到附录
% 	\centering
%     \begin{minipage}[!t]{\linewidth} % 第一行画图结果
% 		\centering
%         \includegraphics[width=0.24\columnwidth]{ICML26/experiment/ablation/[decay]5gen_protoss_tasks_v6_500k_test_battle_won_mean.pdf} 	
%             \includegraphics[width=0.24\columnwidth]{ICML26/experiment/ablation/[T]5gen_protoss_tasks_v6_500k_test_battle_won_mean.pdf} 	
%             \includegraphics[width=0.24\columnwidth]{ICML26/experiment/ablation/[alpha]5gen_protoss_tasks_v6_500k_test_battle_won_mean.pdf} 		
% 		\includegraphics[width=0.24\columnwidth]{ICML26/experiment/ablation/[beta]5gen_protoss_tasks_v6_500k_test_battle_won_mean.pdf} 	
% 	\end{minipage}
%         \caption{Parameter Sensitivity: (a) Weight decaying factor, (b) Execution interval, (c) Stagnant threshold, (d) Volatile threshold}
% 	\label{exp:compare:parameters} 
% \end{figure*}

\subsection{KNIFE works on multiple MARL methods} \label{exp:apply}

%We apply KNIFE to QPLEX, and DMIX.
KNIFE can be used in MARL value factorization, such as QPLEX and DMIX. The experimental results are depicted in Figure~\ref{exp:compare:qplexdmix_main} (a) and (b). In these graphs, the performance curves of QPLEX and DMIX with KNIFE are depicted as QPLEX-KNIFE and DMIX-KNIFE, respectively. The curves of QPLEX-KNIFE and DMIX-KNIFE are higher than their counterparts, which demonstrates the applicability of KNIFE. Beyond MARL value factorization, we demonstrate KNIFE can lead to performance improvement for MAPPO, MADDPG, and DGN in Figure~\ref{exp:compare:qplexdmix_main} (c) and  Appendix \ref{sec:appendix_generality}.

\section{Conclusion}
In this work, we study plasticity loss in MARL value factorization under continual learning scenarios and identify stagnant neurons (large weights with negligible relative updates) as a key factor that accumulates in the MARL neural networks and slows learning. We propose KNIFE, a Knowledge-retentive Neuron-level PlastIcity Focusing InjEction method. It periodically detects stagnant neurons and applies a knowledge-invariant injection surgery with frozen knowledge/compensation neurons. Experiments on SMAC, SMACv2, one-step matrix games, and predator-prey show that KNIFE improves continual adaptation and reduces the stagnant-neuron ratio.

\newpage
\bibliography{QGraph}
\bibliographystyle{unsrtnat}

%%%%%%%%%%%%%%%%%%%%%%%%%%%%%%%%%%%%%%%%%%%%%%%%%%%%%%%%%%%%

\appendix
% ========== 附录编号 + 正确跳转 核心代码 ==========
\makeatletter
\@addtoreset{figure}{section}   % 让图片按附录章节重新计数
% \@addtoreset{definition}{section}
\makeatother
\newpage
\renewcommand{\thetable}{A\arabic{table}} % 设置编号格式为 A1, A2, ...
\setcounter{table}{0}  
\setcounter{figure}{0}
\setcounter{equation}{0}
\setcounter{theorem}{0}
\setcounter{definition}{0}
\renewcommand{\theequation}{\thesection.\arabic{equation}}
\renewcommand{\thefigure}{A\arabic{figure}} % 设置编号格式为 A1, A2, ...
\setcounter{figure}{0} % 重置图片计数器（如需从 A1 开始）
\renewcommand{\thedefinition}{A\arabic{definition}}
% 重置 definition 计数器，使附录中的定义从 1 开始编号
\setcounter{definition}{0}
\section{Background}
\subsection{Value Function Factorization}

In value factorization methods~\cite{QMIX,VDN,qplex,QTRAN}, per-agent utilities $Q_i$ are approximated by \emph{agent networks}, and then mixed by a \emph{mixer network} to form the joint state-action value function $Q_{\mathrm{tot}}$. 

%For value factorization, the Individual-Global-Max (IGM) principle~\cite{QTRAN} is a critical criterion that ensures the consistency between local and joint optimal action selections. It is defined as follows.

\begin{definition}[Individual-Global-Max (IGM)~\cite{QTRAN}]
\label{def:igm}
For a joint state-action value function $Q_{\mathrm{jt}}:\mathcal{T}^N\times A^N\mapsto\mathbb{R}$, where $\tau\in\mathcal{T}^N$ is a joint action-observation history and $\mathbf{a} \in A^N$ is the joint action, if there exist individual state-action functions $\{Q_i:\mathcal{T}_i\times \ A_i\mapsto\mathbb{R}\}_{i=1}^N$ such that
\begin{equation}
\label{eq:igm}
\begin{aligned}
\arg\max_{\mathbf{a}}\, Q_{\mathrm{jt}}(\tau,\mathbf{a})
&=
\Bigl(\arg\max_{a_1} Q_1(\tau_1,a_1),\,\ldots,\\
&\quad \arg\max_{a_N} Q_N(\tau_N,a_N)\Bigr).
\end{aligned}
\end{equation}
then we say that $\{Q_i\}_{i=1}^N$ satisfies \emph{IGM} for $Q_{\mathrm{jt}}$ under $\tau$, or equivalently, $Q_{\mathrm{jt}}(\tau,\mathbf{a})$ is factorized by $\{Q_i(\tau_i,a_i)\}_{i=1}^N$.
\end{definition}

\subsection{Knowledge Invariant Principle}

\begin{definition}[Knowledge Invariant Principle (KI)~\cite{reborn}]
\label{def:ki}
A joint state-action value function is represented as 
$Q_{tot}^{\theta,\phi}(\tau,\mathbf{a}) = f_{\theta}(Q_1^{\phi}(\tau_1,a_1),...,Q_N^{\phi}(\tau_N,a_N))$, where $f_{\theta}$ is the mixing function that mixes $Q_i$ into $Q_{tot}$, $\tau$ is joint observation-action history, $\mathbf{a} = [a_1,...a_N]$ is the joint action of multi-agent, $g: \mathbb{R} \mapsto \mathbb{R}$ is a function that maps weights in $\theta$ to $\hat{\theta}$, $g(\theta) = \hat{\theta}$. $h: \mathbb{R} \mapsto \mathbb{R}$, $h(\phi) = \tilde{\phi}$, $h$ map the weights in $\phi$ to $\tilde{\phi}$. If the following condition holds:
\begin{equation}
\label{eq:ki}
\begin{aligned}
Q_{\mathrm{tot}}^{\theta,\phi}(\tau,\mathbf{a})
&\ge
Q_{\mathrm{tot}}^{\theta,\phi}(\tau,\mathbf{a}')
\Rightarrow
Q_{\mathrm{tot}}^{\hat{\theta},\tilde{\phi}}(\tau,\mathbf{a})
\ge
Q_{\mathrm{tot}}^{\hat{\theta},\tilde{\phi}}(\tau,\mathbf{a}'),\\
&\exists!~k:\ a_k\neq a_k'.
\end{aligned}
\end{equation}
then, the two functions \( g \) and \( h \) satisfy the Knowledge Invariant Principle for \( Q_{\mathrm{tot}}^{\theta,\phi} \), where \(\left[Q_i(\tau_i, a_i)\right]_{i=1}^N\) is individual agent utility function, \( N \) is the number of agents, \( \tau_i \) and \( a_i \) are the observation-action history and action of agent \( i \), respectively. \( \exists! \) represents the concept of unique existence.
\end{definition}

The KI principle is used to assess whether a parameter perturbation operation preserves the learned cooperative knowledge. \citet{reborn} show that if a parameter perturbation violates the KI principle, then it breaks the IGM consistency and therefore cannot guarantee IGM-consistent value factorization.

\subsection{Knowledge Invariant++ (KI++) Principle}\label{app:ki++}

Inspired by~\cite{reborn}, we propose a stronger neuron-level parameter perturbation principle, Knowledge Invariant++ Principle. This principle removes the unique existence condition $\exists!~k:\ a_k\neq a_k'$ from the KI principle.

\begin{definition}[Knowledge Invariant++ Principle (KI++)]
\label{def:ki++}
A joint state-action value function is represented as 
$Q_{tot}^{\theta,\phi}(\tau,\mathbf{a}) = f_{\theta}(Q_1^{\phi}(\tau_1,a_1),...,Q_N^{\phi}(\tau_N,a_N))$, where $f_{\theta}$ is the mixing function that mixes $Q_i$ into $Q_{tot}$, $\tau$ is joint observation-action history, $\mathbf{a} = [a_1,...a_N]$ is the joint action of multi-agent, $g: \mathbb{R} \mapsto \mathbb{R}$ is a function that maps weights in $\theta$ to $\hat{\theta}$, $g(\theta) = \hat{\theta}$. $h: \mathbb{R} \mapsto \mathbb{R}$, $h(\phi) = \tilde{\phi}$, $h$ map the weights in $\phi$ to $\tilde{\phi}$. If the following condition holds:
\begin{equation}
\label{eq:ki}
\begin{aligned}
Q_{\mathrm{tot}}^{\theta,\phi}(\tau,\mathbf{a})
&\ge
Q_{\mathrm{tot}}^{\theta,\phi}(\tau,\mathbf{a}')
\Rightarrow
Q_{\mathrm{tot}}^{\hat{\theta},\tilde{\phi}}(\tau,\mathbf{a})
\ge
Q_{\mathrm{tot}}^{\hat{\theta},\tilde{\phi}}(\tau,\mathbf{a}'),
\end{aligned}
\end{equation}
then, the two functions \( g \) and \( h \) satisfy the Knowledge Invariant++ Principle for \( Q_{\mathrm{tot}}^{\theta,\phi} \), where \(\left[Q_i(\tau_i, a_i)\right]_{i=1}^N\) is individual agent utility function, \( N \) is the number of agents, \( \tau_i \) and \( a_i \) are the observation-action history and action of agent \( i \), respectively.
\end{definition}

The unique existence condition  $\exists!~k:\ a_k\neq a_k'$ from the KI principle states that two joint actions must differ in exactly one agent’s action. This requirement, however, is overly strict. It is specifically designed for monotonic increasing mixing functions, where the value comparison between two joint actions relies on the change in only one agent’s action. The KI++ principle is stronger than the KI principle in the sense that it is less restrictive and therefore more widely applicable.

KI++ less restrictive but implies a stronger guarantee over a larger set of action comparisons. 

%The unique existence condition $\exists!~k:\ a_k\neq a_k'$ from the KI principle requires that two joint actions differs only in one agent action $k$. However, it is too strict. Such condition is specially used for monotonic increasing mixing function, where we can compare the value of two joint action with only one agent's action is different.

%The KI principle is used to assess whether a parameter perturbation operation preserves the learned cooperative knowledge. \citet{reborn} show that if a parameter perturbation violates the KI principle, then it breaks the IGM consistency and therefore cannot guarantee IGM-consistent value factorization.

\subsection{Difference between ReBorn and KNIFE}

ReBorn is a neuron-level method designed to address dormant neurons in MARL and satisfies the KI principle for algorithms such as QMIX, QPLEX, and RMIX. In contrast, KNIFE targets stagnant neurons under periodic task switching and satisfies the stricter KI++ principle. Moreover, ReBorn does not satisfy the KI++ principle. The distinction between these two types of neurons is illustrated in Figure~\ref{exp:compare:main}. Empirically, KNIFE outperforms ReBorn across multiple environments, including SMACv2, Predator-prey, and the one-step matrix game.

\subsection{Detailed Comparison with Plasticity Injection (PI)}
\label{sec:appendix_pi}

Plasticity Injection (PI) and KNIFE both introduce new neurons and preserve learned knowledge via frozen weights. However, PI operates at the layer level, targeting the last $K$ layers of the network, whereas KNIFE acts at the neuron level, focusing specifically on stagnant and volatile neurons in the mixing network. They differ in when, where, and how plasticity is injected, as detailed in Table \ref{tab:pi_diff}.

\begin{table}[htbp]
\centering
\caption{Conceptual and Methodological Differences}
\label{tab:pi_diff}
\renewcommand{\arraystretch}{1.3}

\resizebox{\linewidth}{!}{

\begin{tabular}{p{0.18\textwidth} p{0.25\textwidth} p{0.25\textwidth} p{0.22\textwidth}}
\toprule
\textbf{Feature} & \textbf{Plasticity Injection (PI)} & \textbf{KNIFE (Ours)} & \textbf{Our Advantage} \\
\midrule
\textbf{Granularity} & Layer-level & Neuron-level & Finer-grained control. \\
\textbf{Trigger Time} & Once & Periodically & Can handle different environments without human intervention. \\
\textbf{Treatment Location} & All neurons of the last $K$ layers & Stagnant/volatile neurons in the mixing network & Avoids blindly duplicating multiple neural network layers. \\
\textbf{Increased Neurons} & $K \times N \times 2$ where $N$ is the average number of neurons in the last $K$ layers & $n \times 2$, where $n$ is the number of stagnant/volatile neurons & KNIFE introduces less neurons than PI. \\
\textbf{Recycling Mechanism} & No & Frozen neurons are recycled through pruning. & Memory-friendly. \\
\textbf{Initialization} & Kaiming Initialization (State Agnostic) & State-Aware & Actively prevents dead zones. \\
\bottomrule
\end{tabular}

}
\end{table}

\subsubsection{Detailed Comparison of Initialization Strategies}

While both Plasticity Injection (PI) and KNIFE introduce new parameters to restore plasticity, they fundamentally differ in their initialization strategies, which significantly impacts the effectiveness of the injected plasticity.

\textbf{Plasticity Injection (PI) Initialization:} PI typically employs standard Kaiming initialization for both the weights and biases of the injected units. This approach is purely \textit{state-agnostic}, meaning it completely ignores the current optimization state and weight distribution of the surrounding network. Consequently, there is a substantial risk that the newly injected neurons might immediately fall into activation dead zones (e.g., dying ReLUs) or fail to propagate meaningful gradient signals, rendering the plasticity injection ineffective.

\textbf{KNIFE State-Aware Initialization:} To ensure the newly injected active neurons immediately contribute to the network and avoid dead zones, KNIFE applies a \textit{state-aware} initialization strategy tailored to the current network conditions:
\begin{itemize}
    \item \textbf{Positive Input Bias:} Unlike PI, which typically initializes biases using a zero-centered distribution, KNIFE explicitly initializes the input bias of the active neuron to a positive value (e.g., proportional to the standard deviation of its input weights). This deliberate positive shift significantly increases the likelihood of positive-valued activations, ensuring the unit remains active and capable of receiving backpropagated gradients from the earliest stages.
    \item \textbf{Contextual Output Weights:} The active neuron's output weights are initialized using a zero-centered Gaussian distribution whose variance is dynamically tied to the standard deviation of the subsequent layer's weights (safeguarded by a minimum threshold). This ensures the magnitude of the forward signal is fully compatible with the existing network structure.
    \item \textbf{Input Weights:} Standard Kaiming initialization is retained solely for the input weights to maintain proper variance scaling.
\end{itemize}

By actively bridging the gradient flow and preventing new neurons from falling into optimization inertia, this state-aware initialization provides KNIFE with a fundamental advantage over PI's layer-level agnostic injection.

%\begin{figure}[!th]
%	\centering	
%    \begin{minipage}[b]{\linewidth} % 第一行画图结果
%		\centering
%         \includegraphics[width=0.49\columnwidth]{ICML26/experiment/stagnant/[7.1]5gen_protoss_tasks_v6_500k_Overlap_All_Combined.pdf} 
%        \includegraphics[width=0.49\columnwidth]{ICML26/experiment/stagnant/[7.2]3s_vs_5z_Overlap_All_Combined.pdf} 		

		% \includegraphics[width=0.32\columnwidth]{ICML26/experiment/smacv2/v6/[3.1]5gen_protoss_tasks_v6_500k_Ratio_Combined.pdf}
%	\end{minipage}
%        \caption{Overlap coefficient stagnant/volatile Neurons between the current iteration and the previous iteration: (a) SMACv2 5 gen\_protoss and (b) SMAC 3s\_vs\_5z}
%	\label{stagnant:overlap}
%\end{figure}

 %Unless stated otherwise, we report stagnation statistics on the mixing network, where stagnation is most pronounced.

\section{Principle and Theorem}
% \begin{theorem}
% \label{thm:pis_ki}
% After the plasticity injection stage, the learned value function of the QMIX~\cite{QMIX} value factorization method still satisfies the KI principle.
% \end{theorem}

% \begin{equation}
% Q_{tot}^{\theta,\phi}(\tau, \boldsymbol{u}) = f_\theta(Q_1^\phi(\tau_1, u_1), ..., Q_N^\phi(\tau_N, u_N)) + V_\theta(\tau) \quad \frac{\partial f}{\partial Q_i^\phi} \geq 0 
% \end{equation}

\begin{theorem}
\label{thm:pis_ki}
For any MARL value factorization method, after the plasticity injection stage of KNIFE, the state-action value function \( Q_{tot}^{\theta,\phi}(\ve \tau, \ve a) \) satisfies the KI++ principle.
\end{theorem}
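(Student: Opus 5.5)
The plan is to prove something stronger than KI++: at the injection boundary, the surgery induces \emph{function-preserving} maps $g,h$, i.e., $Q_{tot}^{\hat\theta,\tilde\phi}(\tau,\mathbf{a}) = Q_{tot}^{\theta,\phi}(\tau,\mathbf{a})$ for every $(\tau,\mathbf{a})$. Once pointwise equality is established, the implication in the KI++ definition (\cref{def:ki++}) is immediate. If $Q_{tot}^{\theta,\phi}(\tau,\mathbf{a}) \ge Q_{tot}^{\theta,\phi}(\tau,\mathbf{a}')$, then substituting equal quantities gives the same inequality for $(\hat\theta,\tilde\phi)$. This holds for arbitrary pairs $\mathbf{a},\mathbf{a}'$, with no restriction on how many agents' actions differ, which is exactly why the unique-existence condition of KI can be dropped. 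The argument never uses monotonicity or any specific form of $f_\theta$, so it applies to any value factorization method (QMIX, QPLEX, DMIX, and others).

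The core step is a single-neuron lemma. Fix a layer $\ell$ and a repaired neuron $i \in \mathcal{S}^\ell_{\text{repair}}$, and consider the same input $\mathbf{x}^{\ell-1}$. The knowledge neuron copies $(\mathbf{w}_{in,i}^\ell, b_i^\ell, \mathbf{w}_{out,i}^\ell)$, so $\mathbf{x}_i^{\ell,\mathrm{know}} = a_i^\ell \mathbf{w}_{out,i}^\ell = \mathbf{x}_i^\ell$. The compensation neuron shares its incoming weights and bias with the active neuron, so their activations coincide: $a_i^{\ell,\mathrm{comp}} = a_i^{\ell,\mathrm{active}}$. Because its outgoing weights are negated, $\mathbf{x}_i^{\ell,\mathrm{active}} + \mathbf{x}_i^{\ell,\mathrm{comp}} = a_i^{\ell,\mathrm{active}}(\mathbf{w}_{out,i}^{\ell,\mathrm{active}} - \mathbf{w}_{out,i}^{\ell,\mathrm{active}}) = \mathbf{0}$. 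Hence the composite output equals $\mathbf{x}_i^\ell$ for every input, whatever random draw initialized the active neuron. This step uses only that $\sigma$ is a deterministic function, so any activation function is allowed.

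Next comes an induction over layers and over the set of repaired neurons. The contribution of layer $\ell$ to the next layer's pre-activation is $\sum_i \mathbf{x}_i^\ell$. Replacing each repaired neuron by its composite leaves every summand unchanged, while untouched neurons keep their parameters. Therefore the input $\mathbf{x}^\ell$ to layer $\ell+1$ is identical before and after surgery, given identical $\mathbf{x}^{\ell-1}$. Inducting from the input layer gives identical network outputs. Applied to the agent networks (parameters $\phi\mapsto\tilde\phi$), this yields $Q_i^{\tilde\phi}(\tau_i,a_i)=Q_i^{\phi}(\tau_i,a_i)$.

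The main obstacle is the mixing network, whose parameters may be produced by hypernetworks conditioned on the state, and the recurrent agent networks, where hidden states feed back. Feeding these equal utilities into the mixer, whose surgered neurons may sit in hypernetworks that generate mixing weights, requires care. I would handle both uniformly by viewing every module (hypernetwork, GRU cell, mixer MLP, QPLEX attention) as a composition of layers of the form above. The lemma then shows each generated weight, each hidden state at each timestep (inducting over time as well), and finally $f_{\hat\theta}$ agree with the originals, so $Q_{tot}^{\hat\theta,\tilde\phi} = f_{\hat\theta}(Q_1^{\tilde\phi},\dots,Q_N^{\tilde\phi}) = f_\theta(Q_1^\phi,\dots,Q_N^\phi) = Q_{tot}^{\theta,\phi}$. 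I would note explicitly that the claim concerns the network immediately after injection, before any gradient update to the active neuron, since that is what the theorem asserts.
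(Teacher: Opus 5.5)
Your proposal is correct and follows the paper's argument. The paper shows that each repaired neuron's composite output equals the original: the knowledge copy reproduces $x_i^l$ and the active and compensation branches cancel. It then concludes $f_{\hat\theta}(\tau,\boldsymbol{q}) = f_{\theta}(\tau,\boldsymbol{q})$ for all inputs and transfers the inequality between $Q_{tot}$ values for arbitrary $\mathbf{a},\mathbf{a}'$. Your version differs only in detail and scope. The paper operates on the mixing network alone, so $h$ is the identity and the conclusion is stated for $Q_{tot}^{\hat\theta,\phi}$. You additionally spell out the layer-wise induction and cover surgery in agent, hypernetwork and recurrent modules, which is a harmless generalization.
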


% \begin{equation}
% Q_{tot}^{\theta,\phi}(\tau, \boldsymbol{a})
% =
% f_{\theta}\!\bigl(Q_1^\phi(\tau_1, a_1), \ldots, Q_N^\phi(\tau_N, a_N)\bigr)
% +
% V_\theta(\tau).
% \end{equation}

\begin{equation}
Q_{\mathrm{tot}}^{\theta,\phi}(\tau,\boldsymbol a)
=
f_{\theta}\Bigl(
\tau, Q_1^\phi(\tau_1,a_1),\ldots,Q_N^\phi(\tau_N,a_N)
\Bigr).
\end{equation}

where \( Q_{tot}^{\theta,\phi}(\ve \tau, \ve a) \) is the joint state-action value function, and \( f_{\theta} \) is the value factorization function of any value factorization method (e.g., QMIX~\cite{QMIX}, QPLEX~\cite{qplex}, DMIX~\cite{dmix}).

In the plasticity injection stage, $g$ maps the parameters $\theta$ of the mixing network parameters to $\hat{\theta}$.
Consider any neuron $i$ in layer $l$ of the injected mixing network. By construction, each pre-injection neuron is expanded into three injected neurons (knowledge / active / compensation), and satisfies:
\begin{equation}
x_i^{l,\mathrm{know}} + x_i^{l,\mathrm{active}} + x_i^{l,\mathrm{comp}}
=
x_i^{l,\mathrm{know}}
=
x_i^{l}.
\end{equation}
where $x_i^{l}$ denotes the pre-injection activation of neuron $i$ at layer $l$.

Therefore, for any layer $l$, the layer activation vector is unchanged:
\begin{equation}
x_{\mathrm{after}}^l = x_{\mathrm{before}}^l.
\end{equation}
Consequently, for the whole network, for any inputs $\tau$ and $\boldsymbol{q}$,
\begin{equation}
\label{eq:zero-per}
f_{\hat\theta}(\tau, \boldsymbol{q}) = f_{{\theta}}(\tau, \boldsymbol{q}).
\end{equation}

\begin{proof}
\begin{equation}
Q_{tot}^{\theta, \phi}(\tau, \boldsymbol{a})
\geq
Q_{tot}^{\theta, \phi}(\tau, \boldsymbol{a}^{\prime}),\end{equation}

\begin{equation}
Q_{tot}^{\theta, \phi}(\tau,[a_{1}, \ldots, a_{N}])
\geq
Q_{tot}^{\theta, \phi}(\tau,[a_{1}^{\prime}, \ldots, a_{N}^{\prime}]),
\quad \text{expand } \boldsymbol{a}.
\end{equation}

\begin{equation}
f_{\theta}\Bigl(\tau, Q_{1}^{\phi}(\tau_{1}, a_{1}), \ldots, Q_{N}^{\phi}(\tau_{N}, a_{N})\Bigr)
\geq
f_{\theta}\Bigl(\tau, Q_{1}^{\phi}(\tau_{1}, a_{1}^{\prime}), \ldots, Q_{N}^{\phi}(\tau_{N}, a_{N}^{\prime})\Bigr).
\end{equation}

% \begin{equation}
% \label{eq:moq1}
% f_{\theta}(Q_{1}^{\phi}(\tau_{1}, a_{1}), \ldots, Q_{N}^{\phi}(\tau_{N}, a_{N}))
% \geq
% f_{\theta}(Q_{1}^{\phi}(\tau_{1}, a_{1}^{\prime}), \ldots, Q_{N}^{\phi}(\tau_{N}, a_{N}^{\prime})).
% \end{equation}

\begin{equation}
f_{\hat{\theta}}\Bigl(\tau, Q_{1}^{\phi}(\tau_{1}, a_{1}), \ldots, Q_{N}^{\phi}(\tau_{N}, a_{N})\Bigr)
=
f_{\theta}\Bigl(\tau, Q_{1}^{\phi}(\tau_{1}, a_{1}), \ldots, Q_{N}^{\phi}(\tau_{N}, a_{N})\Bigr)
\quad \text{because of (\ref{eq:zero-per})}.
\end{equation}

\begin{equation}
f_{\hat{\theta}}\Bigl(\tau, Q_{1}^{\phi}(\tau_{1}, a_{1}^{\prime}), \ldots, Q_{N}^{\phi}(\tau_{N}, a_{N}^{\prime})\Bigr)
=
f_{\theta}\Bigl(\tau, Q_{1}^{\phi}(\tau_{1}, a_{1}^{\prime}), \ldots, Q_{N}^{\phi}(\tau_{N}, a_{N}^{\prime})\Bigr)
\quad \text{because of (\ref{eq:zero-per})}.
\end{equation}

\begin{equation}
f_{\hat{\theta}}\Bigl(\tau, Q_{1}^{\phi}(\tau_{1}, a_{1}), \ldots, Q_{N}^{\phi}(\tau_{N}, a_{N})\Bigr)
\geq
f_{\hat{\theta}}\Bigl(\tau, Q_{1}^{\phi}(\tau_{1}, a_{1}^{\prime}), \ldots, Q_{N}^{\phi}(\tau_{N}, a_{N}^{\prime})\Bigr).
\end{equation}

% \begin{equation}
% V_{\hat{\theta}}(\tau)=V_{\theta}(\tau) \quad \text{because of (\ref{eq:zero-per})}
% \end{equation}

% \begin{equation}
% f_{\hat{\theta}}(Q_{1}^{\phi}(\tau_{1}, a_{1}), \ldots, Q_{N}^{\phi}(\tau_{N}, a_{N}))+V_{\hat{\theta}}(\tau)
% \geq
% f_{\hat{\theta}}(Q_{1}^{\phi}(\tau_{1}, a_{1}^{\prime}), \ldots, Q_{N}^{\phi}(\tau_{N}, a_{N}^{\prime}))+V_{\hat{\theta}}(\tau).
% \end{equation}

\begin{equation}
Q_{tot}^{\hat{\theta}, \phi}(\tau, \boldsymbol{a})
\geq
Q_{tot}^{\hat{\theta}, \phi}(\tau, \boldsymbol{a}^{\prime}).
\end{equation}

Thus, we show that after the plasticity injection stage, the learned action preference does not change. 
\end{proof}

\begin{theorem}
\label{thm:pis_ki}
ReBorn does not satisfy the KI++ principle.
\end{theorem}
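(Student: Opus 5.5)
The statement is an existence claim, so I would prove it with an explicit counterexample. I will build a small mixing network, pick two joint actions $\boldsymbol a,\boldsymbol a'$ that differ in \emph{more than one} agent's action, and show that the ordering $Q_{tot}^{\theta,\phi}(\tau,\boldsymbol a)\ge Q_{tot}^{\theta,\phi}(\tau,\boldsymbol a')$ is reversed after ReBorn's parameter map $g$.

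The point is that KI (Definition~\ref{def:ki}) only compares actions with $\exists!~k:\ a_k\neq a_k'$. For a mixer that is monotone in each $Q_i$, such a comparison is decided by the sign of a single coordinate change, and any monotonicity-preserving perturbation keeps it. When two coordinates move in opposite directions, the comparison instead depends on the \emph{relative} slopes of $f_\theta$ in those coordinates. ReBorn changes these slopes, because it does not keep the network output exactly invariant the way Eq.~(\ref{eq:zero-per}) does for KNIFE.

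The construction I have in mind is as follows.
\begin{itemize}
\item Take $N=2$ agents and a QMIX-style one-hidden-layer ReLU mixer with nonnegative mixing weights. Fix $\tau$ and choose the agent utilities so that $\boldsymbol a=(a_1,a_2)$ gives $(Q_1,Q_2)=(1,0)$ and $\boldsymbol a'=(a_1',a_2')$ gives $(0,1)$.
\item Give the hidden layer three units. A symmetric unit computes $\mathrm{ReLU}(q_1+q_2)$. An over-active unit computes $\mathrm{ReLU}(c\,q_1)$. A dormant unit computes $\mathrm{ReLU}(c\,q_2)$, and its outgoing weight is small enough that its normalized activation score falls below ReBorn's dormancy threshold.
\item Choose $c$ and the outgoing weights so that the total slopes in $q_1$ and $q_2$ coincide. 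Then $f_\theta(1,0)=f_\theta(0,1)$, and in particular $Q_{tot}^{\theta,\phi}(\tau,\boldsymbol a)\ge Q_{tot}^{\theta,\phi}(\tau,\boldsymbol a')$ holds.
\end{itemize}

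Next I apply ReBorn's weight-sharing step. It redistributes the over-active neuron's weights to the dormant neuron, using sharing coefficients on the incoming and outgoing weights. The dormant neuron's original $q_2$-selective weights are thereby overwritten by a rescaled copy of the over-active neuron's $q_1$-selective weights. After the map, the $q_1$-slope of $f_{\hat\theta}$ is unchanged or larger, while the $q_2$-slope loses the dormant unit's small but nonzero contribution. Hence $f_{\hat\theta}(1,0)$ and $f_{\hat\theta}(0,1)$ differ by that lost contribution.

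To get a strict reversal of the inequality, I would mirror the construction: make the dormant unit $q_1$-selective instead. Then, after ReBorn, $Q_{tot}^{\hat\theta,\phi}(\tau,\boldsymbol a)<Q_{tot}^{\hat\theta,\phi}(\tau,\boldsymbol a')$, which violates the implication in Definition~\ref{def:ki++}. The last step is to check that this example does not contradict ReBorn satisfying KI: any single-agent deviation from either action still preserves the ordering, by monotonicity.

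The main obstacle is modelling ReBorn's update faithfully enough that the argument is not about a strawman. Specifically, I must confirm that its transfer of weights from over-active to dormant neurons, with ReLU positive homogeneity, preserves the output only approximately. It is exact only when the dormant neuron's prior contribution is exactly zero, whereas dormancy is defined by a small score, not a zero one. If ReBorn's exact update did preserve the output in some degenerate case, the fallback is to exploit precisely this gap, a dormant neuron with nonzero activation. I would then use the tie construction, where an arbitrarily small change in relative slopes already suffices to flip the order.
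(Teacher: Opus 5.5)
\textbf{Gap: how ReBorn selects your units.} Your strategy is sound: an explicit monotone counterexample in which ReBorn's weight transfer discards a dormant but non-silent unit, evaluated on two joint actions that differ in \emph{both} agents. But the step that makes ReBorn select your intended units is wrong as written. ReBorn, like ReDo, labels neurons by the normalized activation score. This is the average of $|h_i(x)|$ over replay data, divided by the layer average, where $h_i$ is the neuron's post-activation output. Outgoing weights do not enter this score, so a small outgoing weight does not make a unit dormant. Moreover, both selective units have incoming weight $c$, so your tie forces $u_o c=u_d c$, i.e.\ equal outgoing weights. Nothing you specified then distinguishes the ``over-active'' unit from the ``dormant'' one. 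That distinction would have to come from the input distribution, which you never fix. So the claim that ReBorn flags exactly these two units is unsupported, and your remark that the dormant unit's contribution is ``small'' is not what dormancy means.

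\textbf{Repair and simplifications.}
\begin{itemize}
\item Put the asymmetry in the incoming weights: $\mathrm{ReLU}(c_o q_1)$ and $\mathrm{ReLU}(c_d q_2)$ with $c_o\gg c_d>0$ and zero biases.
\item Choose outgoing weights with $u_o c_o=u_d c_d$; this gives the tie.
\item Fix a concrete replay distribution, e.g.\ uniform on the inputs $(1,0)$ and $(0,1)$. The three scores are then proportional to $1$, $c_o/2$, $c_d/2$; check them against ReBorn's thresholds.
\end{itemize}
The dormant unit's output contribution $u_d c_d$ is now not small. That is harmless, since dormancy concerns activations, not contributions. After ReBorn overwrites that unit with a positively rescaled copy of the over-active one, $f_{\hat\theta}(0,1)=u_s<f_{\hat\theta}(1,0)$ for \emph{any} positive sharing coefficients.

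A tie makes the KI++ hypothesis hold in both orders. So taking $\boldsymbol a\mapsto(0,1)$ and $\boldsymbol a'\mapsto(1,0)$ already violates KI++. The mirroring step and the claim that the $q_1$-slope is ``unchanged or larger'' are therefore unnecessary. The final KI-consistency check is also not needed for the theorem. In any case, ``by monotonicity'' alone would not justify it, because non-strict monotonicity does not protect ties.

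\textbf{Comparison with the paper's proof.} The paper takes a single linear neuron $2Q_1+2Q_2$ and simply posits that ReBorn may turn it into the non-monotone $2Q_1-3Q_2$. This reverses $(B,B)$ versus $(B,A)$. That argument is shorter, but the perturbation is asserted rather than derived from ReBorn's rule; a transfer by positive rescalings does not produce a sign flip. Also, $(B,B)$ and $(B,A)$ differ in only one agent, so the same example would contradict ReBorn satisfying KI. Your repaired construction derives the violation from ReBorn's actual selection-and-transfer mechanism and keeps the mixer monotone. It also uses a two-agent deviation, so it is consistent with ReBorn satisfying KI. That is exactly what shows KI++ is the more demanding requirement.
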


\begin{proof}
We prove this through an counter example that a value function exists, which make Reborn does not satisfy KI++. 

Let's assume a value function $Q_{tot} = k_1 \times Q_1 + k_2 \times Q_2$, which is modeled by a single neuron linear network (without output bias). Before Reborn, let's assume that $Q_{tot}^{\theta,\phi}(\ve \tau, \ve a) = 2 \times Q_1(a_1) + 2 \times Q_2(a_2)$. We ignore the observation-history for simplicity. 
Each agent has two identical actions: A and B. The individual utility function $Q_1(A) = Q_2(A)=1, Q_1(B)=Q_2(B)=2$. In this case, the optimal action is $(B, B)$.

$Q_{tot}^{\theta, \phi}(\tau, (B, B)) \geq Q_{tot}^{\hat{\theta}, \phi}(\tau, (B,A)).$

After parameter-perturbation of ReBorn, the value factorization function (not a monotonic increasing function) could become $ Q_{tot}^{\hat{\theta},\phi'}(\ve \tau, \ve a) = 2\times Q_1(a_1) - 3 \times Q_2(a_2)$. After the parameter-perturbation, the optimal action becomes $(B, A)$. The original optimal action is not longer than the optimal action. 
Thus, $Q_{tot}^{\hat{\theta}, \phi}(\tau, (B, B))
< Q_{tot}^{\hat{\theta}, \phi}(\tau, (B,A)).$

It indicates that ReBorn does not satisfy the KI++ principle.
\end{proof}

\noindent
\textbf{Stagnant neurons} in $\mathcal{S}_{stag}$ exhibit excessive optimization inertia
(e.g., $\lVert \mathbf{w}\rVert \gg 0$ while $\lVert \nabla \mathbf{w}\rVert$ remains small), representing
\emph{wasted capacity} that cannot adapt effectively.

\noindent
\textbf{Volatile neurons} in $\mathcal{S}_{vol}$ lie in the hyper-plastic tail of the layer-wise plasticity distribution: they update disproportionately relative to peers (large RUA), which empirically correlates with unstable feature drift under task switches. 

We additionally intervene on the \emph{high-RUA} tail
($\mathcal{S}_{vol}$) as a \emph{stabilizing auxiliary}: empirically, suppressing over-dominant updates reduces the
emergence rate of low-RUA (stagnant) neurons over time, leading to a lower overall stagnation ratio.
Importantly, we do not claim hyper-activity as the primary pathology; it is leveraged to indirectly mitigate inertia.

\section{Algorithm}\label{appendix:algo}
The KNIFE algorithm is described in Algorithm 1.

% \section{KNIFE Algorithm (Pseudocode)}
% \label{appendix:knife:pseudocode}

\begin{algorithm}[h]
\caption{KNIFE: Knowledge-retentive Neuron-level Plasticity Focusing Injection}
\label{alg:knife}
\begin{algorithmic}[1]
\REQUIRE network parameters $\Theta$; intervention interval $T$; thresholds $(\alpha,\beta)$; decay factor $\gamma\in(0,1)$; prune threshold $\varepsilon_{\mathrm{prune}}$ = 1e-4
\STATE Initialize frozen-neuron registry $\mathcal{F}\leftarrow\emptyset$
\FOR{each training step $t=1,2,\dots$}
    \STATE Update $\Theta$ by gradient descent \textbf{only} on trainable parameters
    \IF{$t \bmod T = 0$}
        \FOR{each layer $\ell$ to be intervened}
            \STATE Compute neuron-wise RUA scores $\{\mathrm{RUA}_i^{\ell}\}$
            \STATE $\mathcal{S}^{\ell}_{\mathrm{stag}} \leftarrow \{ i \mid \mathrm{RUA}_i^{\ell} < \alpha \}$
            \STATE $\mathcal{S}^{\ell}_{\mathrm{vol}} \leftarrow \{ i \mid \mathrm{RUA}_i^{\ell} > \beta \}$
            \STATE $\mathcal{S}^{\ell}_{\mathrm{repair}} \leftarrow \mathcal{S}^{\ell}_{\mathrm{stag}} \cup \mathcal{S}^{\ell}_{\mathrm{vol}}$
            \FOR{each neuron $i \in \mathcal{S}^{\ell}_{\mathrm{repair}}$}
                \STATE \textbf{(Injection)} Create three parallel branches: know (frozen), active (trainable), comp (frozen)
                \STATE Copy original params to knowledge branch:
                \STATE \hspace{0.7em}$\mathbf{w}_{in,i}^{\ell,\mathrm{know}}\leftarrow\mathbf{w}_{in,i}^{\ell}$, $b_i^{\ell,\mathrm{know}}\leftarrow b_i^{\ell}$, $\mathbf{w}_{out,i}^{\ell,\mathrm{know}}\leftarrow\mathbf{w}_{out,i}^{\ell}$
                \STATE Initialize active branch (e.g., Kaiming/He init):
                \STATE \hspace{0.7em}$\mathbf{w}_{in,i}^{\ell,\mathrm{active}} \sim \mathcal{N}(0, \delta_{in}^2)$, $b_i^{\ell,\mathrm{active}}\leftarrow 0.5\delta_{in}$, $\mathbf{w}_{out,i}^{\ell,\mathrm{active}}\sim\mathcal{N}(0,\delta_{out}^2)$
                \STATE Set compensation branch to cancel the active branch at injection:
                \STATE \hspace{0.7em}$\mathbf{w}_{in,i}^{\ell,\mathrm{comp}}\leftarrow\mathbf{w}_{in,i}^{\ell,\mathrm{active}}$, $b_i^{\ell,\mathrm{comp}}\leftarrow b_i^{\ell,\mathrm{active}}$, $\mathbf{w}_{out,i}^{\ell,\mathrm{comp}}\leftarrow-\mathbf{w}_{out,i}^{\ell,\mathrm{active}}$
                \STATE Freeze parameters of know/comp branches; keep active trainable
                \STATE Add know/comp to frozen registry: $\mathcal{F}\leftarrow\mathcal{F}\cup\{(\ell,i,\mathrm{know}),(\ell,i,\mathrm{comp})\}$
            \ENDFOR
        \ENDFOR

    \ENDIF
    \FOR{each frozen branch $f\in\mathcal{F}$}
        \STATE \textbf{(Pruning)} Decay frozen parameters: $\theta_f \leftarrow \gamma\,\theta_f$
        \IF{$\lVert\theta_f\rVert_2 < \varepsilon_{\mathrm{prune}}$}
            \STATE Remove/prune branch $f$ from the network and from $\mathcal{F}$
        \ENDIF
    \ENDFOR
\ENDFOR
\end{algorithmic}
\end{algorithm}

\section{Experimental Details}\label{appendix:exp}

\subsection{Experimental Setup}
\label{appendix:exp:details}

We implement these algorithms based on their open-source repositories to carry out performance analyses with hyperparameters consistent with those in PyMARL. Our methods are implemented within the PyMARL framework, and each is
evaluated using five random seeds with 95\% confidence intervals. Specific hyperparameters of different environments are
listed in Table 2. We conduct experiments on a cluster equipped with multiple NVIDIA GeForce RTX 4090 GPUs.

\begin{table}[t]
\centering
\small
\caption{Common hyperparameters used in our experiments.}
\label{tab:hyperparams_env}
\renewcommand{\arraystretch}{1.1}

\resizebox{\linewidth}{!}{

\begin{tabular}{lcccc}
\toprule
\textbf{Hyperparameter}& \textbf{SMACv2} & \textbf{SMACv2} & \textbf{One-step Matrix Game} & \textbf{Predator-Prey} \\
\midrule
Action Selector & epsilon greedy & epsilon greedy & epsilon greedy & epsilon greedy \\
Batch Size & 32 & 32& 32 & 32 \\
Buffer Size & 5000 & 5000 & 5000 & 5000 \\
Learning Rate & 0.0005 & 0.0005 & 0.0005 & 0.0005 \\
Hypernet Embed Dimension & 64 & 64 & 4 & 64 \\
Target Update Interval & 200 & 200 & 200 & 200 \\
KNIFE Interval $T$ & 0.2M & 0.2M & 10K & 0.2M \\
Stagnant Threshold $\alpha$ & 0.25 & 0.25 & 0.25 & 0.25 \\
Volatile Threshold $\beta$ & 3 & 3 & 3 & 3 \\
Decay factor $\gamma$ & 0.995 & 0.995 & 0.995 & 0.995 \\
\bottomrule
\end{tabular}

}
\end{table}

\paragraph{Benchmarks.}

We modify existing MARL benchmarks (SMAC, SMACv2, Predator Prey) for four continual learning setting: task-drift, task-switch, scaling up and scaling down. Moreover, we also evaluate non-continual learning settings. We describe the settings as follows.

(1) \textbf{The task-drift setting} comprises 11 SMACv2 tasks. Each team has 5 agents consisting two different type whose probability changes every task. (2) \textbf{The task-switch setting} switches 2 tasks periodically.  For example, in the SMACV2 5 protoss scenario, the tasks switch between 5 Stalker vs 5 Stalker and 5 Zealot vs 5 Zealot. Predator Prey is evaluated under similar setting. The first two settings can be viewed in the Table ~\ref{tab:smac_streams} in detail.(3) \textbf{The scaling up setting} comprise 3 SMAC tasks where the number of agents and the opponents increase from 3, to 5, and 8. (4) \textbf{The scaling down setting} evaluate 3 SMAC tasks that 5 agents fight against 6, 5, and 4 enemies respectively. (5) \textbf{Non-continue setting} evaluate the original SMAC and SMACv2.

For the original \textbf{SMAC}, we evaluate \textit{structural shifts} where the state and action spaces change due to team size variations every $2\times10^6$ environment steps. This includes \textbf{agent scaling} where the team size increases (e.g., 3m $\rightarrow$ 5m $\rightarrow$ 8m), and \textbf{opponent variation} where the number of enemies fluctuates across tasks (e.g., 5m\_vs\_6m $\rightarrow$ 5m $\rightarrow$ 5m\_vs\_4m).

\textbf{SMACv2.} We modify \textbf{SMACv2} to evaluate \textit{compositional shifts} with two controlled settings: \textbf{Task Switch} and \textbf{Task Drift}.
In \textbf{Task Switch}, the scenario alternates between two mirror-match tasks every $5\times10^5$ environment steps.
For example, in 5gen\_protoss, the task switches between 5 Stalker vs 5 Stalker and
5 Zealot vs 5 Zealot. We apply the same pattern to other races (Terran and Zerg) by switching between two unit types.
In \textbf{Task Drift}, the unit composition changes gradually in discrete stages:
every $5\times10^5$ steps, the spawn probability of the first unit type decreases by $0.1$ (from $1.0$ to $0.0$),
while that of the second increases by $0.1$ (from $0.0$ to $1.0$), producing 10 drift updates and thus 11 task stages in total.

\begin{table}[t]
\centering
\scriptsize
\caption{Task configurations of modified SMACv2. In all cases, the task is updated every $5\times10^5$ env steps.
The unit order is fixed per race; the third unit type has weight 0 throughout.}
\label{tab:smac_streams}
\renewcommand{\arraystretch}{1.15}

\resizebox{\linewidth}{!}{

\begin{tabular}{l l p{3.1cm} p{4.2cm}}
\toprule
\textbf{Race} & \textbf{Unit order} &
\textbf{Task Switch (2 phases)} &
\textbf{Task Drift (11 phases)} \\
\midrule
Protoss &
[Stalker, Zealot, Colossus] &
[1.0, 0.0, 0.0] $\leftrightarrow$ [0.0, 1.0, 0.0] &
[1.0,0.0,0.0] $\rightarrow$ [0.9,0.1,0.0] $\rightarrow \cdots \rightarrow$ [0.0,1.0,0.0] \\
\midrule
Terran &
[Marine, Marauder, Medivac] &
[1.0, 0.0, 0.0] $\leftrightarrow$ [0.0, 1.0, 0.0] &
[1.0,0.0,0.0] $\rightarrow$ [0.9,0.1,0.0] $\rightarrow \cdots \rightarrow$ [0.0,1.0,0.0] \\
\midrule
Zerg &
[Zergling, Baneling, Hydralisk] &
[1.0, 0.0, 0.0] $\leftrightarrow$ [0.0, 0.0, 1.0] &
[1.0,0.0,0.0] $\rightarrow$ [0.9,0.0,0.1] $\rightarrow \cdots \rightarrow$ [0.0,0.0,1.0] \\
\bottomrule
\end{tabular}

}
\end{table}

% \textbf{One-step Matrix Game.} We follow the same sequential setting as Figure~\ref{stagnant:matrix}:
% agents learn a periodic stream of four payoff matrices, where each matrix defines a task and the active matrix
% switches at fixed time intervals.

% \subsection{Detailed Payoff Matrices in the One-Step Matrix Game}

\textbf{One-step Matrix Game (Sequential Setting).}
\label{sec:appendix_matrix_details}
In the one-step matrix game, two agents must learn to coordinate their actions to maximize a shared global reward. Each agent has a discrete action space of size 3, resulting in a $3 \times 3$ joint payoff matrix. To create a challenging task transfer scenario that rigorously tests the plasticity of the mixing network, we designed five distinct payoff matrices where the optimal joint action shifts drastically.

The base matrix, denoted as $M_0$, features a relatively smooth reward gradient guiding the agents towards the optimal joint action at $(2, 2)$ with a maximum reward of $4$:
\begin{equation}
M_0 = \begin{bmatrix}
-6 & -4 & -2 \\
-3 & -1 &  1 \\
 0 &  2 &  4
\end{bmatrix}
\end{equation}

To explicitly evaluate the agents' adaptability and to induce optimization inertia, we introduce four additional tasks ($M_1$ to $M_4$). In these matrices, the global optimum (with a high reward of $12$) is sharply located at different coordinates, surrounded by severe penalties (large negative rewards). This sparse and deceptive reward landscape forces the network to completely overwrite its previous coordination conventions when the task switches:

\begin{equation}
M_1 = \begin{bmatrix}
-10 & -10 &  12 \\
 -9 & -12 &  -8 \\
 -4 &  -8 & -11
\end{bmatrix}, \quad
M_2 = \begin{bmatrix}
-11 &  -8 &  -4 \\
 -8 & -12 &  -9 \\
 12 & -10 & -10
\end{bmatrix}
\end{equation}

\begin{equation}
M_3 = \begin{bmatrix}
-11 &  -4 & -11 \\
 -4 &  12 &  -4 \\
-11 &  -4 & -11
\end{bmatrix}, \quad
M_4 = \begin{bmatrix}
 12 &  -8 & -10 \\
 -8 & -12 &  -6 \\
-10 &  -6 & -11
\end{bmatrix}
\end{equation}

Specifically, the optimal joint actions $(a_1, a_2)$ for $M_1$, $M_2$, $M_3$, and $M_4$ are located at $(0, 2)$, $(2, 0)$, $(1, 1)$, and $(0, 0)$, respectively. This extreme variance across tasks ensures that any accumulated parameter magnitude (i.e., stagnant neurons) from a previous matrix will directly and severely hinder the adaptation to the subsequent one, making it an ideal testbed for evaluating plasticity loss.

\textbf{One-step Matrix Game (Cyclic Setting).}
Unlike the sequential setup evaluated in the main text, each unique payoff matrix constitutes a task. The agents must learn a sequence of four tasks (Task 1–4), with the active payoff matrix switching periodically at fixed intervals, as illustrated in Figure~\ref{app:matrix_periodic_intro} left. 

% Figure~\ref{stagnant:matrix_periodic} middle depicts the overall approximation bias of QMIX across all tasks, respectively. The horizontal axis represents the cycle number, while the vertical axis indicates the bias. Each cycle consists of four consecutive tasks, represented by blue, yellow, green, and pink rectangles, with the color indicating the currently active task.

\textbf{Predator--Prey.} Predator--Prey is a grid-world cooperation task where predators must coordinate to capture preys;
a capture succeeds only when at least two predators execute the capture action simultaneously. We construct a non-stationary task stream by periodically switching the reward profile between summer and winter. In both phases, the time reward is $0$ and capturing a hare yields $+5$. The only change is the stag reward: it is $+10$ in summer but $-5$ in winter. Thus, the task shift purely comes from a sign flip in the incentive for stag captures, while the environment dynamics remain unchanged.

\paragraph{Baselines.}
We compare against neuron-level plasticity injection baselines including ReDo~\cite{redo}, ReBorn~\cite{reborn}, ReGraMa~\cite{regrama}, S\&P~\cite{s&p}, CBP~\cite{nature24plasticity}, ReSet~\cite{ReSet2}, and Plasticity Injection (PI)~\cite{plasticity2}. Note that although ReSet can reduce the stagnant ratio, its large-scale resetting may forget previously learned knowledge, leading to slow convergence and poor overall performance under task changes.

% \paragraph{KNIFE hyperparameters.}
% Unless otherwise stated, KNIFE is applied periodically every $T$ environment steps. Neurons are selected for repair using the RUA thresholds $(\alpha,\beta)$ (stagnant if $\mathrm{RUA}<\alpha$, volatile if $\mathrm{RUA}>\beta$). In the pruning stage, frozen branches are multiplicatively decayed by $\gamma\in(0,1)$ and pruned once their weight norms fall below a small threshold $\varepsilon_{\mathrm{prune}}$.

% \subsection{Experimental Results and Analysis}
% \subsubsection{Stagnant Neuron in MARL}
\subsection{Stagnant Neuron in MARL}
\label{app:staginmarl}

\paragraph{The Plasticity Loss with New MARL Tasks}

In our modified SMACv2 scenarios, two opposing teams of five agents engage in combat. To explicitly evaluate sequential adaptation, we periodically switch the unit composition for \emph{both} sides on a fixed schedule. The task sequences are designed as follows:
\begin{itemize}
    \item \textbf{Protoss Setting:} The environment alternates between purely homogeneous compositions: 100\% Stalkers, 100\% Zealots, and 100\% Colossi.
    \item \textbf{Terran Setting:} The sequence progresses from 100\% Marines to 100\% Marauders, and finally to a heterogeneous mixture of 80\% Marines and 20\% Medivacs.
\end{itemize}

Figure~\ref{app:fig:112311} illustrates the impact of these periodic shifts on standard value factorization methods (QMIX, QPLEX, and DMIX). Subfigures (a) and (c) track the overall win rates across the entire task sequence for the Protoss and Terran settings, respectively. To provide a clearer micro-perspective, subfigures (b) and (d) isolate the learning curves of a single recurring scenario (e.g., 5 Stalkers vs. 5 Stalkers, and 5 Marines vs. 5 Marines) throughout the training process. 

Across all evaluated baselines, we observe a severe and compounding performance degradation as new task instances are introduced. Even when returning to a previously encountered scenario, the agents' learning efficiency drops drastically. Together, these empirical results compellingly demonstrate that MARL systems suffer a fundamental and systemic loss of plasticity as the task sequence progresses.

\begin{figure*}[h]
	\centering
	
    \includegraphics[width=0.49\columnwidth]{ICML26/experiment/stagnant/figure_2.1_5gen_protoss_tasks_v7_500k_test_battle_won_mean.pdf} 
        \includegraphics[width=0.49\columnwidth]{ICML26/experiment/stagnant/figure_2.2_5gen_protoss_tasks_v7_500k_Task1_test_battle_won_mean.pdf} 
        \includegraphics[width=0.49\columnwidth]{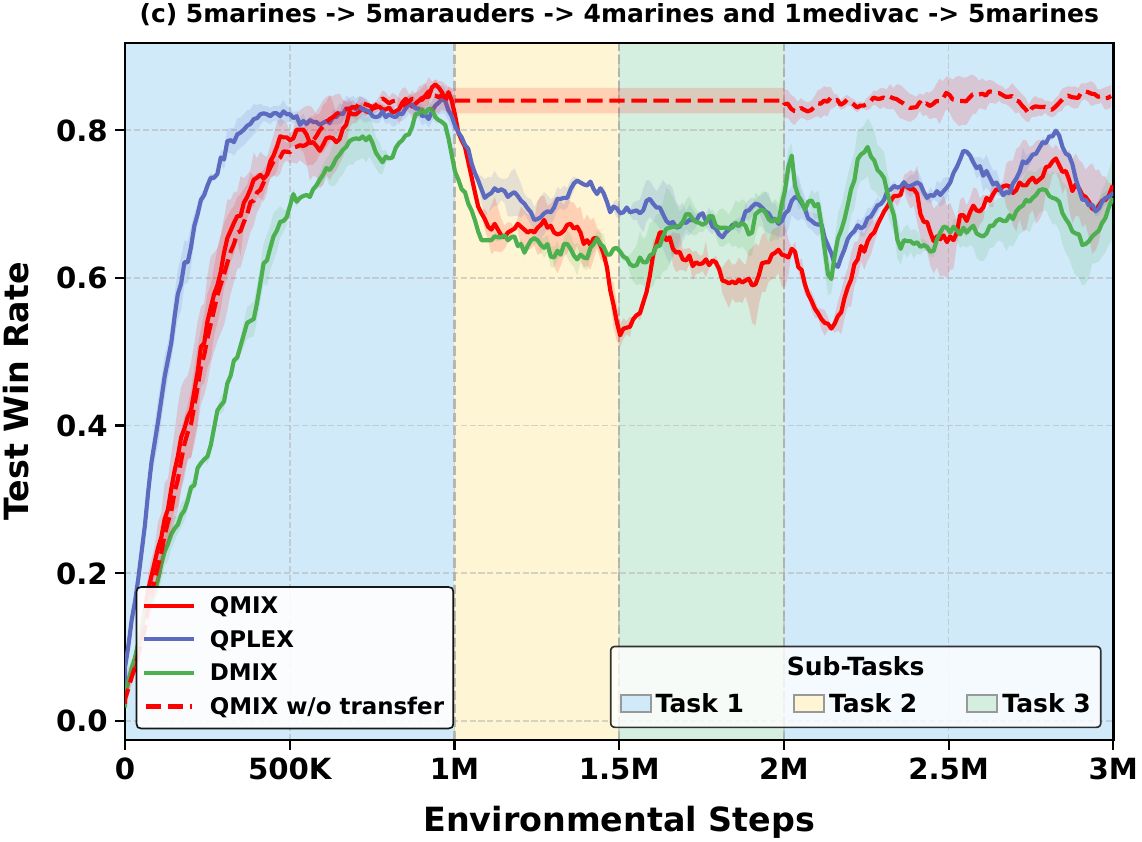} 
		\includegraphics[width=0.49\columnwidth]{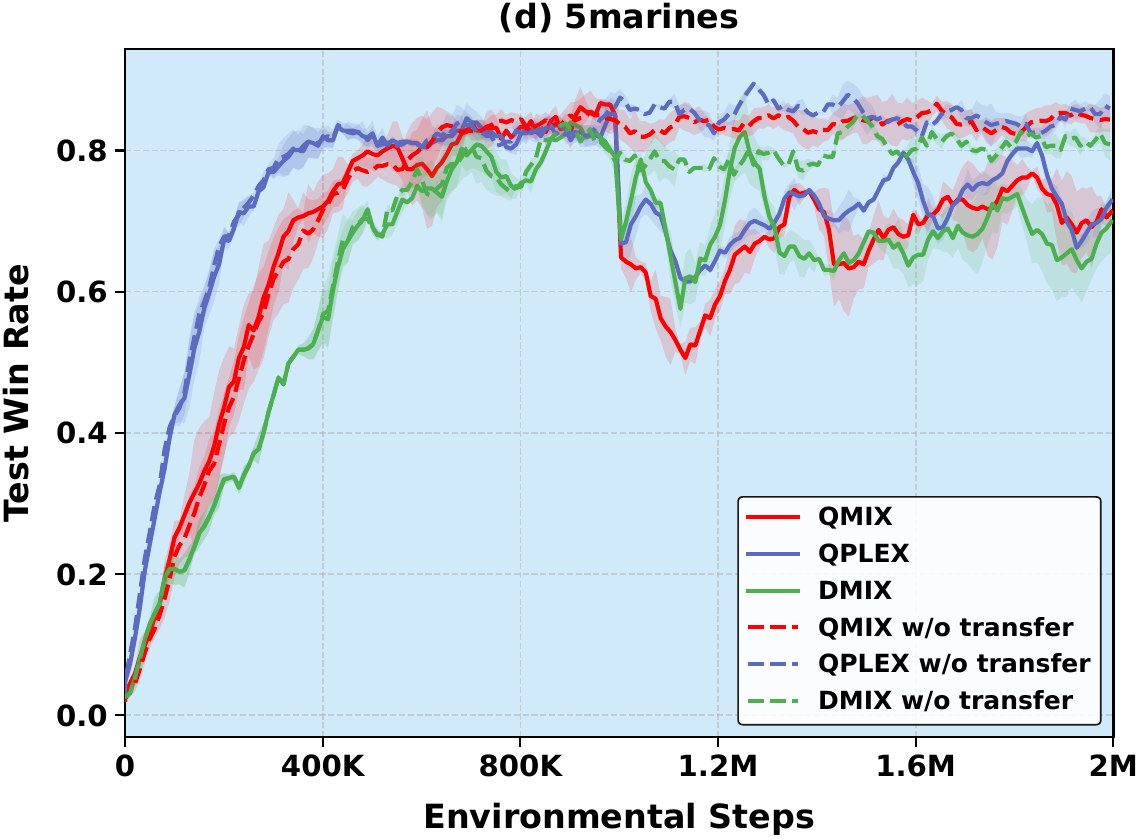} 
    
        \caption{Plasticity loss in modified SMACv2 with sequential tasks. (a) Overall win rate for all Protoss tasks, (b) Win rate for Protoss task 1, (c) Overall win rate for all Terran tasks, (d) Win rate for Terran task 1 }
	\label{app:fig:112311}
\end{figure*}

More examples of plasticity loss in MARL when facing new tasks are shown in the Figure~\ref{app:1}. After task switching, the neural network suffers a loss of plasticity, which leads to performance degradation.

\begin{figure*}[h]
	\centering
	
    \begin{minipage}[b]{\linewidth} % 第一行画图结果
		\centering
		\includegraphics[width=0.32\columnwidth]{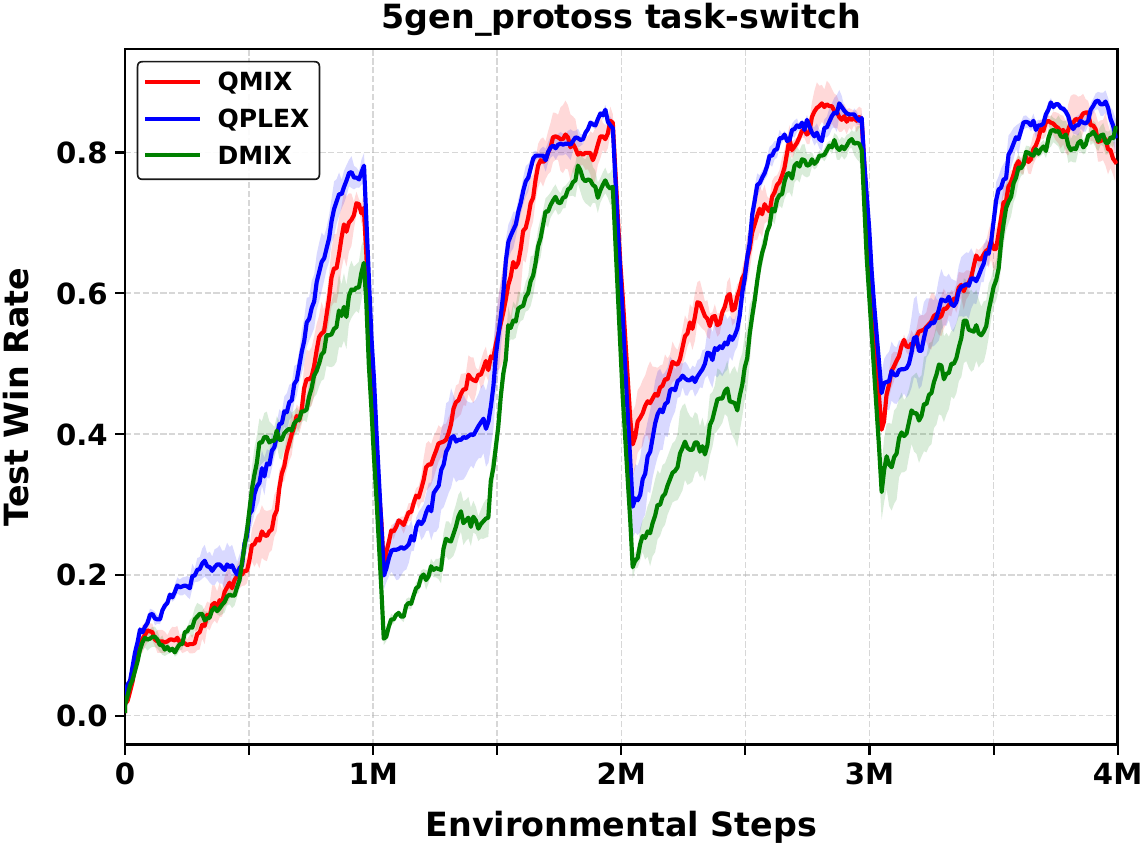} 		
		\includegraphics[width=0.32\columnwidth]{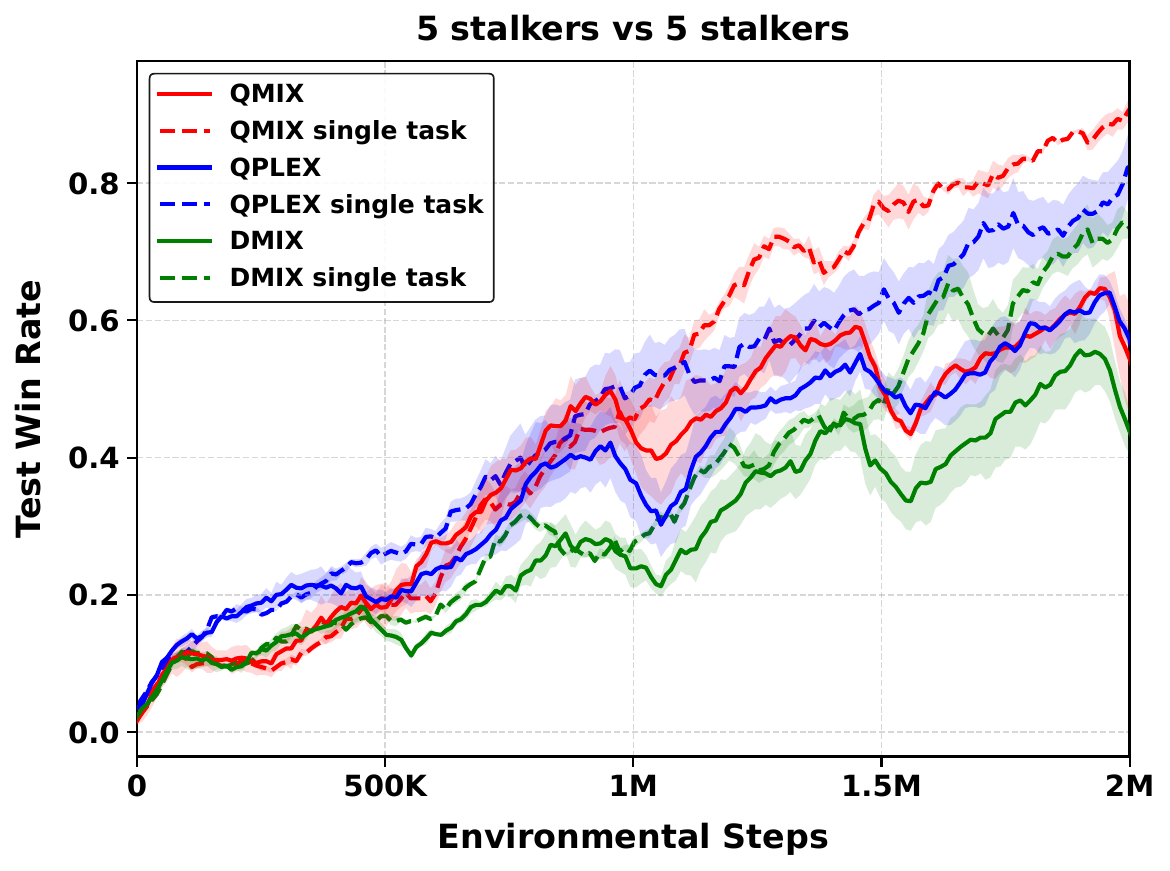} 	
		\includegraphics[width=0.32\columnwidth]{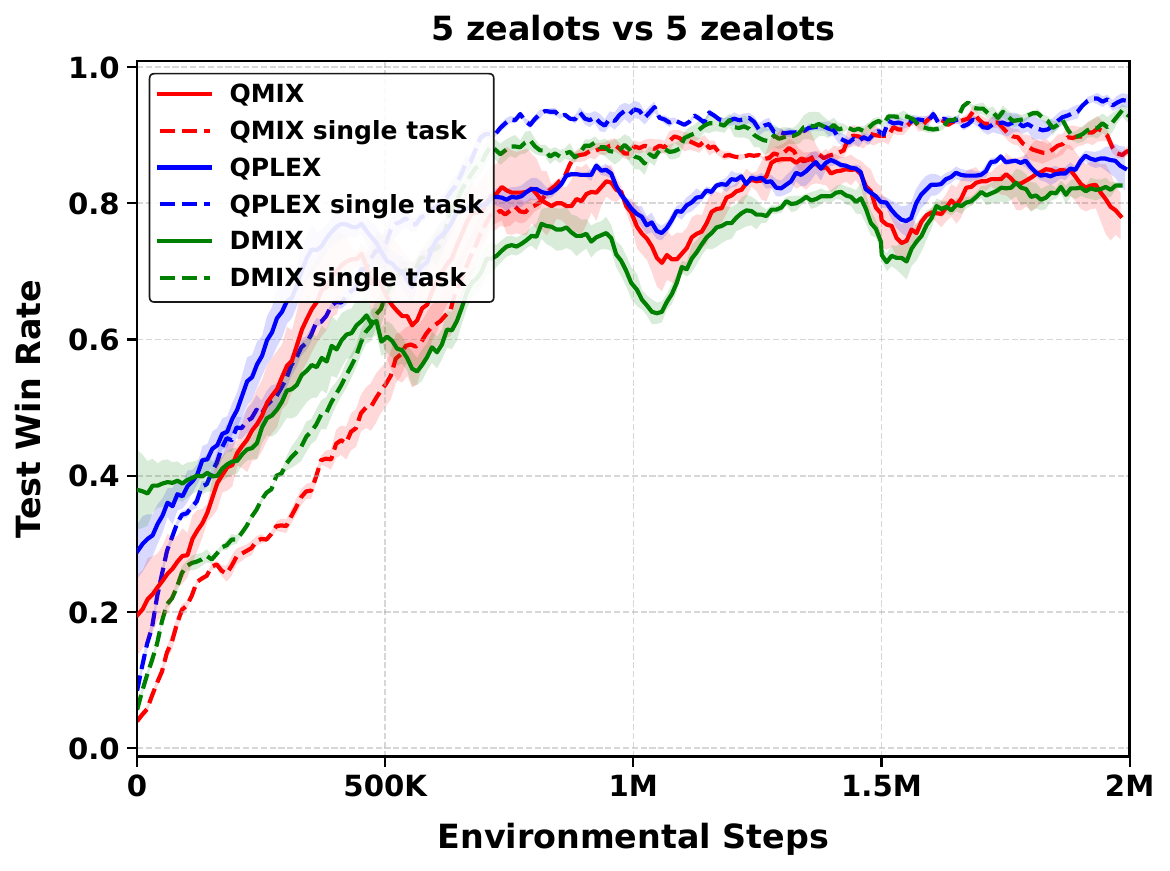}
	\end{minipage}
    \begin{minipage}[b]{\linewidth} % 第一行画图结果
		\centering
		\includegraphics[width=0.32\columnwidth]{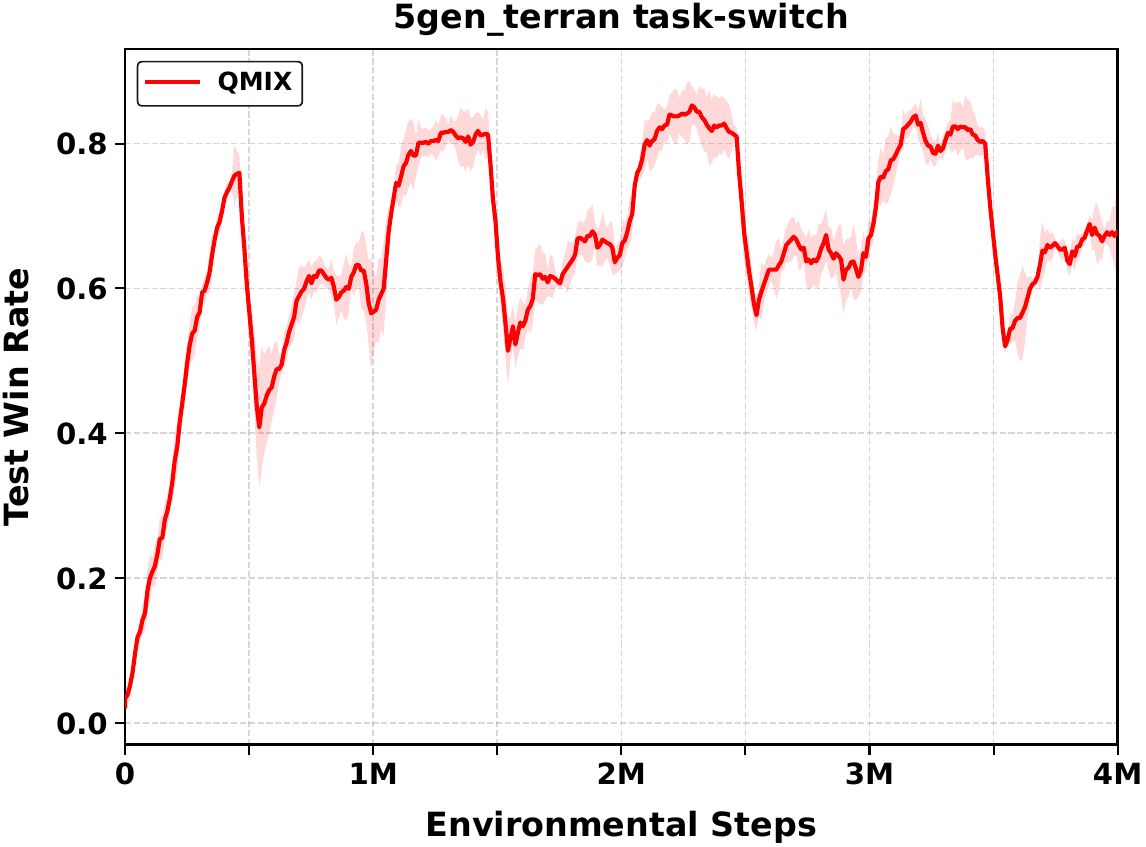} 		
		\includegraphics[width=0.32\columnwidth]{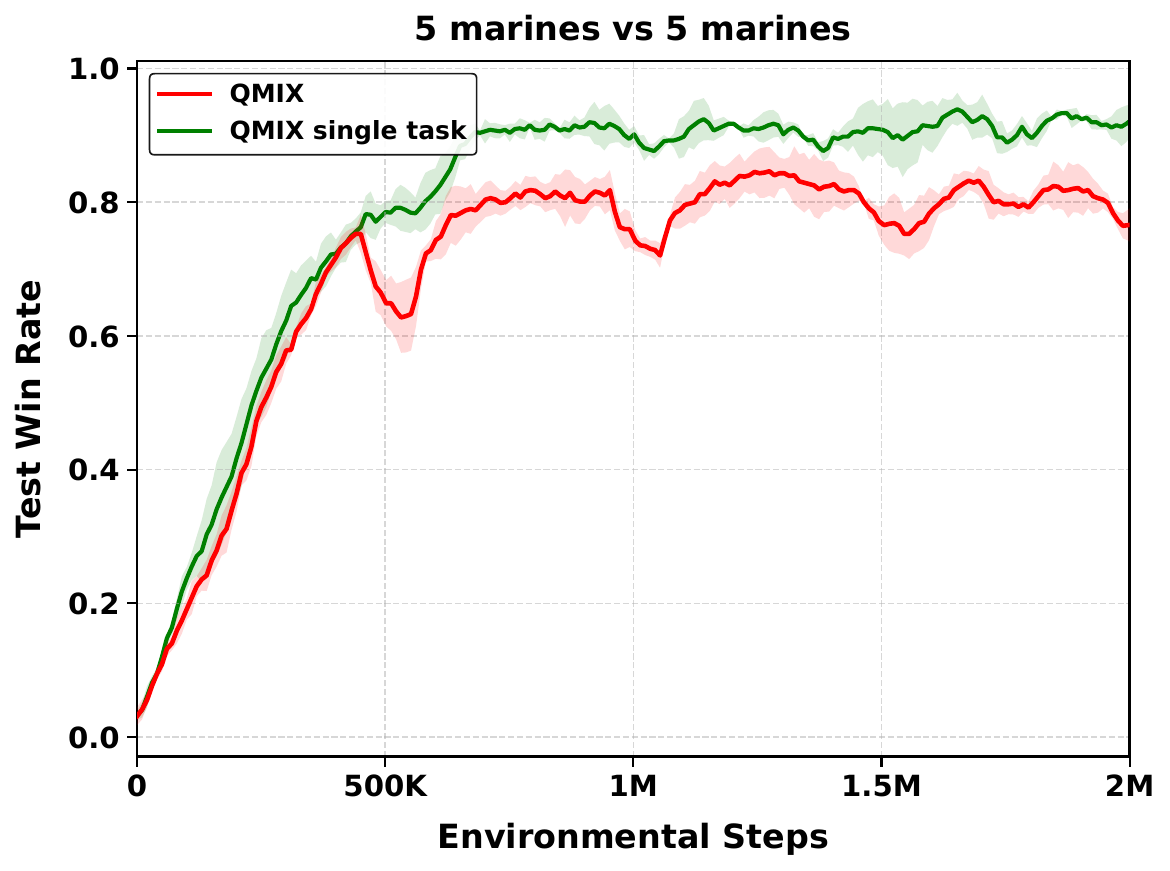} 	
		\includegraphics[width=0.32\columnwidth]{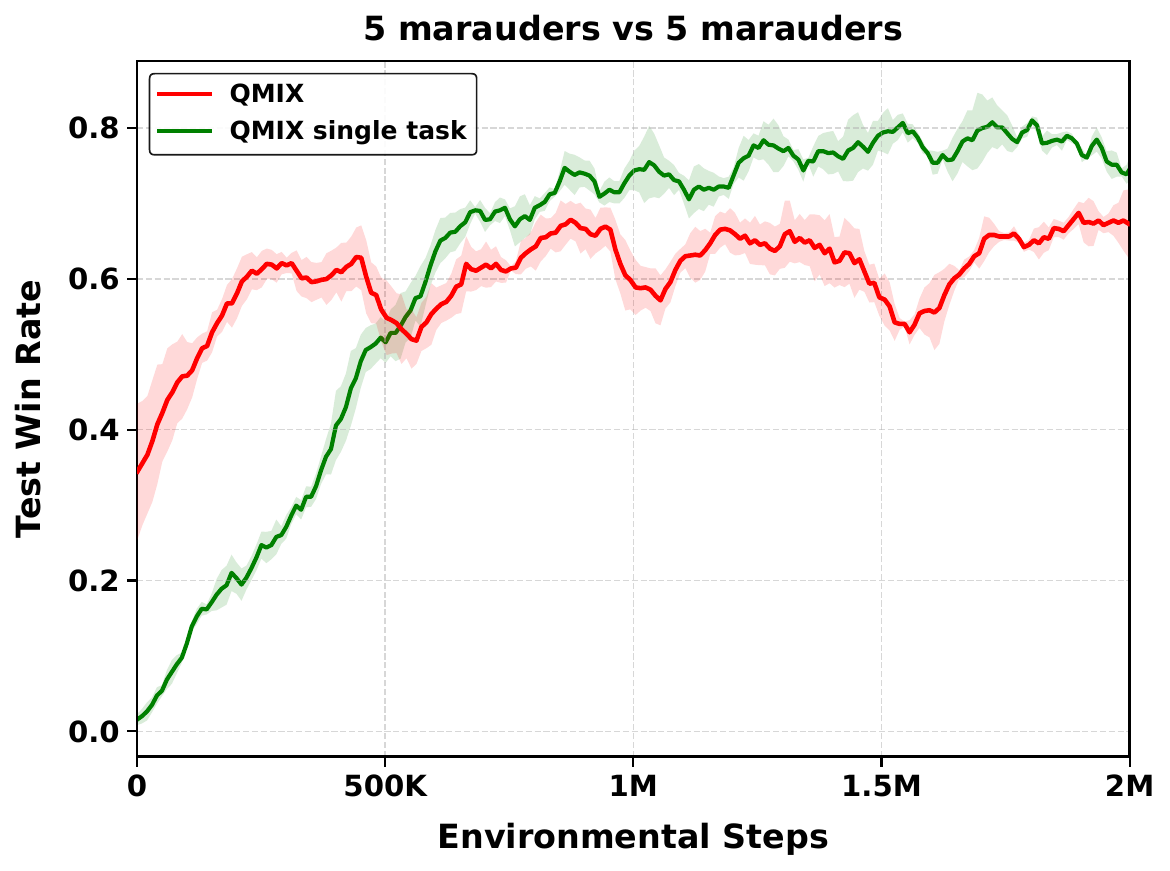}
	\end{minipage}
    \begin{minipage}[b]{\linewidth} % 第一行画图结果
		\centering
		\includegraphics[width=0.32\columnwidth]{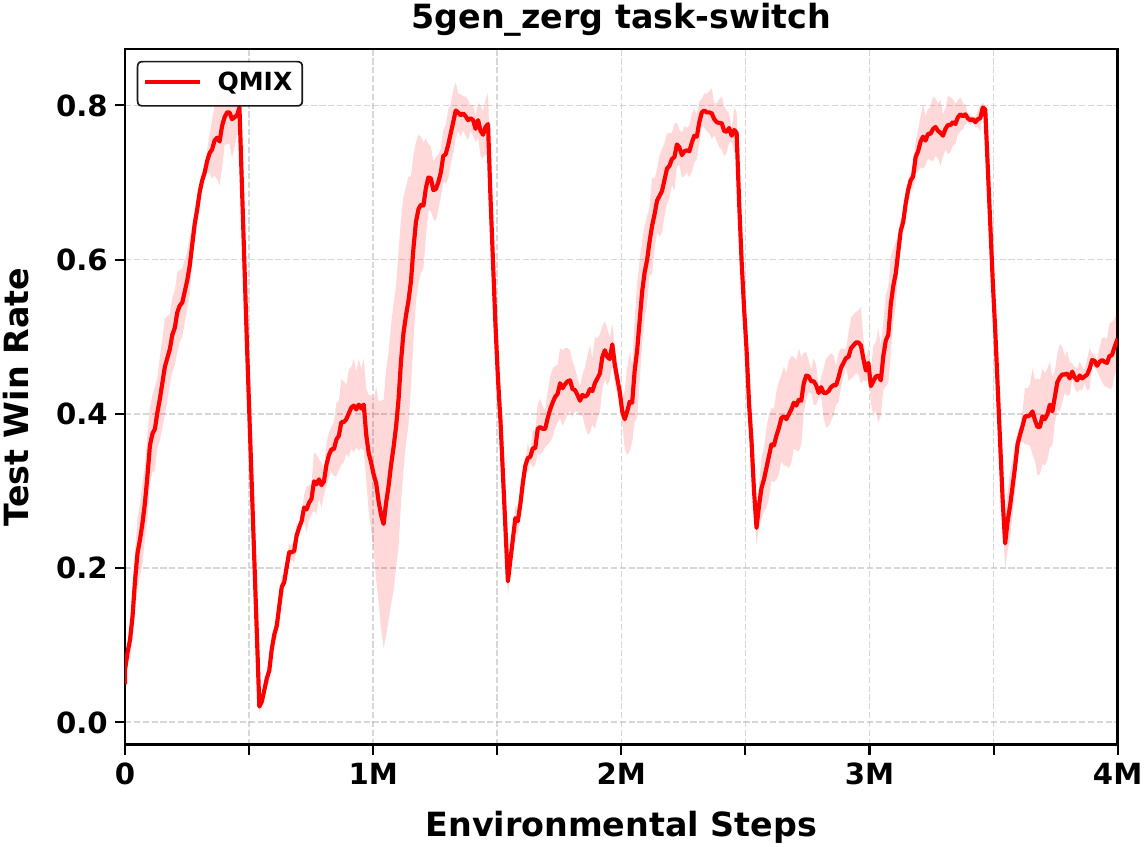} 		
		\includegraphics[width=0.32\columnwidth]{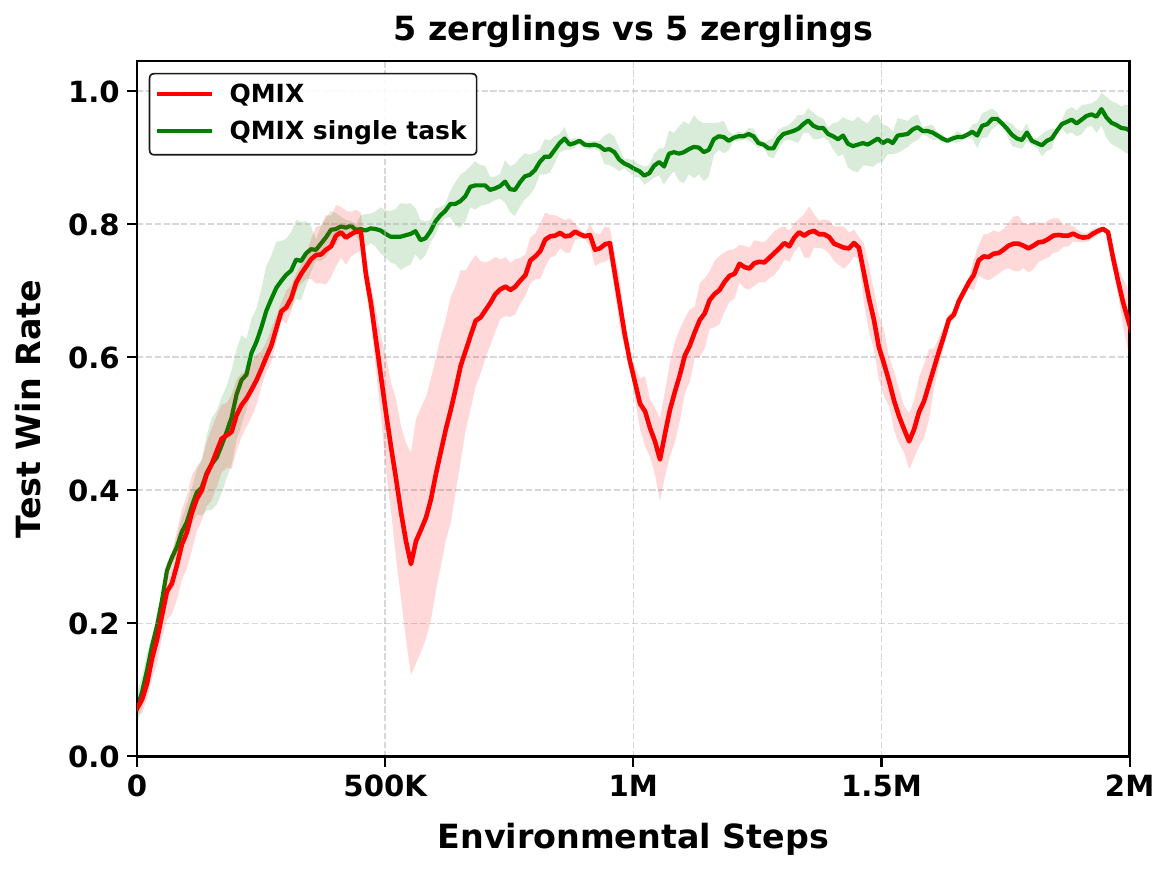} 	
		\includegraphics[width=0.32\columnwidth]{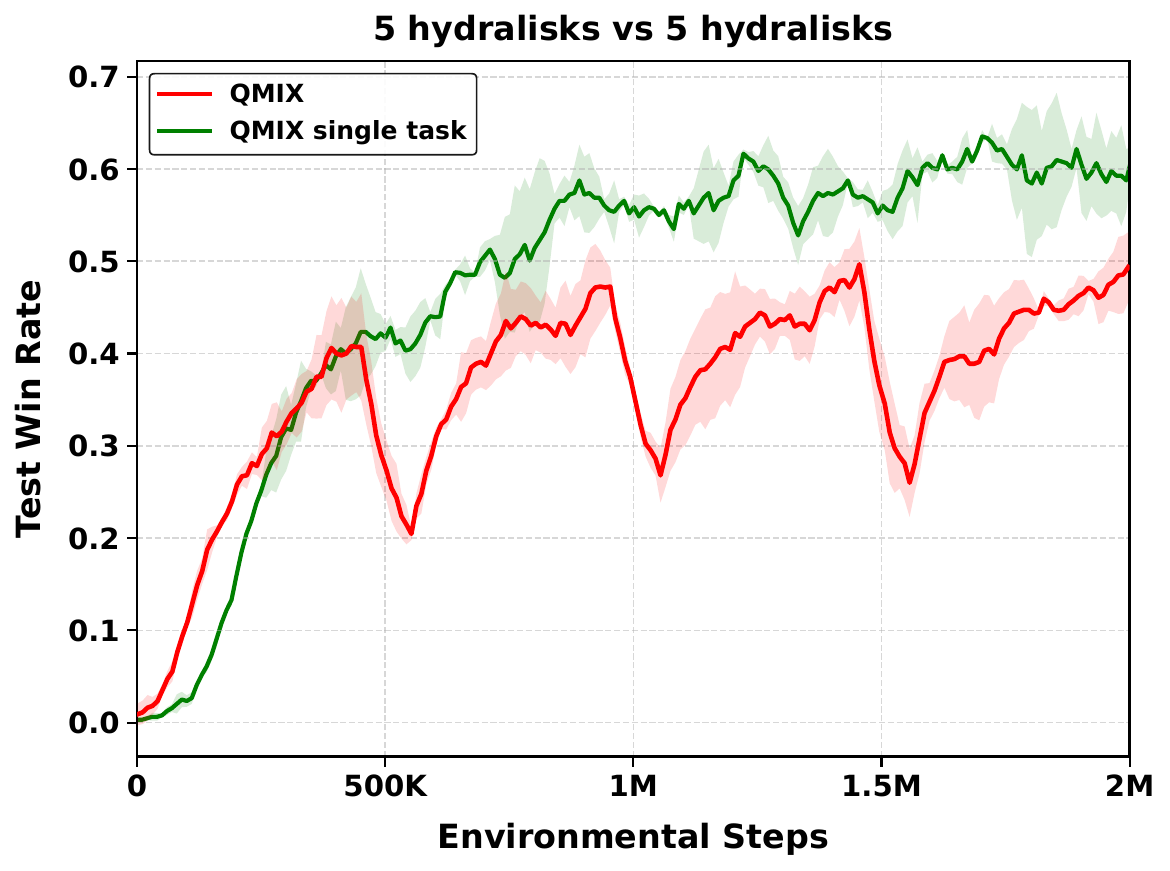}
	\end{minipage}

        \caption{The plasticity loss with new MARL tasks}
	\label{app:1}
\end{figure*}

\begin{figure*}[h]
	\centering

	\begin{minipage}[b]{\linewidth} % 第一行画图结果
		\centering
		\includegraphics[width=0.32\columnwidth]{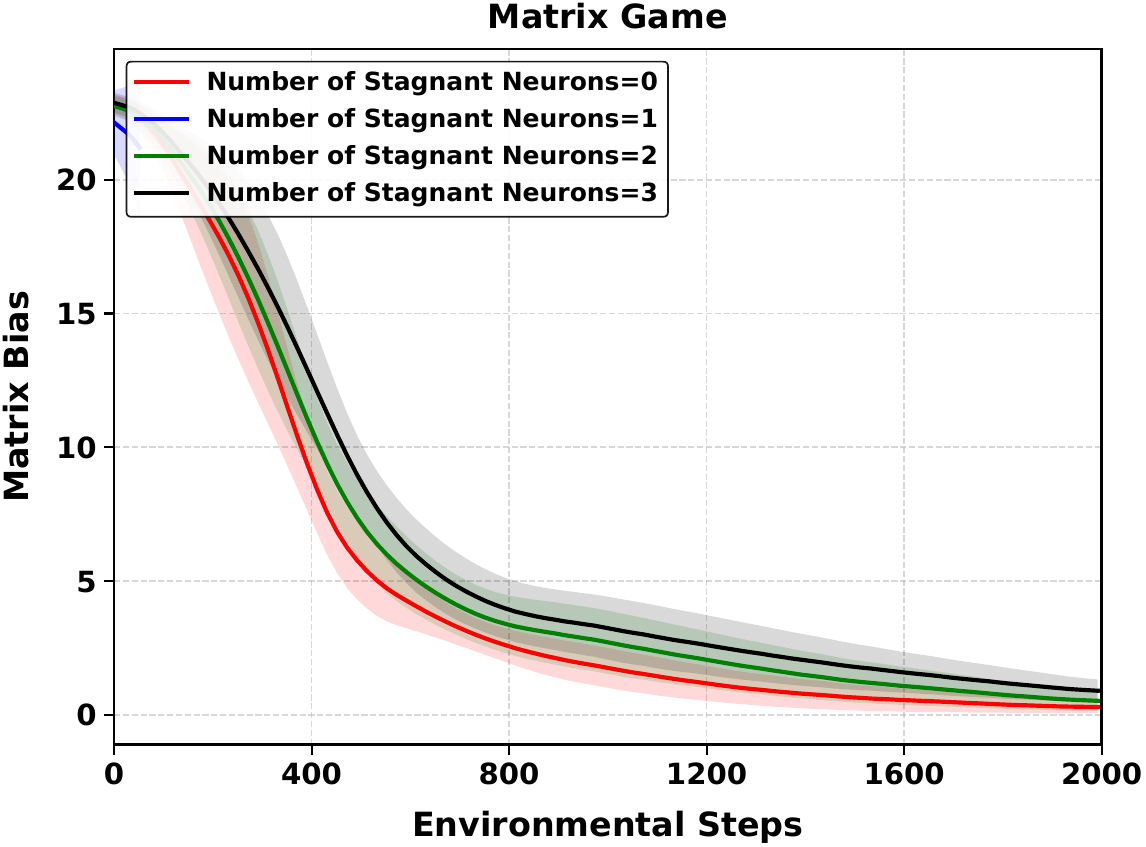} 		
		\includegraphics[width=0.32\columnwidth]{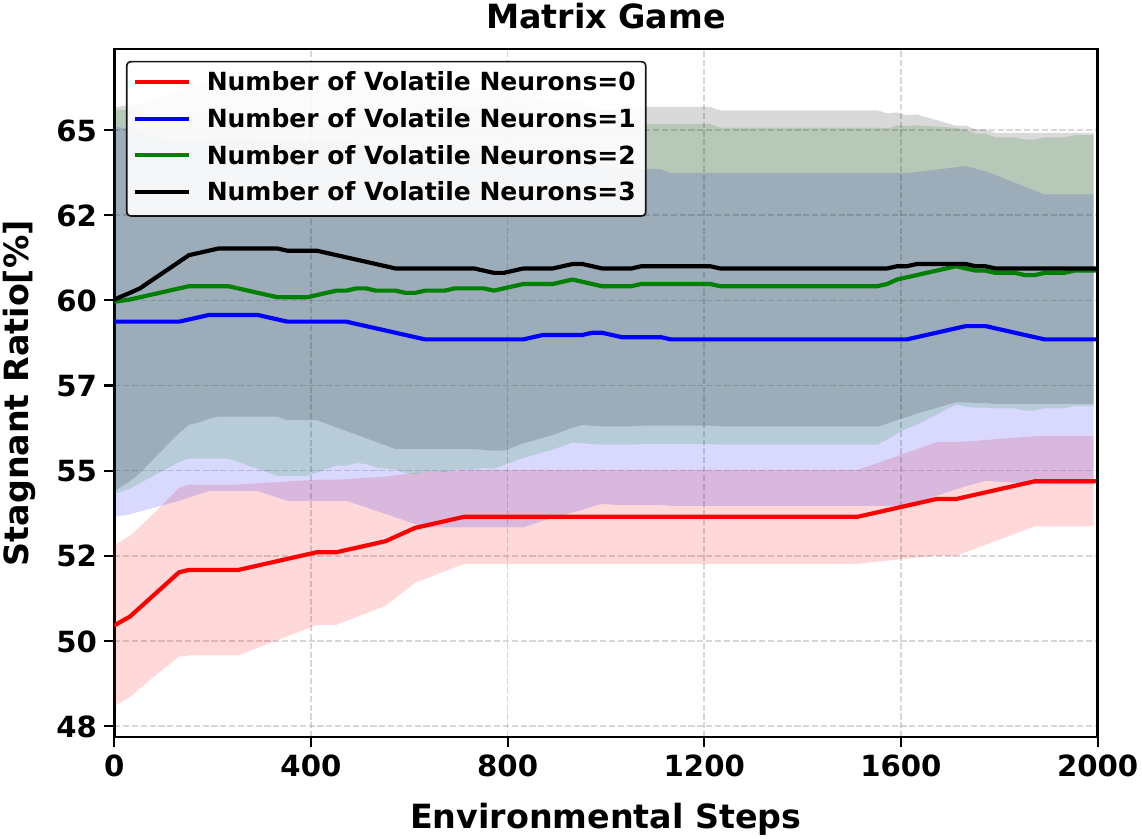} 	
		\includegraphics[width=0.32\columnwidth]{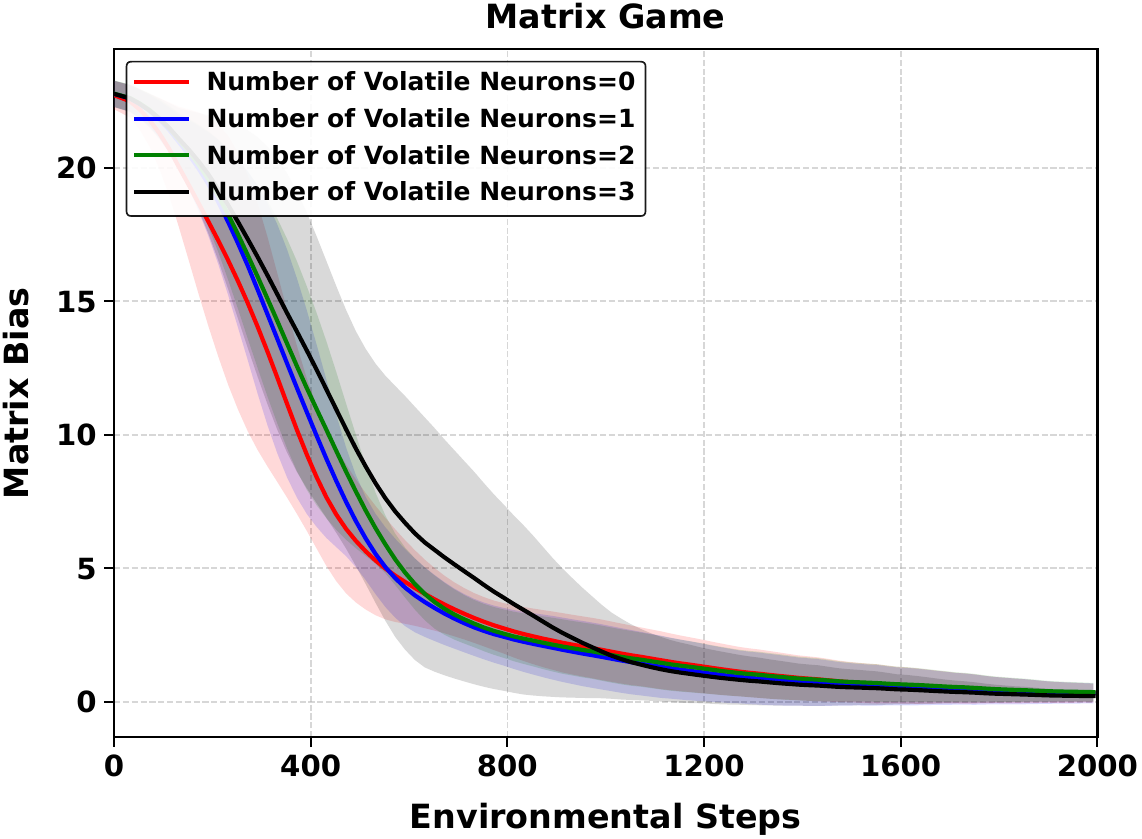}
	\end{minipage}

        \caption{The influence of stagnant and volatile neurons}
	\label{app:fig:influOfStag}
\end{figure*}
\paragraph{Expressiveness study: injecting stagnant/volatile neurons.}
\label{app:sec:influOfStag}

To study how stagnant neurons affect the expressive power of value factorization networks, we create controlled stagnant/volatile neurons by directly manipulating gradients.
Specifically, we select a subset of neurons in the mixer network and modify their backpropagated gradients during training.
For \textbf{stagnant neurons}, we freeze learning by masking their gradients to zero (i.e., gradient freezing).
For \textbf{volatile neurons}, we amplify their gradients by a factor of 100 before the optimizer update.
This intervention changes the effective update magnitude (and thus the RUA) without changing the forward computation graph.The detailed results are shown in the Figure~\ref{app:fig:influOfStag}. Increasing the number of volatile neurons leads to a rise in the number of lazy neurons, which may be an underlying cause of the performance degradation.

% \begin{figure}[th] %
% 	\centering
	
%         \begin{minipage}[b]{\linewidth} % 第一行画图结果
% 		\centering
        
%         \includegraphics[width=0.49\columnwidth]{ICML26/experiment/stagnant/[5.1]3s_vs_5z_All_Ratios.pdf} 		
%         \includegraphics[width=0.49\columnwidth]{ICML26/experiment/stagnant/[5.2]stag_hunt_All_Ratios.pdf} 
% 		% \includegraphics[width=0.32\columnwidth]{ICML26/experiment/smacv2/v6/[3.1]5gen_protoss_tasks_v6_500k_Ratio_Combined.pdf}
% 	\end{minipage}

%         \caption{The ratio of the Stagnant, Dormant, and GraMa neurons in (a) 3s\_vs\_5z of SMAC and (b) Predator-prey.}
% 	\label{stagnant:otherneurons}
%     \vspace{-0.3cm}
% \end{figure}

%%智能体网络的摆烂更多
\paragraph{The Distribution of Stagnant Neurons}
We find that stagnant neurons concentrate in the
mixing network in the Figure~\ref{app:fig:StagnantInMixing}. 

\begin{figure}[th] %
	\centering
	
        \includegraphics[width=0.49\columnwidth]{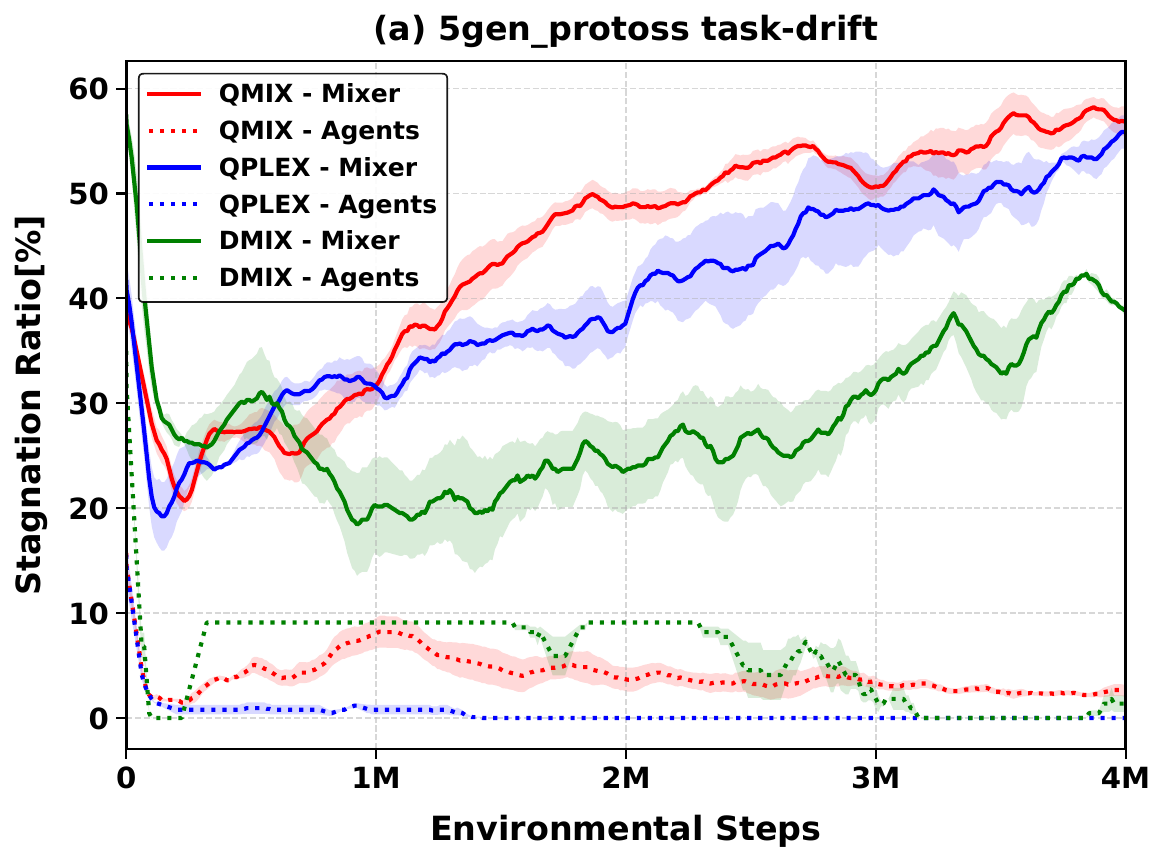}  \includegraphics[width=0.49\columnwidth]{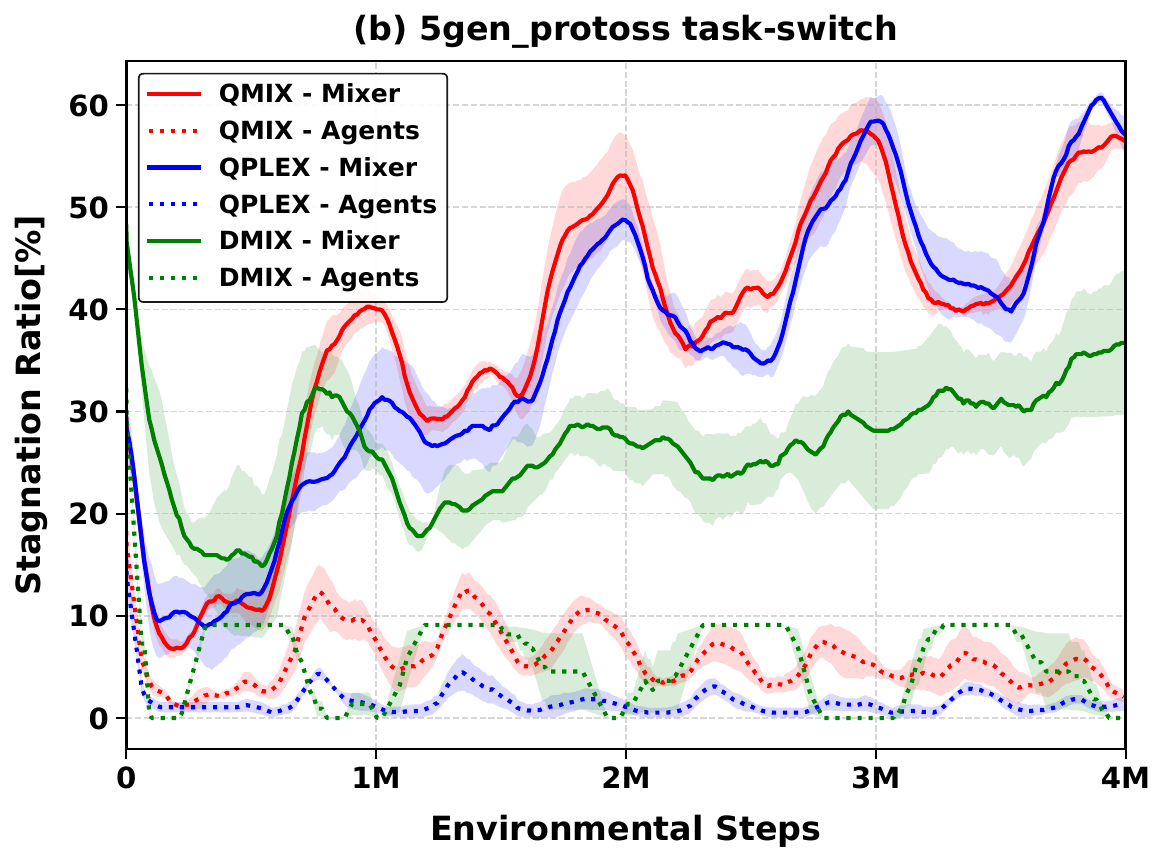}
        \includegraphics[width=0.49\columnwidth]{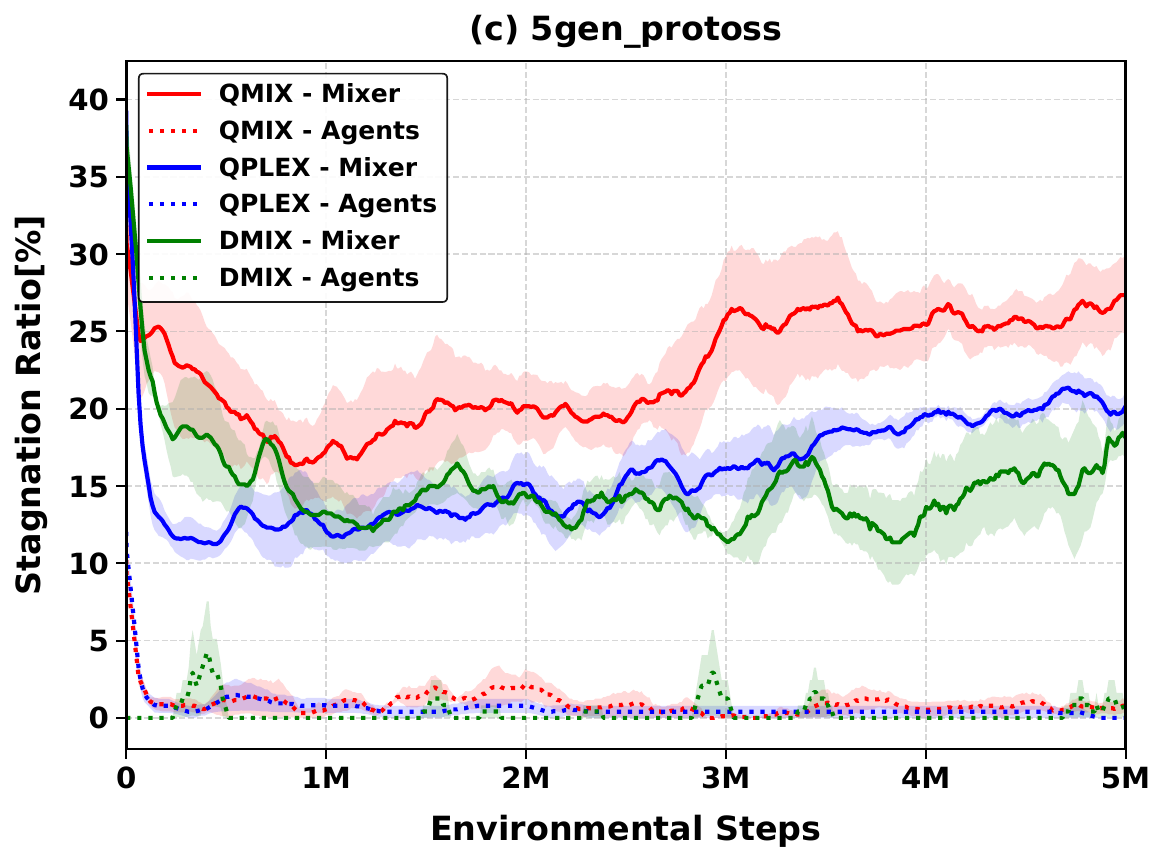} 
		\includegraphics[width=0.49\columnwidth]{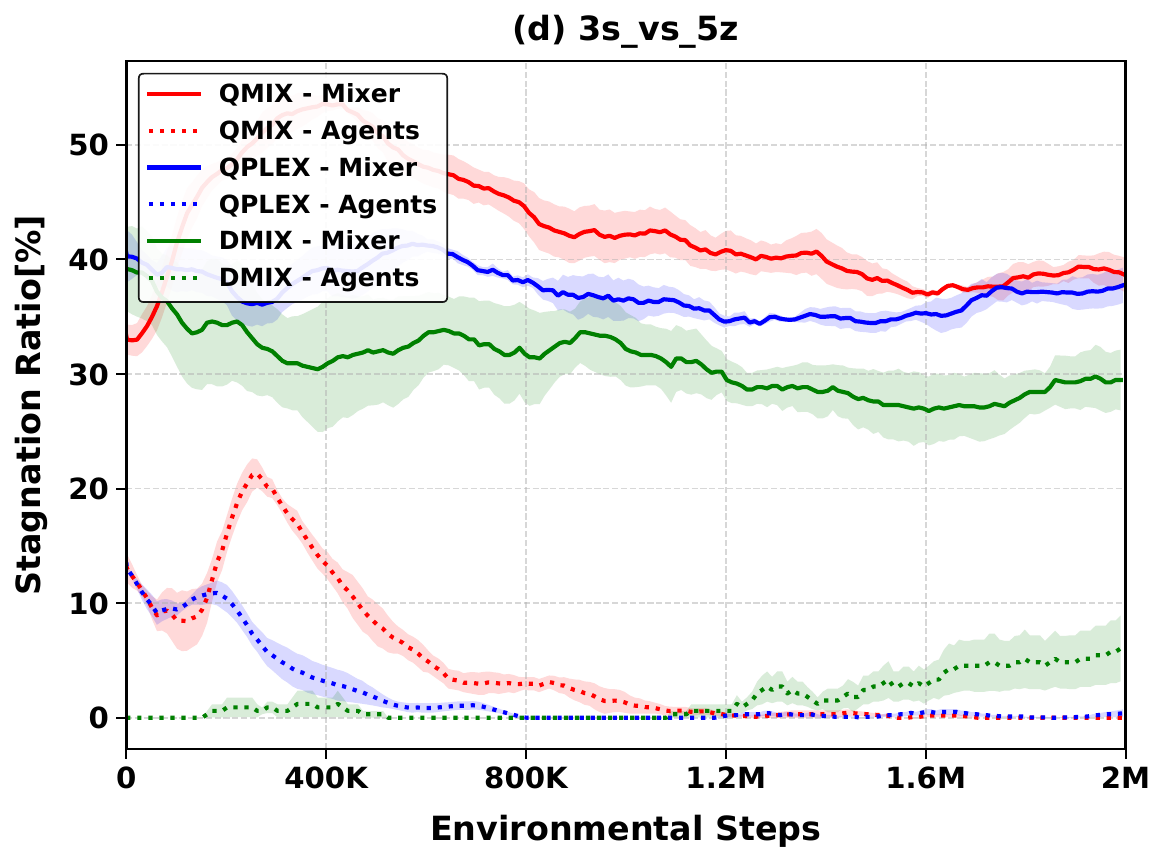}
%	\end{minipage}
        \caption{Stagnant neurons are prevalent across environments, and are consistently more concentrated in the mixer network than in the agent networks: (a)SMACv2 5 vs 5 protoss task, (b) SMAC 3s\_vs\_5z.}
	\label{app:fig:StagnantInMixing}
	\vspace{-0.3cm}
\end{figure}

% \subsubsection{Target Update Interval and Task Update Interval}
\paragraph{Target Update Interval and Task Update Interval.}
We investigate how \textbf{target non-stationary} amplifies stagnation. We consider two
practical sources: \emph{implicit drift} from TD bootstrapping (controlled by the target-network update interval) and
\emph{explicit shifts} from task switching (controlled by the switching period). In Figure~\ref{app:fig:target_non_stationary} (a), the stagnant ratio under different target-network update interval are shown: an intermediate interval yields the lowest stagnant ratio ($200$ steps), followed by $50$, whereas very frequent updates ($10$) lead to substantially higher stagnation. This indicates that even without task switches, the implicit non-stationary introduced by TD learning can drive the accumulation of low-plasticity units (stagnant neurons). In Figure~\ref{app:fig:target_non_stationary} (b), we vary the task switch periods. It shows that more frequent switching leads to faster growth and a higher stagnant ratio, demonstrating that frequent explicit task changes can exacerbate stagnation.

We study whether the detected stagnant/volatile neurons persist across time. To this end, at each iteration (every time interval), we calculate the overlap coefficient between the current and previous iterations for stagnant/volatile neurons. As shown in Figure~\ref{app:fig:overlap}, there are large overlaps between stagnant neurons across iterations, as well as the volatile neurons. This indicates that once a neuron becomes stagnant/volatile, it will remain the same type of neuron over time with high probability. As we show that these neurons can affect plasticity, \emph{a surgical intervention on these neurons is needed to alleviate the plasticity loss problem.}

\begin{figure}[!th]
	\centering
	
    \begin{minipage}[b]{\linewidth} % 第一行画图结果
		\centering
        
        \includegraphics[width=0.49\columnwidth]{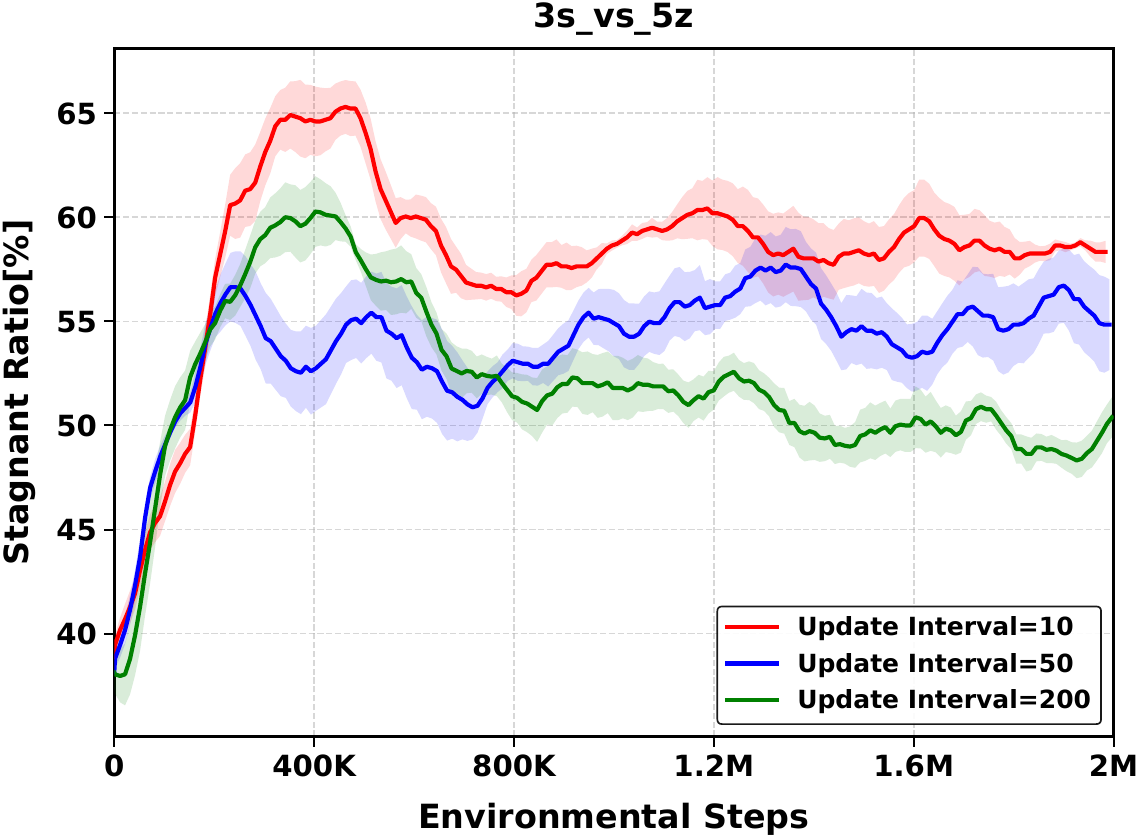} 
        \includegraphics[width=0.49\columnwidth]{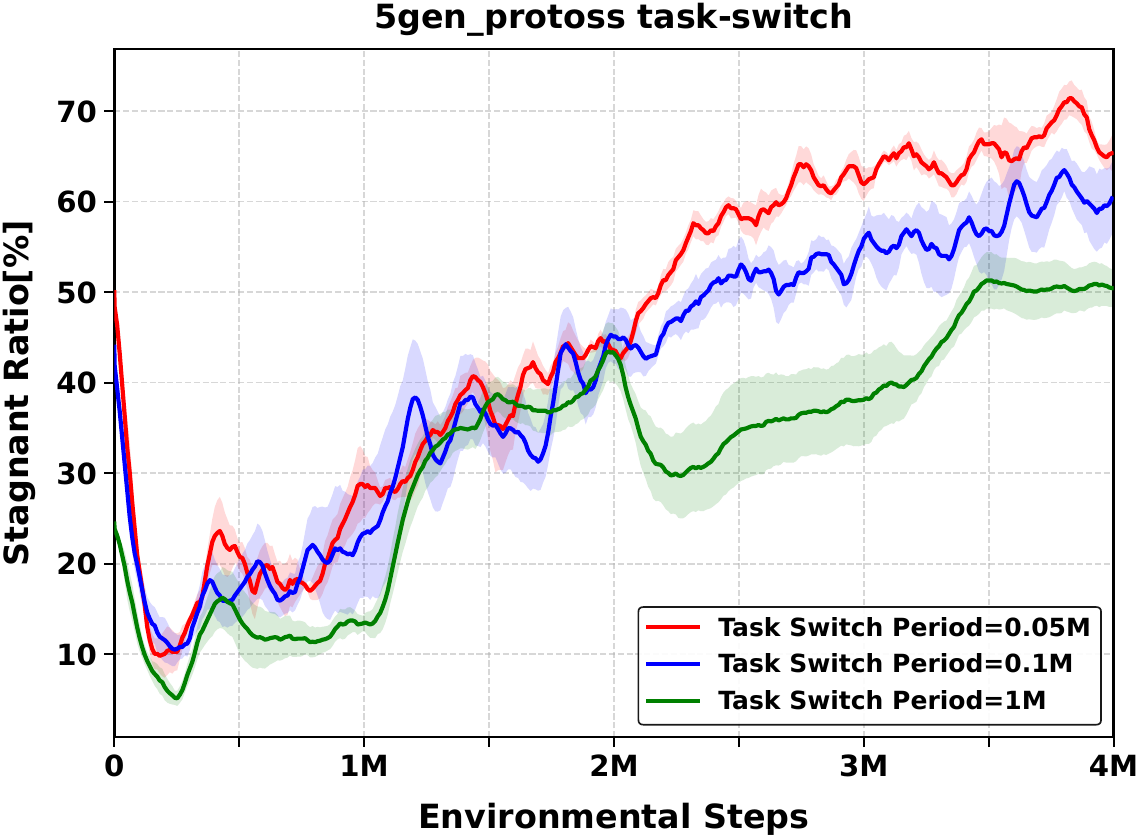}

	\end{minipage}
        \caption{Stagnant ratio under different (left) target network update interval (right) task switching period.}
	\label{app:fig:target_non_stationary}
    \begin{minipage}[b]{\linewidth} % 第一行画图结果
		\centering
        
         \includegraphics[width=0.32\columnwidth]{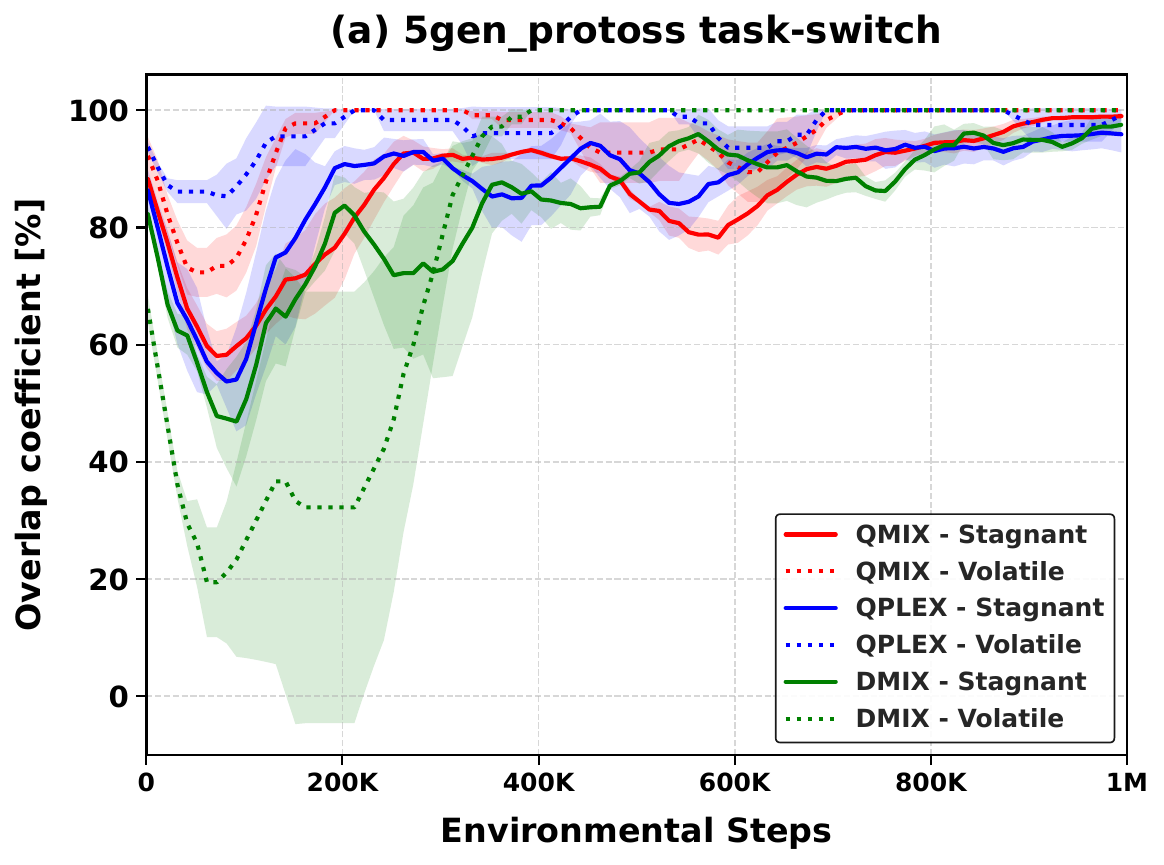} 
        \includegraphics[width=0.32\columnwidth]{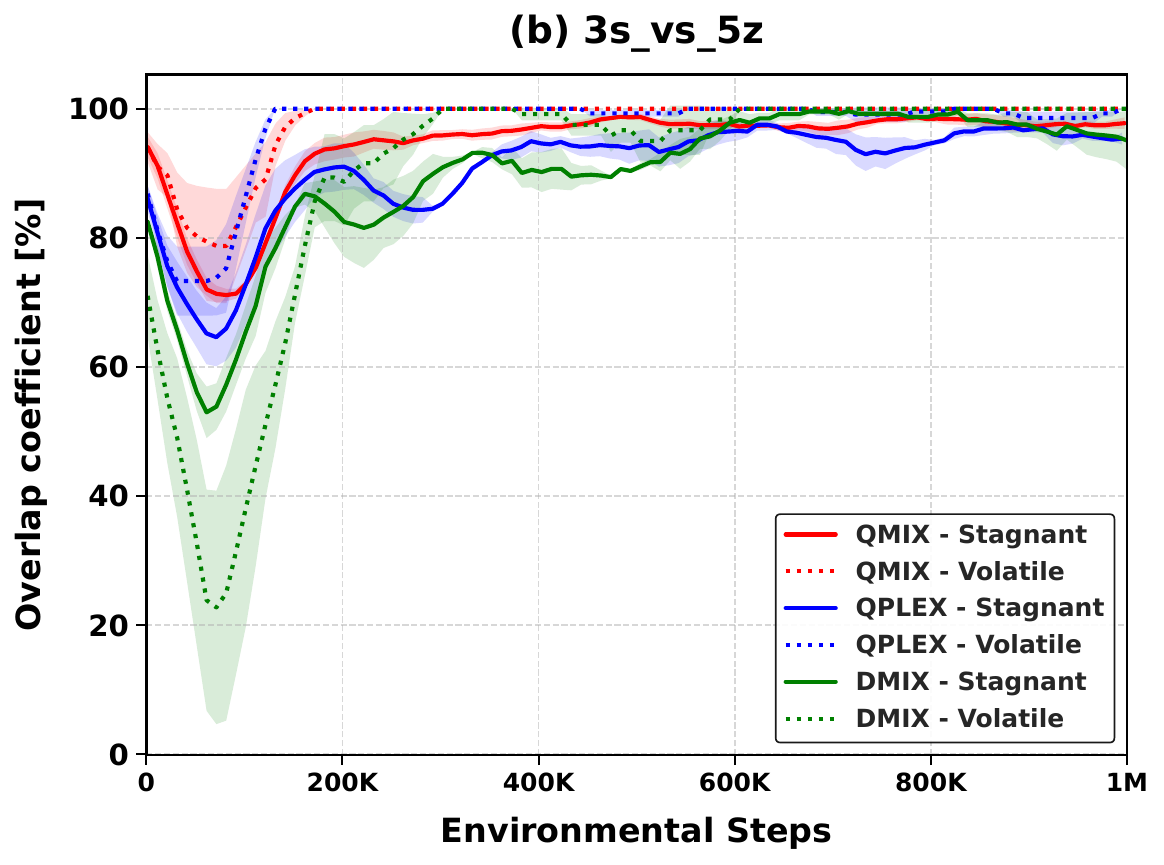} 		
        \includegraphics[width=0.32\columnwidth]{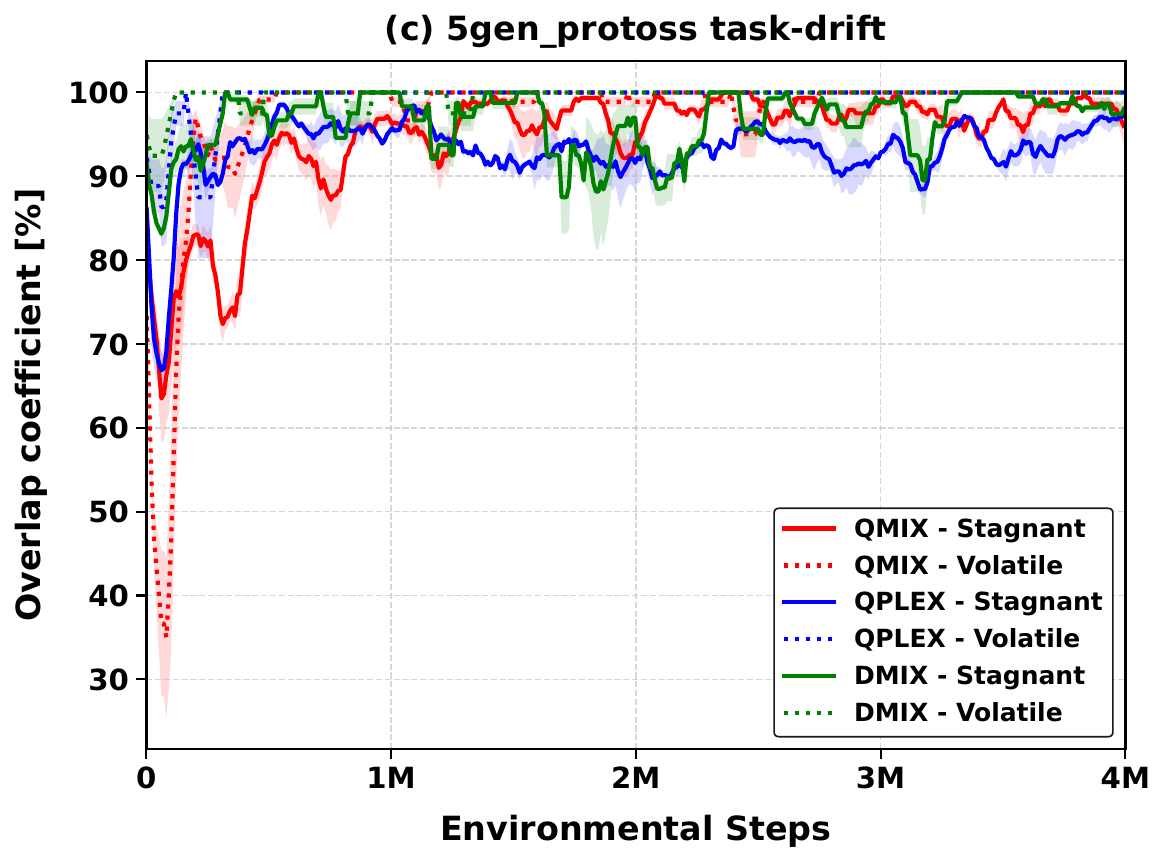} 
	\end{minipage}
        \caption{Overlap coefficient stagnant/volatile Neurons between the current iteration and the previous iteration: (a) SMACv2 5 gen\_protoss task-switch and (b) SMAC 3s\_vs\_5z and (right) SMACv2 5 gen\_protoss task-drift }
	\label{app:fig:overlap}
\end{figure}

\subsection{KNIFE Performs Better than other Plasticity Injection Methods for MARL Tasks}

In the One-step Matrix Game with Sequential tasks at Figure~\ref{app:fig:112311}, KNIFE can solve the stagnant phenomenon and get the lowest matrix bias.

In the one-step matrix game with cycling tasks at Figure~\ref{app:matrix_periodic_intro} and Figure~\ref{app:fig:matrix_periodic}, KNIFE mitigates plasticity loss and enhances the learning capacity for different tasks. The detailed performance on different tasks is shown in Figure~\ref{app:matrix_periodic_single}.

The detailed data for other benchmarks are shown in Figure~\ref{app:bench1} and Figure~\ref{app:bench2}.

\begin{figure}[!th]
	\centering
%	\begin{minipage}[b]{0.49\linewidth} % 第一行画图结果 一个现象，两个指标
%		\centering
		\includegraphics[width=0.32\columnwidth]{ICML26/experiment/stagnant/figure_1.1_multi_matrix_game_matrix_bias.pdf} 
        \includegraphics[width=0.32\columnwidth]{ICML26/experiment/stagnant/figure_1.2_multi_matrix_game_raw_grad_avg.pdf} 		
        \includegraphics[width=0.32\columnwidth]{ICML26/experiment/stagnant/figure_1.3_multi_matrix_game_raw_weight_avg.pdf} 		
  %       \includegraphics[width=0.49\columnwidth]{ICML26/experiment/stagnant/[1.4]qmix_multi_matrix_game_matrix_bias_task0.pdf}
		% \includegraphics[width=0.49\columnwidth]{ICML26/experiment/stagnant/[1.5]multi_matrix_game_matrix_bias_task1.pdf} 	
        % \includegraphics[width=0.49\columnwidth]{ICML26/experiment/stagnant/[1.3]multi_matrix_game_matrix_bias_task1.pdf}
%	\end{minipage}
        \caption{Two agents sequentially learn 5 different tasks, and then they go back to Task 1:
        (a) the matrix approximation bias (b) the average gradient norm (c) the average weight norm.}
  %        \caption{Two agents learns a sequence of 4 new tasks periodically. (a) Four one-step matrix games, the matrix approximation bias under all tasks across cycles for (b) QMIX and (c) QPLEX (d) Bias for QMIX under Task 1 across cycles.}
	\label{app:stagnant:matrix}
	\vspace{-0.3cm}
\end{figure}

\begin{figure*}[!t]
	\begin{minipage}[!t]{\linewidth} % 第一行画图结果
		\centering
        
    \includegraphics[width=0.80\columnwidth]{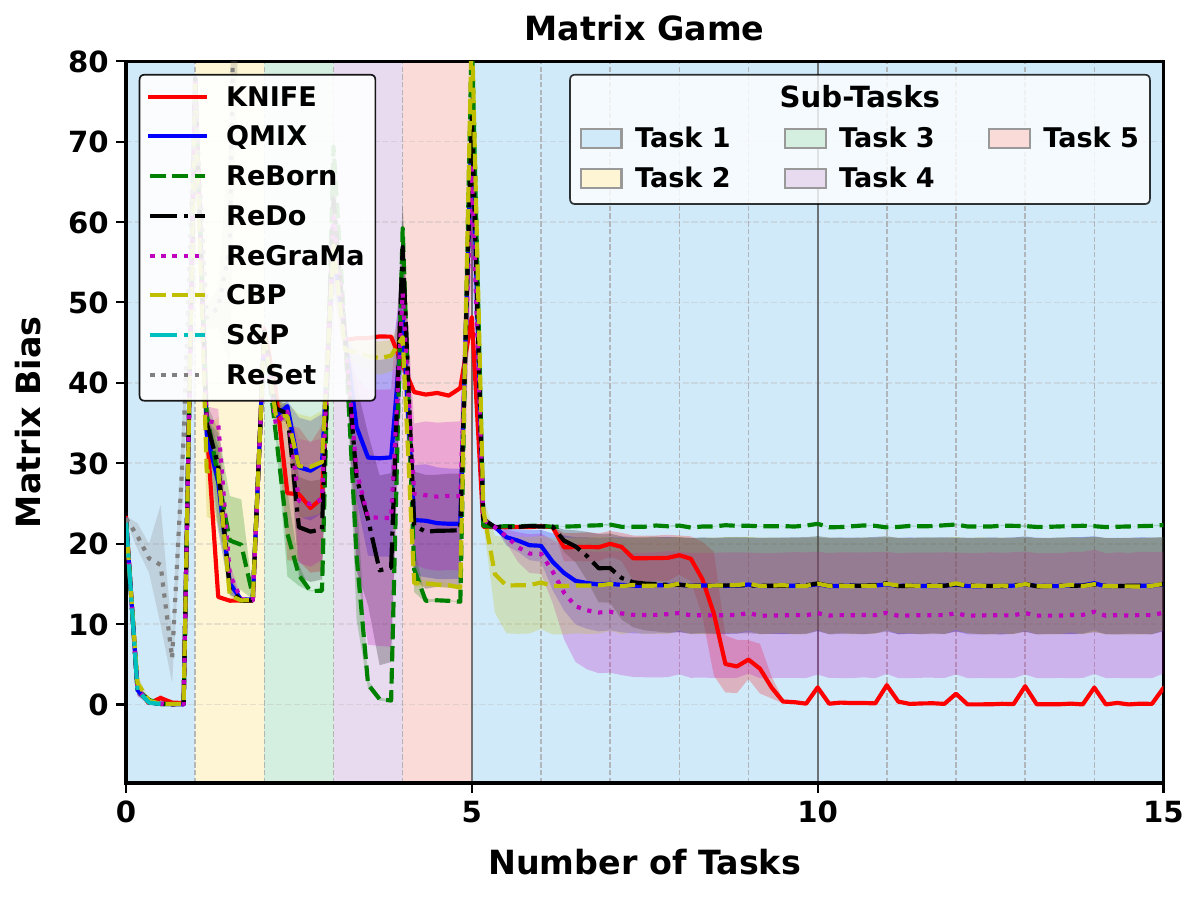} 

	\end{minipage}
            % \caption{(a) The bias for one-step payoff matrix, (b) return for Predator Prey, and (c) win rate for 3s\_vs\_5z of SMAC}
            \caption{KNIFE performs better than other plasticity injection methods in the one-step matrix game sequential tasks } 
    \label{app:fig:matrix123451}
\end{figure*}   

\begin{figure*}[!t]
	\begin{minipage}[!t]{\linewidth} % 第一行画图结果
		\centering
    
    \includegraphics[width=0.49\columnwidth]{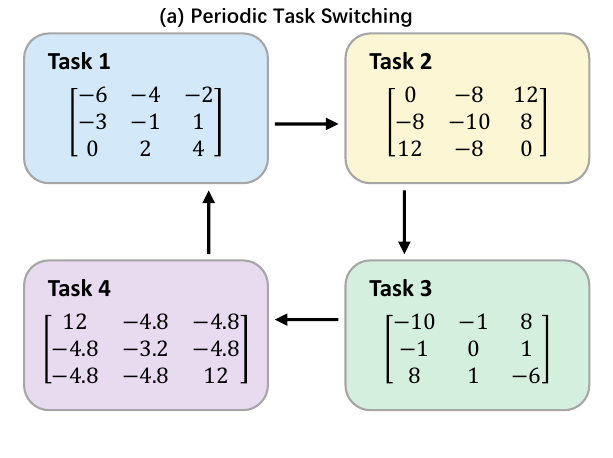} 
    \includegraphics[width=0.49\columnwidth]{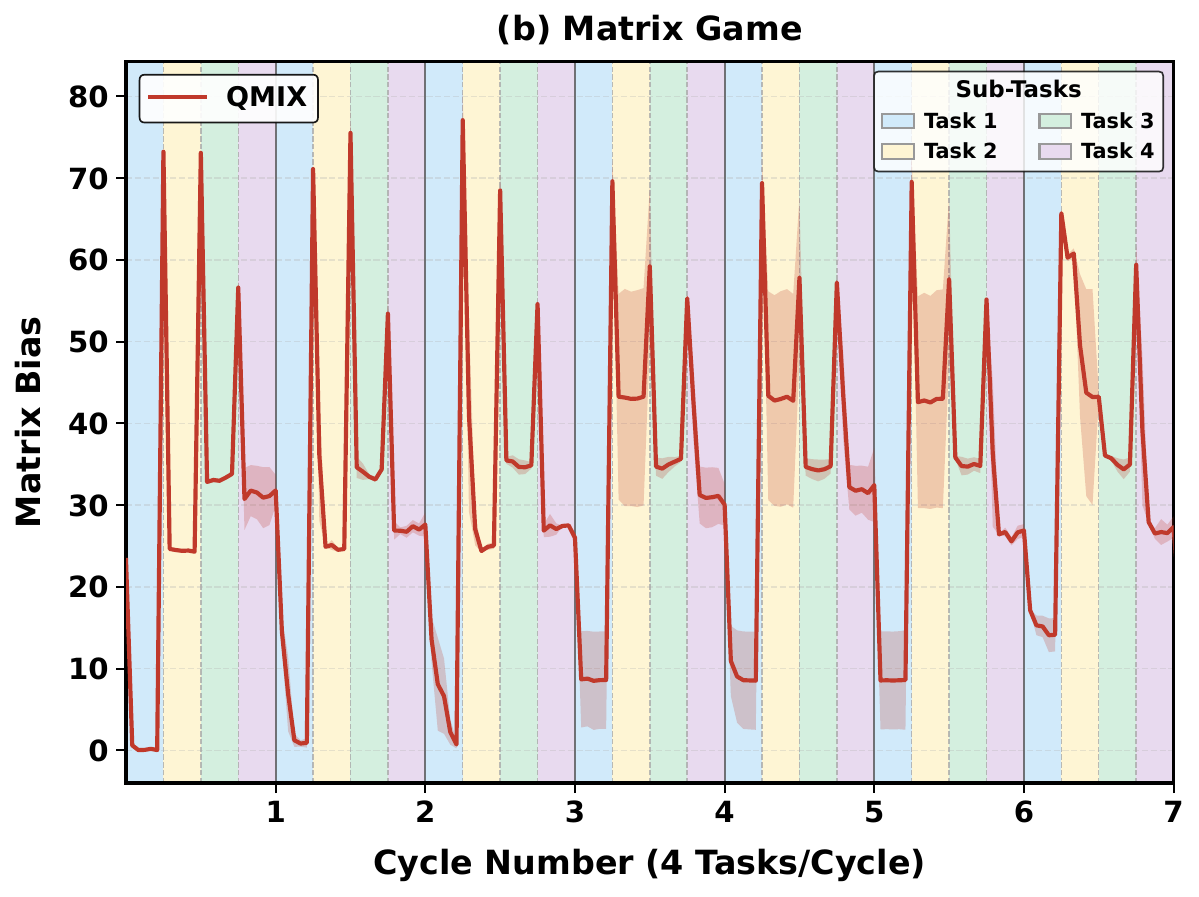} 

	\end{minipage}
            % \caption{(a) The bias for one-step payoff matrix, (b) return for Predator Prey, and (c) win rate for 3s\_vs\_5z of SMAC}
            \caption{The one-step matrix game Cyclic Setting) } 
    \label{app:matrix_periodic_intro}
\end{figure*}

\begin{figure*}[!t]
	\begin{minipage}[!t]{\linewidth} % 第一行画图结果
		\centering

    \includegraphics[width=0.8\columnwidth]{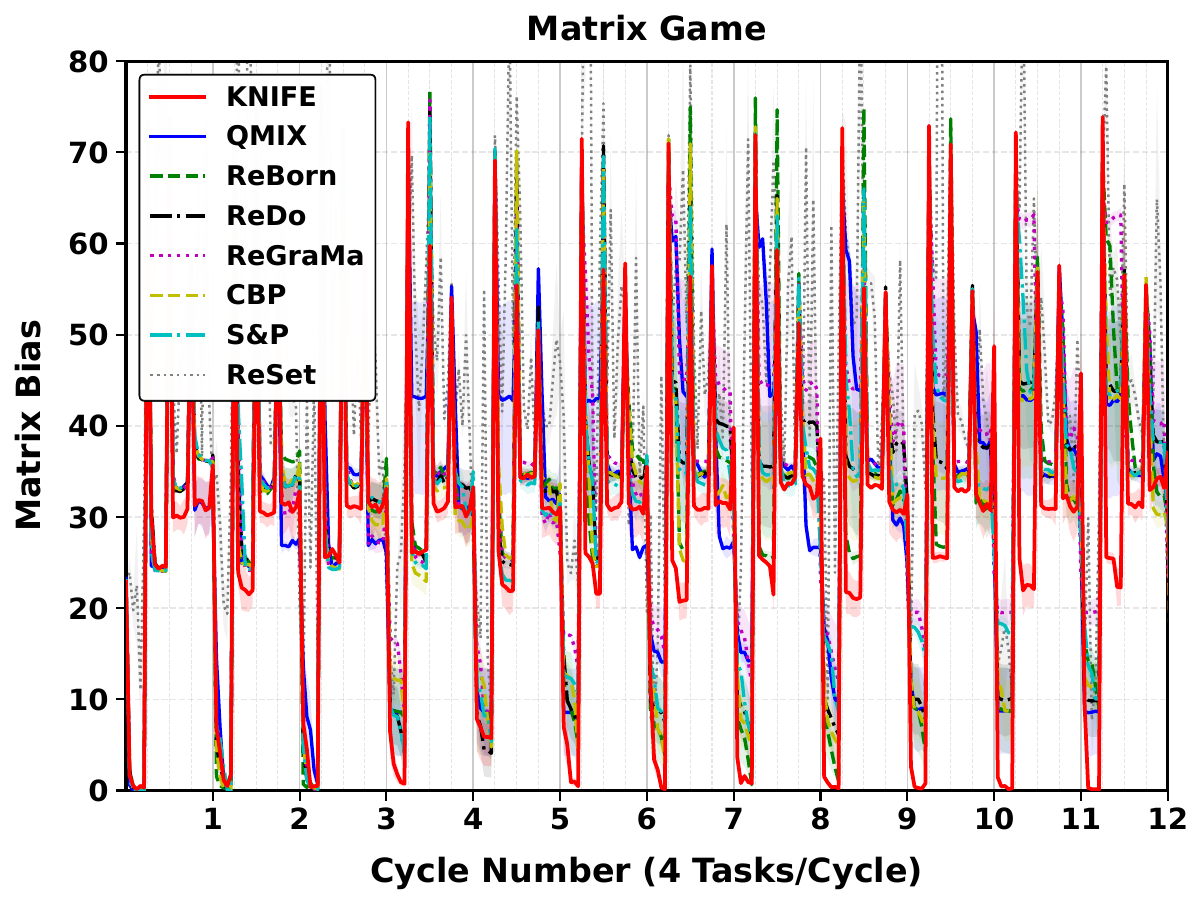} 

	\end{minipage}
            % \caption{(a) The bias for one-step payoff matrix, (b) return for Predator Prey, and (c) win rate for 3s\_vs\_5z of SMAC}
            \caption{KNIFE performs better than other plasticity injection methods in the one-step matrix game cyclic setting)} 
    \label{app:fig:matrix_periodic}
\end{figure*}   

\begin{figure*}[!t]
	\begin{minipage}[!t]{\linewidth} % 第一行画图结果
		\centering
    \includegraphics[width=0.32\columnwidth]{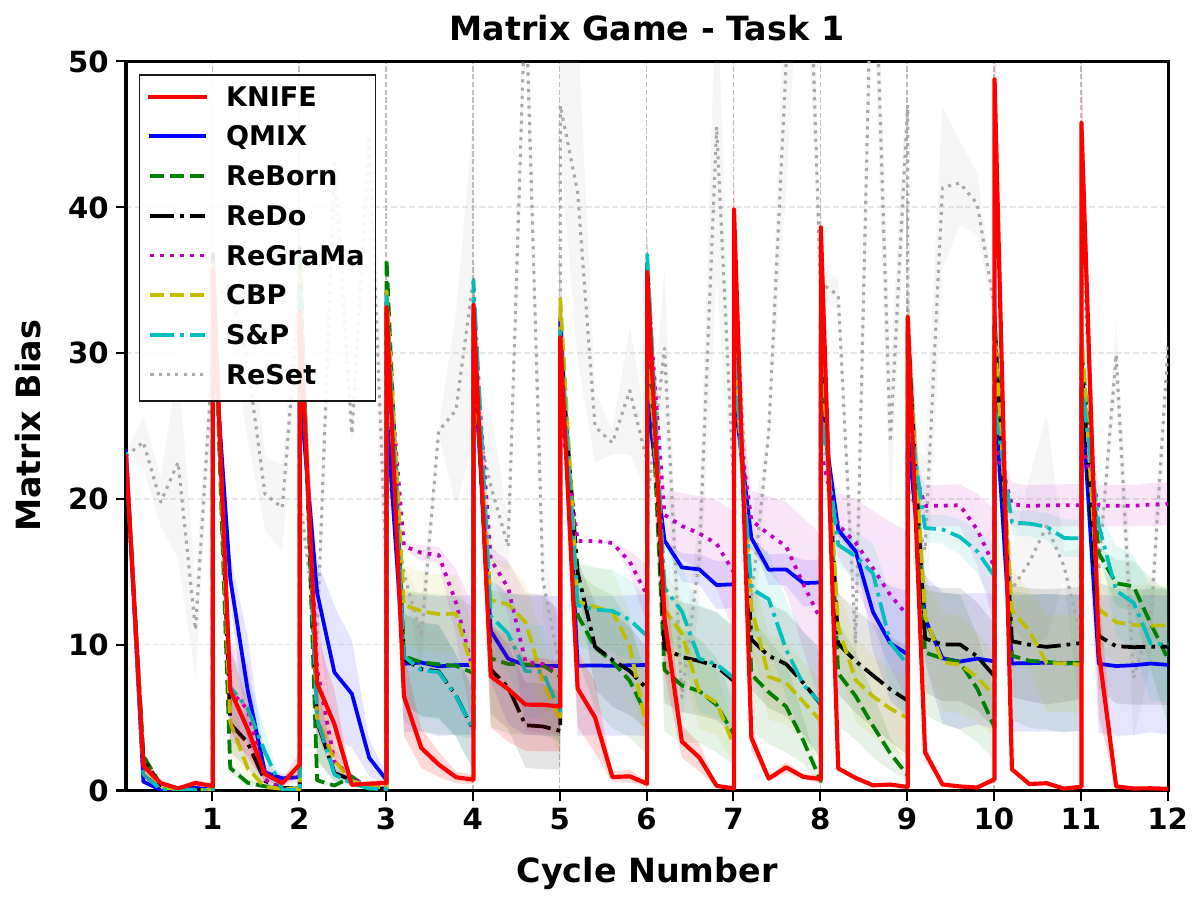}
    \includegraphics[width=0.32\columnwidth]{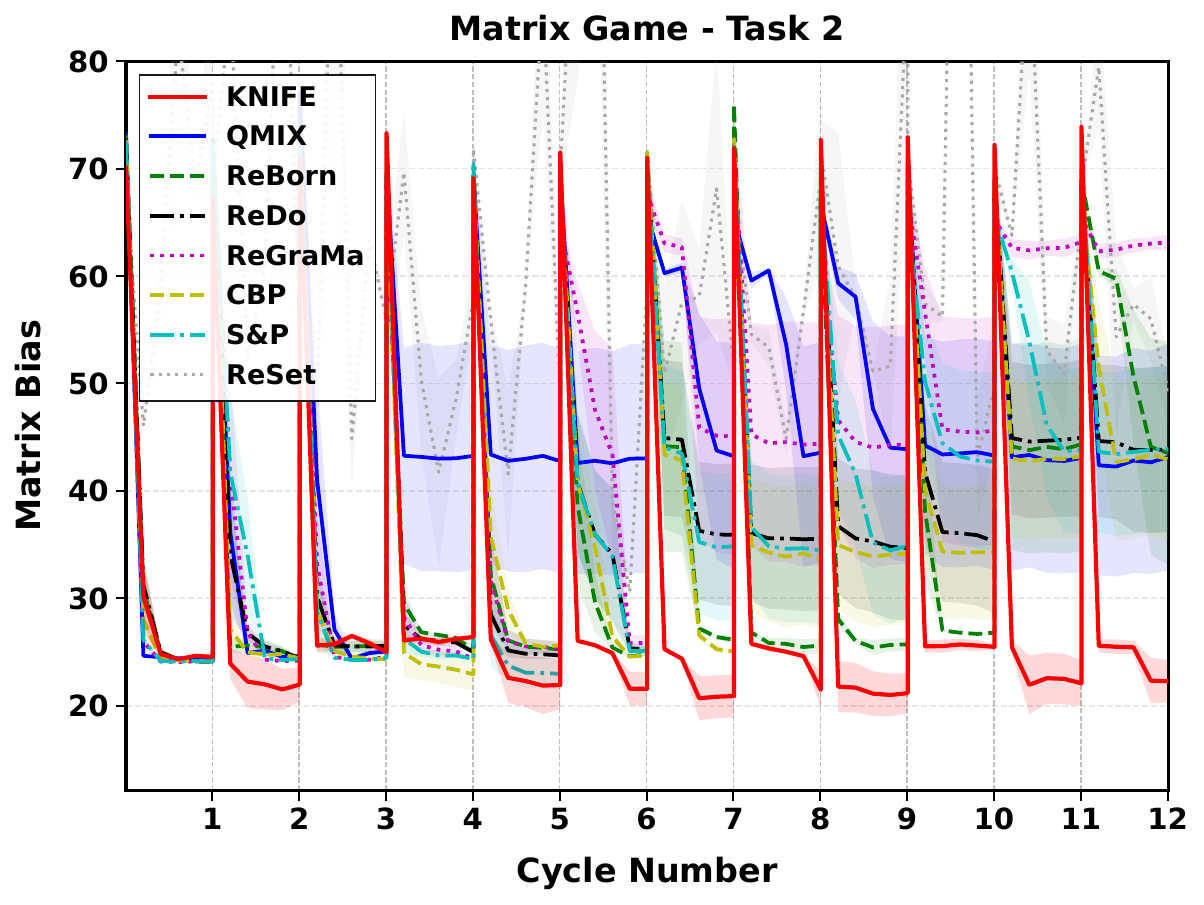} 		
	\includegraphics[width=0.32\columnwidth]{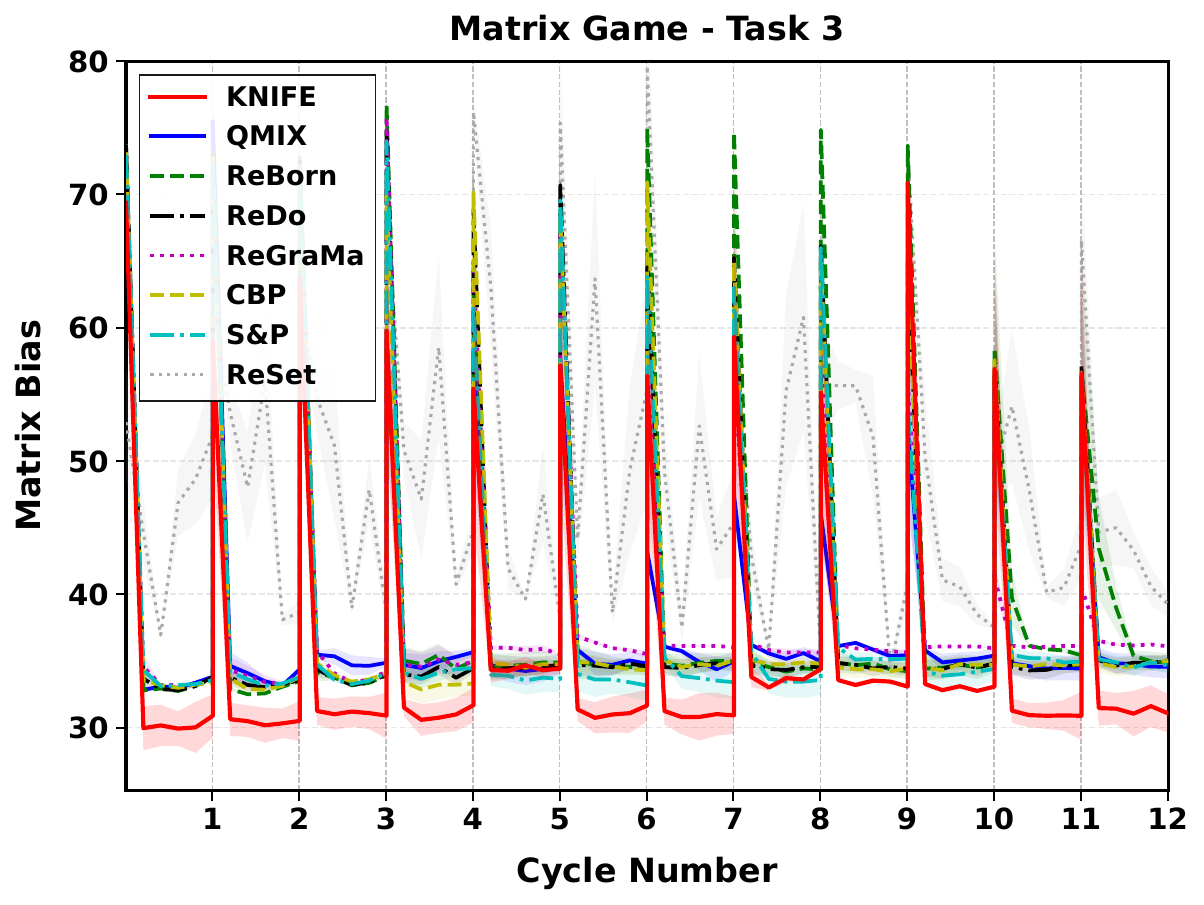} 	
	\end{minipage}
            \caption{Performance in different tasks of one-step matrix game cyclic setting.}
    \label{app:matrix_periodic_single}
    
\end{figure*}   

% \subsubsection{KNIFE Can Improve the Performance of Various
% Value Factorization Algorithms}

\begin{figure*}[!t]
	\begin{minipage}[!t]{\linewidth} % 第一行画图结果
		\centering
    \includegraphics[width=0.32\columnwidth]{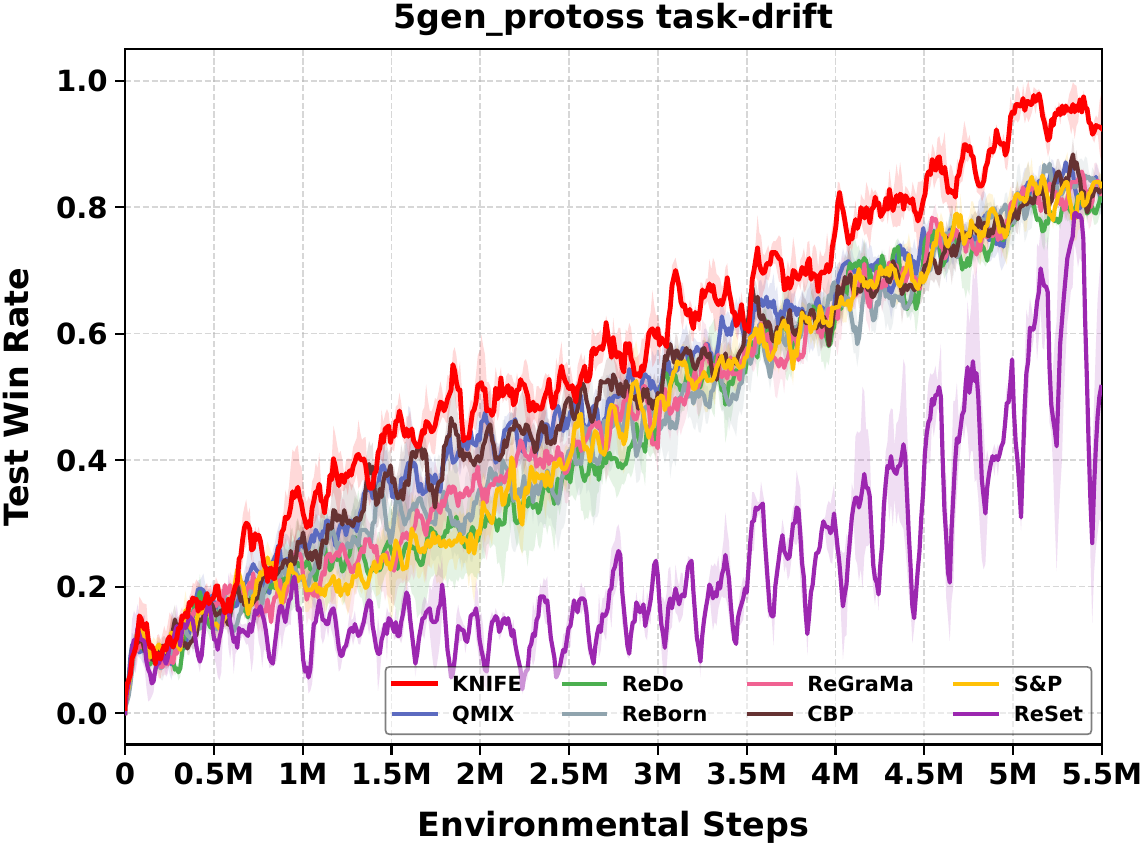}
    \includegraphics[width=0.32\columnwidth]{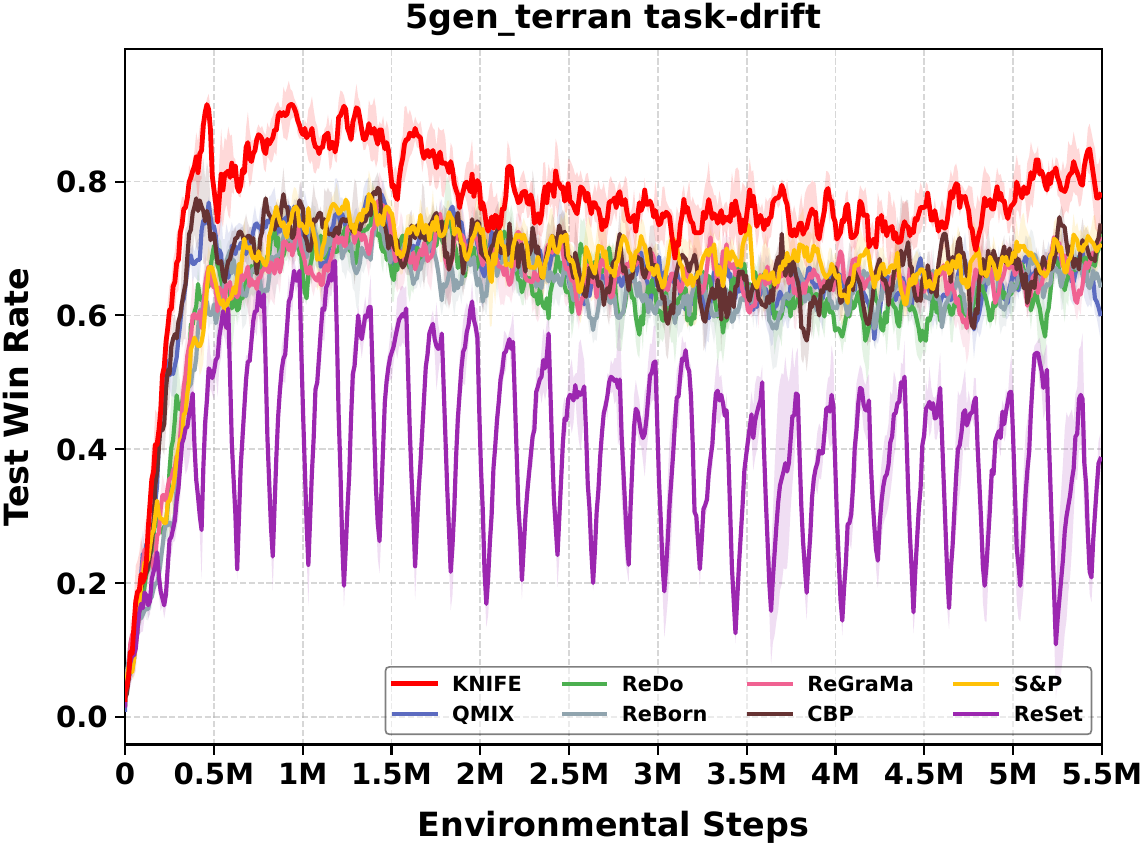} 		
	\includegraphics[width=0.32\columnwidth]{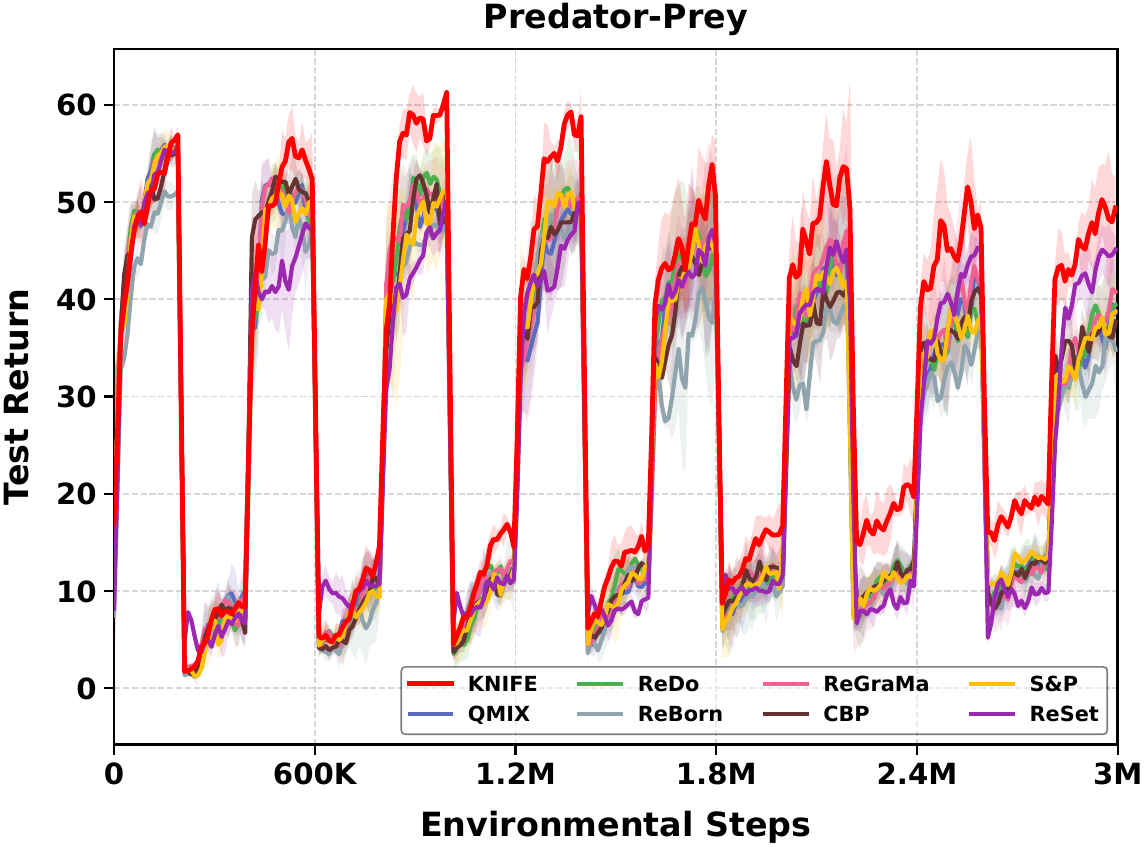} 	
	\end{minipage}
    \begin{minipage}[!t]{\linewidth} % 第一行画图结果
		\centering
    \includegraphics[width=0.32\columnwidth]{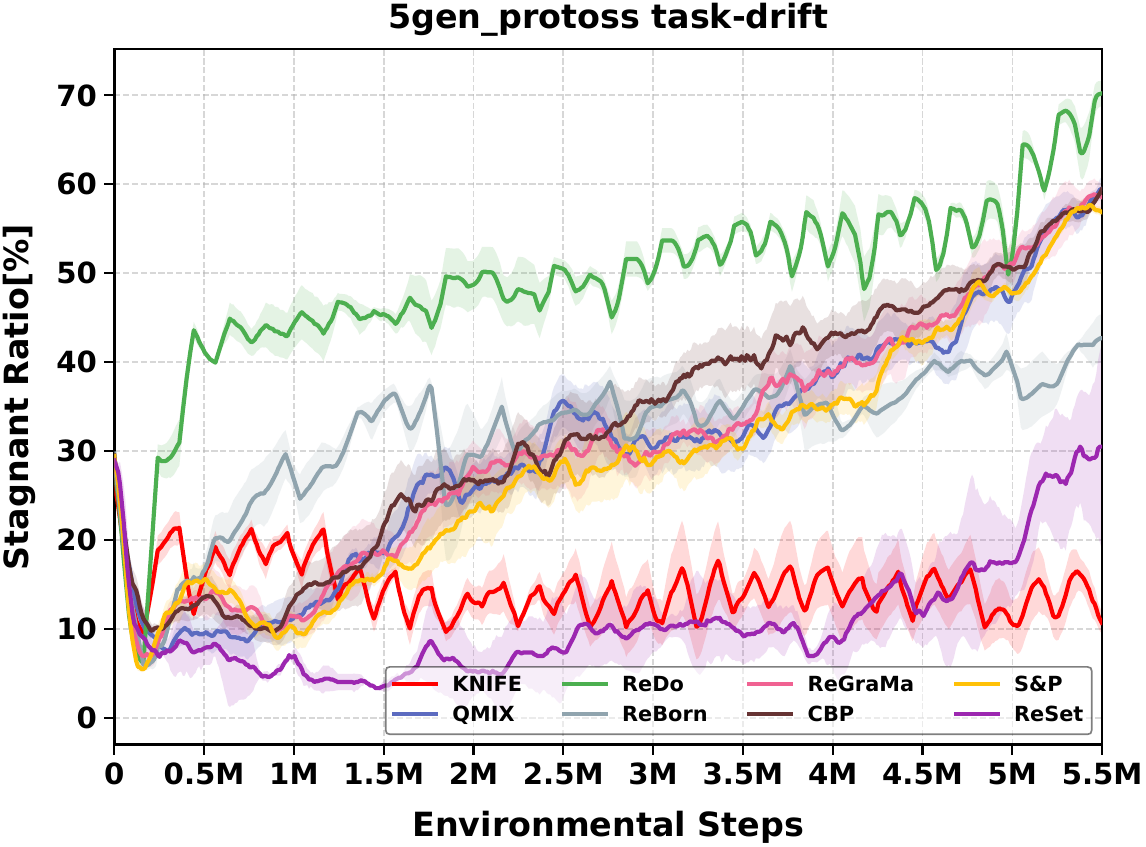}
    \includegraphics[width=0.32\columnwidth]{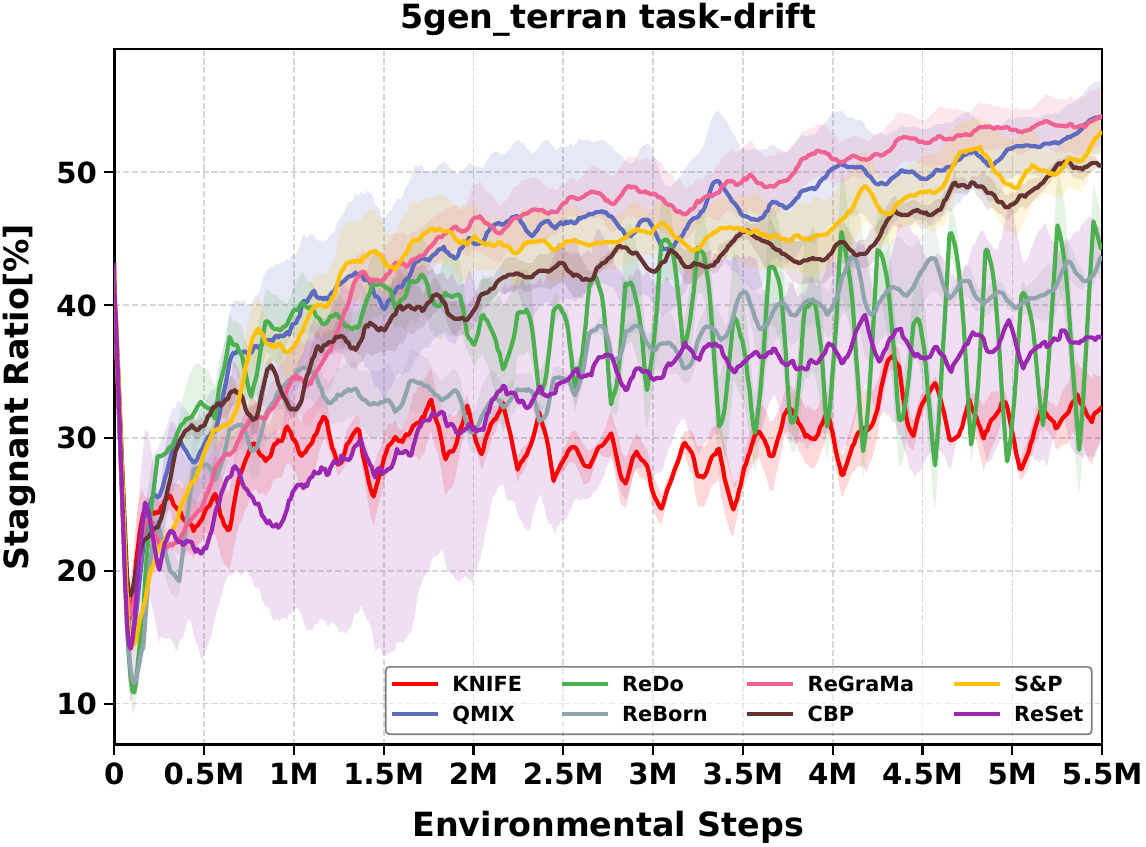} 		
	\includegraphics[width=0.32\columnwidth]{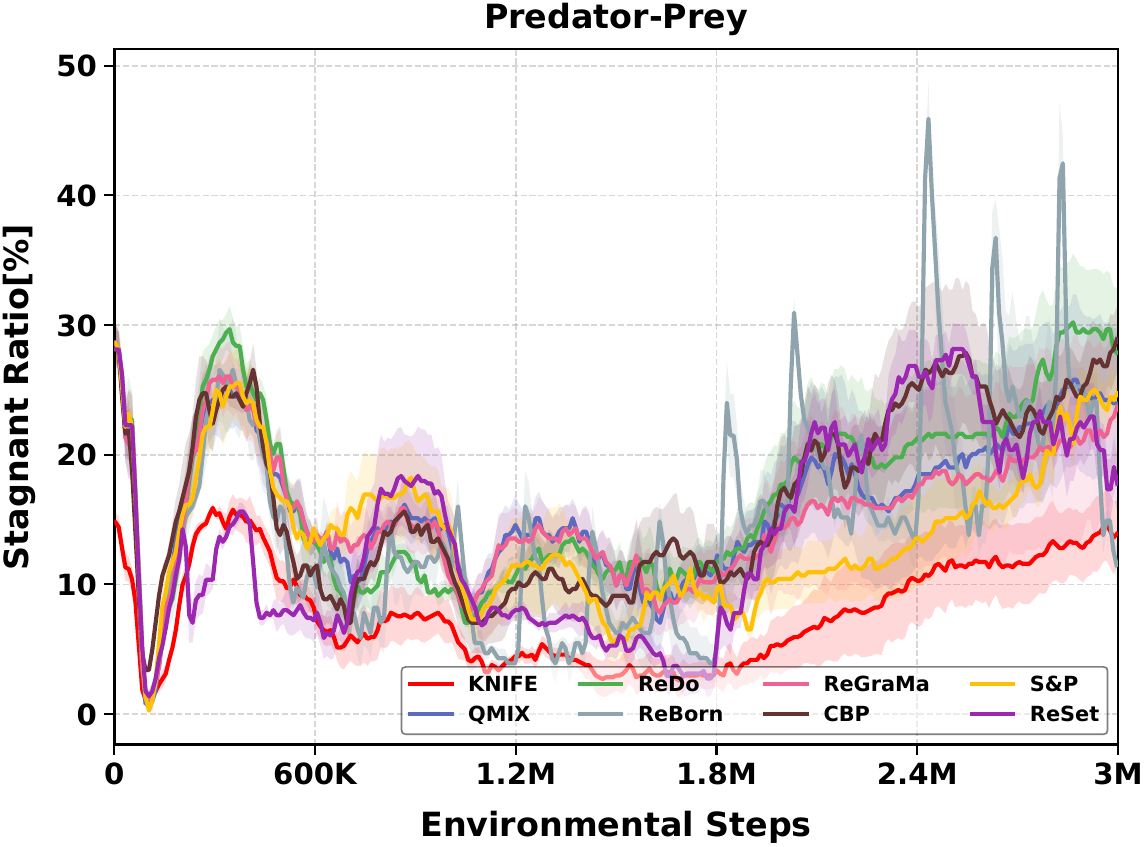} 	
	\end{minipage}
            \caption{KNIFE performs better than other plasticity injection methods for MARL tasks}
    \label{app:bench1}
\end{figure*}

%%%%%smacv2
\begin{figure*}[!t]
	\centering
	\begin{minipage}[b]{\linewidth} % 第一行画图结果
		\centering
		\includegraphics[width=0.32\columnwidth]{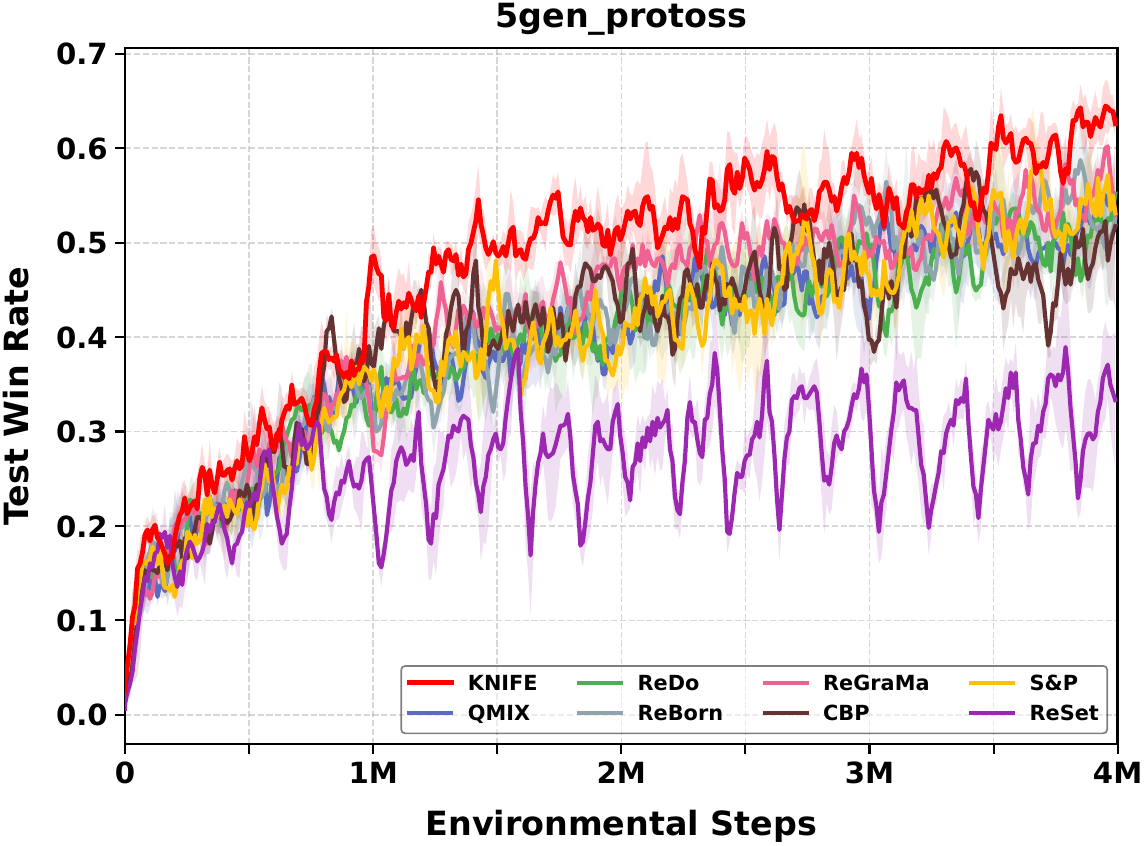} 		
		\includegraphics[width=0.32\columnwidth]{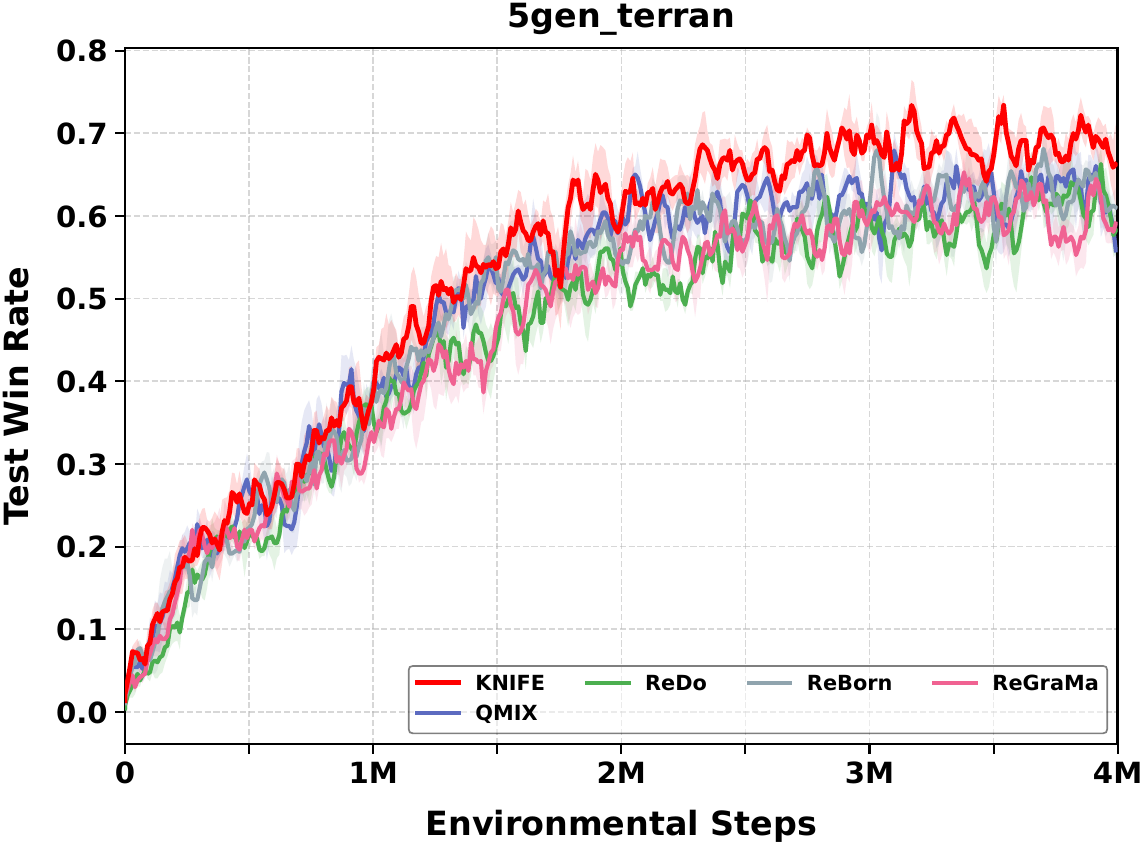} 	
		\includegraphics[width=0.32\columnwidth]{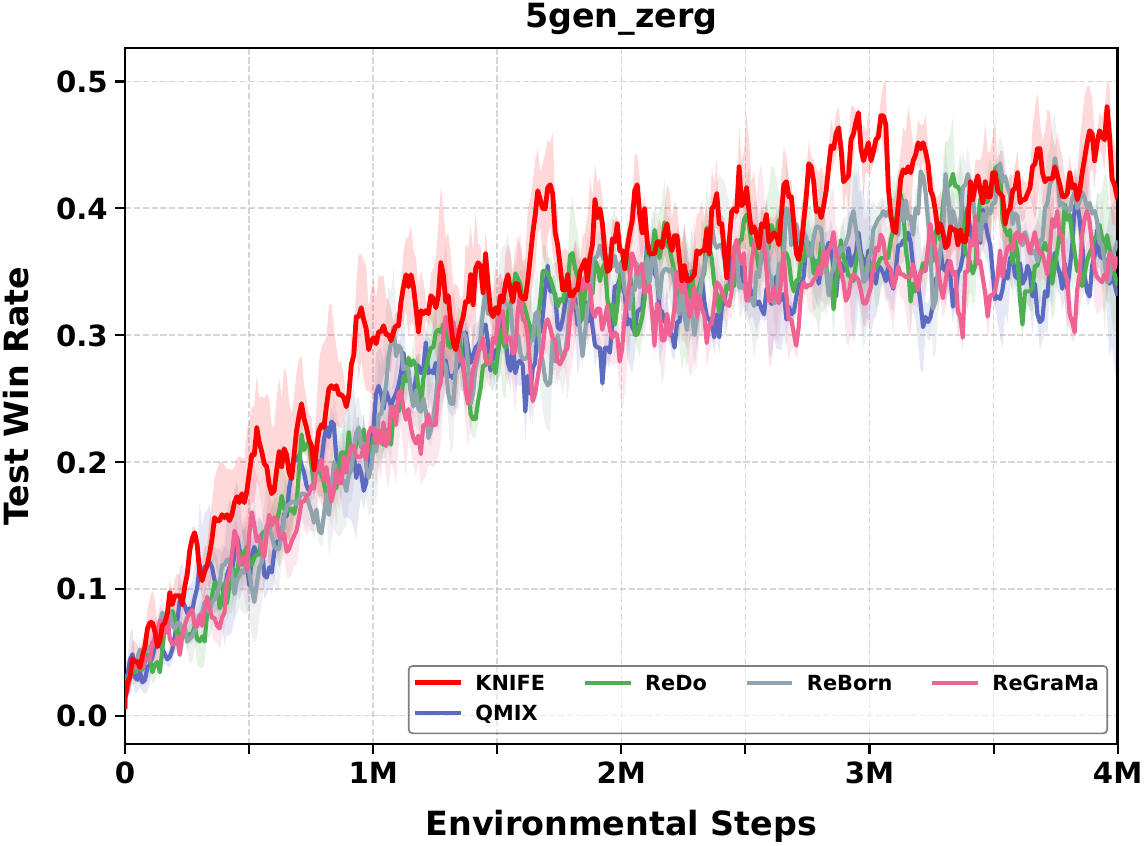}
	\end{minipage}

	\begin{minipage}[b]{\linewidth} % 第一行画图结果
		\centering
		\includegraphics[width=0.32\columnwidth]{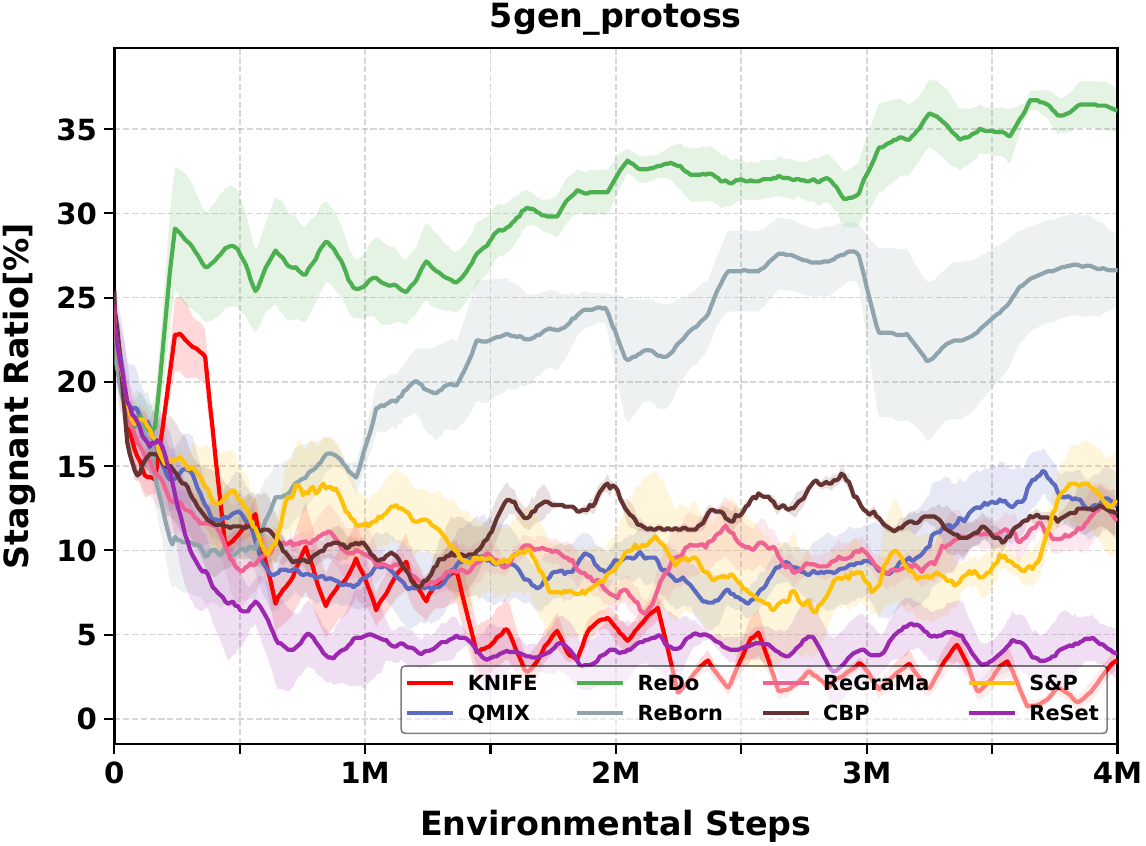} 		
		\includegraphics[width=0.32\columnwidth]{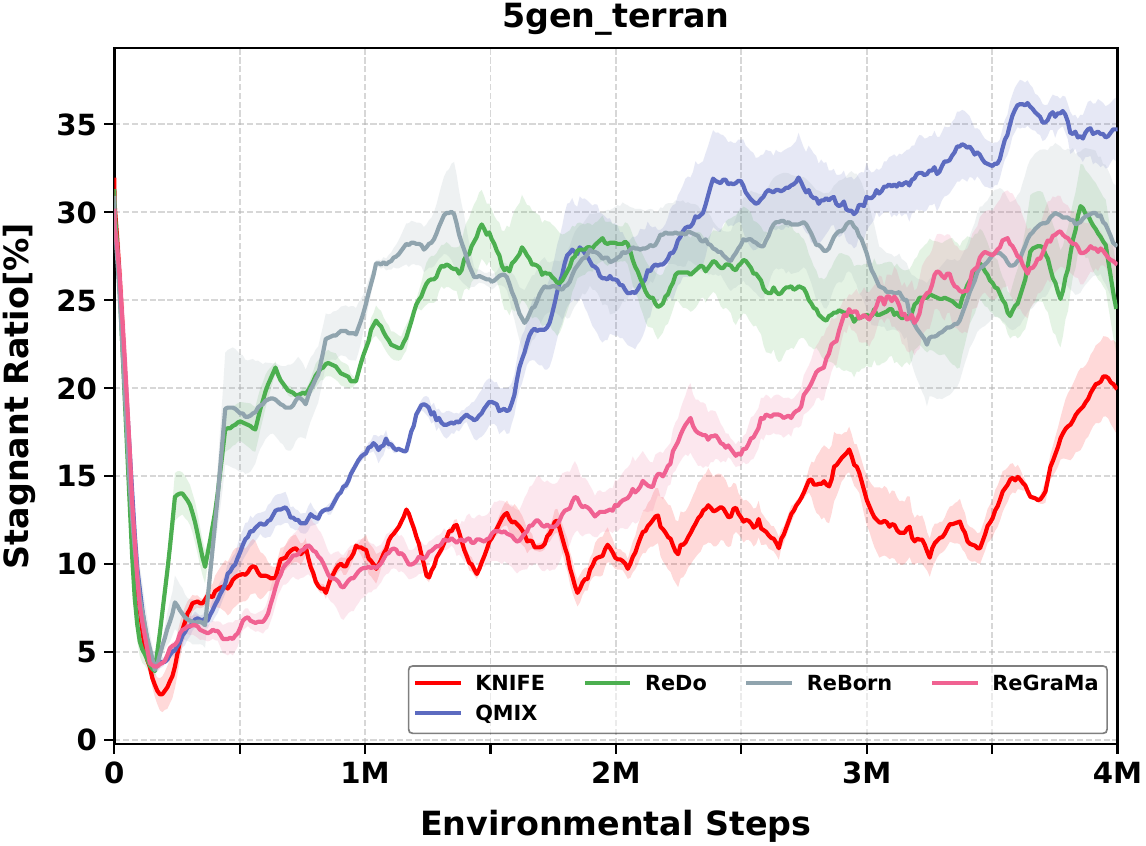} 	
		\includegraphics[width=0.32\columnwidth]{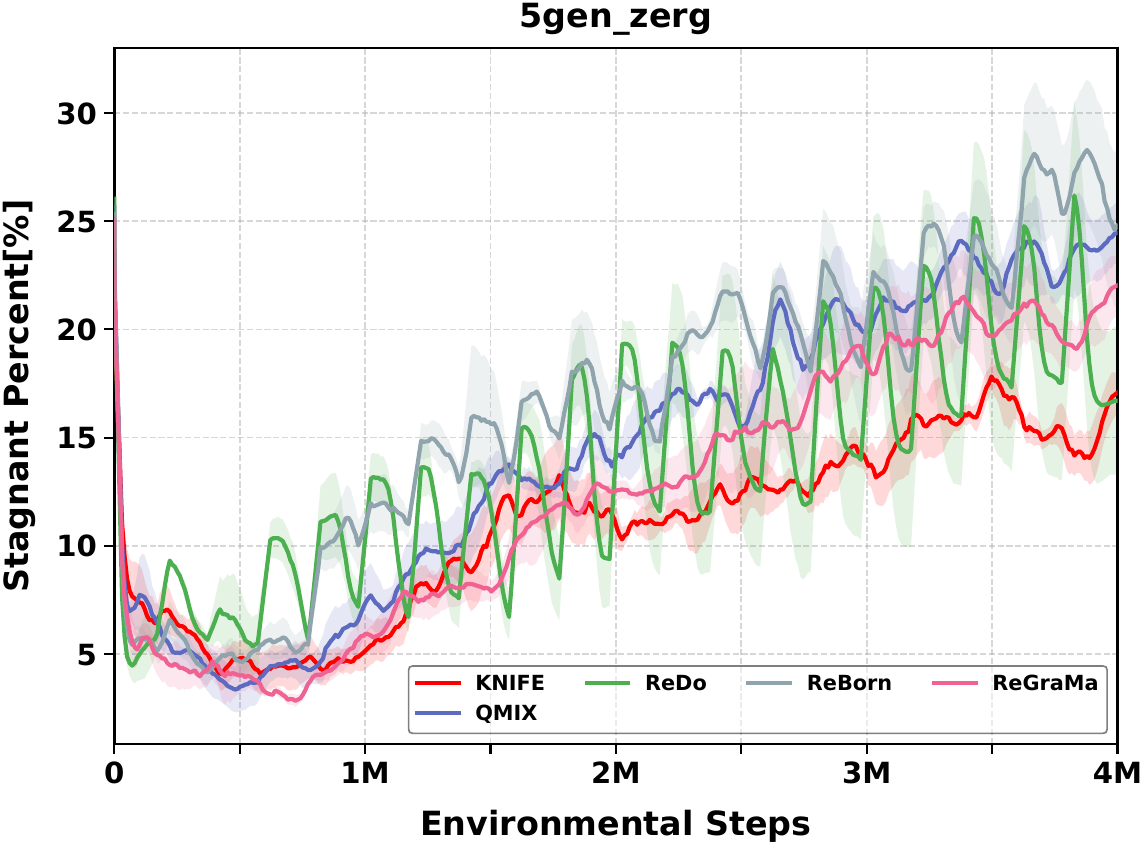}
	\end{minipage}
 \caption{KNIFE performs better than other Neuron-level plasticity injection method in standard SMACv2}
 %    	\begin{minipage}[b]{\linewidth} % 第一行画图结果
	% 	\centering
	% 	\includegraphics[width=0.32\columnwidth]{ICML25/experiment/smac/5m_vs_6m.pdf} 		
	% 	\includegraphics[width=0.32\columnwidth]{ICML25/experiment/smac/MMM2.pdf} 	
	% 	\includegraphics[width=0.32\columnwidth]{ICML25/experiment/smac/1c3s8z_vs_1c3s9z.pdf}
	% \end{minipage}
    
       % \caption{KNIFE performs better than other PS methods: (a-c) The test win rate and the futile neuron percentage (d-f) of the 2c\_vs\_64zg, 27m\_vs\_30m, 3s5z\_vs\_3s6z environment. (g-i) The test win rate of the 5m\_vs\_6m, MMM2, and 1c3s8z\_vs\_1c3s9z environments.}
	\label{app:bench2}
\end{figure*}

\subsection{Generality of KNIFE Across Diverse MARL Architectures}
\label{sec:appendix_generality}

While the primary evaluations in the main text focus on value factorization methods (e.g., QMIX and QPLEX), the underlying mechanism of KNIFE is inherently architecture-agnostic. The accumulation of stagnant neurons is a fundamental neural network pathology that emerges during sequential task transfers, and it is not strictly limited to hypernetwork-generated mixing weights.

To demonstrate the broad applicability of our method, we seamlessly integrated KNIFE into non-value-decomposition architectures, specifically a multi-agent proximal policy optimization method (MAPPO), a multi-agent deep deterministic policy gradient method (MADDPG), and a communication-centric actor-critic method (DGN). For these diverse architectures, KNIFE continuously monitors and intervenes on the stagnant neurons within the fully connected layers of their respective actor and critic networks.

Table \ref{tab:generality_marl} summarizes the performance across these different algorithms under the sequential task transfer setting. The results clearly indicate that equipping these baselines with KNIFE consistently yields substantial performance improvements. Furthermore, KNIFE significantly outperforms other neuron-level reset interventions, such as ReDo and ReBorn. This empirical evidence confirms that KNIFE serves as a versatile, plug-and-play plasticity injection module for the broader continual MARL community.

\begin{table}[htbp]
\centering
\caption{Performance (Win Rate \%) of KNIFE integrated into diverse MARL architectures under sequential task transfers.}
\label{tab:generality_marl}
\renewcommand{\arraystretch}{1.2}
\resizebox{\linewidth}{!}{
\begin{tabular}{lcccc}
\toprule
\textbf{Baseline} & \textbf{Baseline Performance} & \textbf{+ ReDo} & \textbf{+ ReBorn} & \textbf{+ KNIFE (Ours)} \\
\midrule
\textbf{MAPPO}  & $18.84 \pm 2.59$ & $20.26 \pm 4.25$ & $17.61 \pm 0.91$ & $\mathbf{25.44 \pm 2.69}$ \\
\textbf{MADDPG} & $20.13 \pm 2.66$ & $10.48 \pm 5.26$ & $29.16 \pm 2.27$ & $\mathbf{34.55 \pm 3.10}$ \\
\textbf{DGN}    & $17.48 \pm 1.66$ & -                & -                & $\mathbf{22.73 \pm 1.04}$ \\
\bottomrule
\end{tabular}
}
\end{table}

\begin{figure*}[!t]
	\begin{minipage}[!t]{\linewidth} % 第一行画图结果
		\centering
    \includegraphics[width=0.32\columnwidth]{ICML26/experiment/ablation/ablation2.1_vd_5gen_protoss_tasks_v5_500k_test_battle_won_mean.pdf}  
        \includegraphics[width=0.32\columnwidth]{ICML26/experiment/ablation/ablation2.2_vd_5gen_terran_tasks_v5_500k_test_battle_won_mean.pdf}
         \includegraphics[width=0.32\columnwidth]{ICML26/experiment/ablation/ablation2.3_5gen_protoss_tasks_v5_500k_test_battle_won_mean_DELTA.pdf} 
	\end{minipage}
            % \caption{(a) The bias for one-step payoff matrix, (b) return for Predator Prey, and (c) win rate for 3s\_vs\_5z of SMAC}
            \caption{KNIFE can reduce the plasticity loss for QPLEX and QMIX in (left) 5gen\_protoss SMACv2 (task-drift), (middle) 5gen\_terran environment SMACv2, (right) 5gen\_protoss environment SMACv2 (task-switch), for MADDPG, MAPPO and DGN } 
    \label{exp:compare:qplexdmix}
\end{figure*}

\subsection{De-hypernetworked Ablation: Structural vs. Training Dynamics}
\label{sec:appendix_dehypernetwork}

In value factorization methods like QMIX, mixing weights are typically generated by hypernetworks. To investigate whether the emergence of stagnant neurons is merely an architectural artifact of hypernetworks or a fundamental issue driven by training dynamics, we conducted a ``de-hypernetworked'' ablation. 

Specifically, in the \textit{5gen\_protoss task-switch} environment, we replaced the hypernetwork with alternative architectures: a Multi-Layer Perceptron (MLP) and a Convolutional Neural Network (CNN). To deeply analyze the underlying excessive optimization inertia, we tracked not only the stagnation ratio but also the Weight Spectra Entropy (measuring the expressivity and rank of the weight matrices) and the Gradient Norm (reflecting the active optimization capacity) across the training process.

\begin{table}[htbp]
\centering
\caption{Analysis of Stagnant Ratio, Weight Spectra Entropy, and Gradient Norm across different mixing network architectures during training.}
\label{tab:dehypernetwork}
\renewcommand{\arraystretch}{1.2}
\resizebox{\linewidth}{!}{
\begin{tabular}{llcccccc}
\toprule
\textbf{Metric} & \textbf{Architecture} & \textbf{0.5M step} & \textbf{1M step} & \textbf{1.5M step} & \textbf{2M step} & \textbf{2.5M step} & \textbf{3M step} \\
\midrule
\multirow{3}{*}{\textbf{Stagnant Ratio (\%)}} 
& \textbf{Hypernetwork (Default)} & 13.2 & 31.4 & 27.1 & 32.8 & 35.1 & 44.9 \\
& \textbf{MLP} & 26.6 & 31.4 & 29.8 & 34.0 & 30.7 & 34.6 \\
& \textbf{CNN} & 53.3 & 62.1 & 61.3 & 61.5 & 58.5 & 59.7 \\
\midrule
\multirow{3}{*}{\textbf{Weight Spectra Entropy}} 
& \textbf{Hypernetwork (Default)} & 4.006 & 3.968 & 3.953 & 3.937 & 3.938 & 3.933 \\
& \textbf{MLP} & 3.941 & 3.911 & 3.882 & 3.871 & 3.859 & 3.850 \\
& \textbf{CNN} & 3.893 & 3.862 & 3.846 & 3.829 & 3.823 & 3.813 \\
\midrule
\multirow{3}{*}{\textbf{Gradient Norm}} 
& \textbf{Hypernetwork (Default)} & 0.0291 & 0.0353 & 0.0307 & 0.0335 & 0.0242 & 0.0285 \\
& \textbf{MLP} & 0.0418 & 0.0310 & 0.0264 & 0.0227 & 0.0197 & 0.0207 \\
& \textbf{CNN} & 0.0157 & 0.0215 & 0.0132 & 0.0141 & 0.0102 & 0.0107 \\
\bottomrule
\end{tabular}
}
\end{table}

As shown in Table \ref{tab:dehypernetwork}, stagnant neurons persist across all architectures. Notably, replacing the hypernetwork with a CNN or MLP does not eliminate the pathology; in fact, the CNN structure exhibits an even higher stagnation ratio. Furthermore, across all architectures, the Weight Spectra Entropy steadily decreases (indicating a systemic loss of feature diversity and representational expressivity), while the Gradient Norm diminishes progressively over time. This synchronous decay in entropy and gradient flow firmly confirms that the accumulation of stagnant neurons is a fundamental, architecture-agnostic optimization inertia issue in continual MARL, rather than a structural artifact unique to hypernetworks.
\subsection{Initialization Strategies: Methodological Comparison and Ablation}
\label{sec:appendix_init}

The effectiveness of plasticity injection depends heavily on how the newly introduced parameters are integrated with the existing frozen network. KNIFE and PI fundamentally differ in their initialization logic.

\textbf{PI: State-Agnostic Kaiming Initialization.} PI typically employs standard Kaiming initialization for both weights $W$ and biases $b$ of the injected layers. The initialization follows:
\begin{equation}
W \sim \mathcal{N}\left(0, \frac{2}{d_{in}}\right), \quad b \sim \mathcal{U}\left(-\frac{6}{\sqrt{d_{in}}}, \frac{6}{\sqrt{d_{in}}}\right)
\end{equation}
where $d_{in}$ is the input dimension. This approach is entirely state-agnostic, ignoring the current optimization state of the pre-existing layers. This carries a significant risk that the newly injected neurons will immediately fall into activation dead zones (e.g., dying ReLUs) or fail to propagate meaningful signals during the early stages of transfer learning.

\textbf{KNIFE: State-Aware Initialization.} In contrast, KNIFE applies a state-aware strategy specifically designed to bridge the gradient flow for stagnant neurons:
\begin{itemize}
    \item \textbf{Positive Input Bias:} We explicitly initialize the input bias $b_{in}^l$ to a positive value: $b_{in}^l = 0.5 \sigma_{in}$, where $\sigma_{in}$ is the standard deviation of its input weights. This increases the likelihood of positive activations, ensuring the injected neurons remain active and can receive backpropagated gradients immediately.
    \item \textbf{Contextual Output Weight:} The output weights $w_{out}^l$ are initialized as $w_{out}^l \sim \mathcal{N}(0, \sigma_{out})$, where $\sigma_{out}$ is tied to the weights of the subsequent layer: $\sigma_{out} = \max(std(w^{l+1}), 10^{-2})$. This ensures that the forward signal is fully compatible with the existing network distribution.
\end{itemize}

\textbf{Ablation Study.} To empirically validate the necessity of this state-aware design, we evaluated an ablated version, \textit{KNIFE (Kaiming init)}, which uses standard Kaiming initialization instead of our proposed approach. We compare its performance against the full KNIFE algorithm, PI, and ReDo across different sequential task transfer environments.

\begin{table}[htbp]
\centering
\caption{Ablation study of initialization strategies (Win Rate \%).}
\label{tab:init_ablation}
\renewcommand{\arraystretch}{1.2}
\resizebox{\linewidth}{!}{
\begin{tabular}{lccccc}
\toprule
\textbf{Environment} & \textbf{KNIFE (Ours)} & \textbf{KNIFE (Kaiming init)} & \textbf{PI @ 25\%} & \textbf{PI @ 50\%} & \textbf{ReDo} \\
\midrule
\textbf{5gen\_protoss task-switch} & $\mathbf{91.17 \pm 1.17}$ & $88.43 \pm 0.97$ & $82.55 \pm 0.33$ & $84.45 \pm 1.38$ & $81.20 \pm 1.36$ \\
\textbf{5gen\_protoss task-drift}   & $\mathbf{94.68 \pm 0.74}$ & $89.49 \pm 1.44$ & $81.48 \pm 0.87$ & $82.27 \pm 1.19$ & $79.19 \pm 0.63$ \\
\textbf{5gen\_protoss}              & $\mathbf{60.89 \pm 1.87}$ & $57.91 \pm 0.04$ & $53.41 \pm 1.07$ & $54.02 \pm 2.85$ & $50.33 \pm 2.49$ \\
\bottomrule
\end{tabular}
}
\end{table}

As shown in Table \ref{tab:init_ablation}, KNIFE achieves the highest overall performance and consistently outperforms the KNIFE (Kaiming init) baseline. This empirically demonstrates that our state-aware initialization actively prevents newly injected neurons from falling into activation dead zones, ensuring they successfully bridge the gradient flow from the earliest stages. 

Furthermore, the advantage of our neuron-level surgery is evident: even when reverting to the standard Kaiming initialization, KNIFE still consistently outperforms PI across all tested environments. This strongly demonstrates that KNIFE's structural innovation—specifically targeted, neuron-level surgery for stagnant neurons—is fundamentally more effective than PI's layer-level injection approach.

\subsection{Ablations on Stagnant/Dormant Neurons}
\label{sec:appendix_causal}

To pinpoint which component truly drives the performance gains and to prove that stagnant neurons are the primary cause of degradation, we conducted causal ablation studies. We developed several variants by altering the intervention targets:
\begin{itemize}
    \item \textbf{KNIFE (non-stagnant):} Applies KNIFE surgery to non-stagnant neurons, keeping stagnation constant.
    \item \textbf{KNIFE (Dormant):} Applies KNIFE specifically to dormant neurons (inactive units).
    \item \textbf{ReDo (Stagnant) \& ReDo (Dormant):} Applies the standard ReDo reset mechanism to stagnant and dormant neurons, respectively.
\end{itemize}
\begin{table}[htbp]
\centering
\caption{Performance comparison of causal ablation variants (Win Rate \%).}
\label{tab:causal_ablation}
\renewcommand{\arraystretch}{1.2}
\resizebox{\linewidth}{!}{
\begin{tabular}{lccccc}
\toprule
\textbf{Environment} & \textbf{KNIFE (Ours)} & \textbf{KNIFE (Dormant)} & \textbf{ReDo (Stagnant)} & \textbf{ReDo (Dormant)} & \textbf{KNIFE (non-stagnant)} \\
\midrule
\textbf{5gen\_protoss} & $\mathbf{60.89 \pm 1.87}$ & $55.37 \pm 3.09$ & $55.20 \pm 2.30$ & $50.33 \pm 2.49$ & $46.09 \pm 2.10$ \\
\textbf{5gen\_protoss task-switch} & $\mathbf{91.17 \pm 1.17}$ & $87.05 \pm 2.30$ & $88.01 \pm 0.63$ & $81.20 \pm 1.36$ & $77.46 \pm 0.41$ \\
\bottomrule
\end{tabular}
}
\end{table}

As demonstrated in Table \ref{tab:causal_ablation}, applying interventions specifically to stagnant neurons (both in KNIFE and ReDo) strictly outperforms targeting dormant or non-stagnant ones. Conversely, intervening on non-stagnant units yields the poorest results, indicating that modifying healthy neurons disrupts learning. This causally strengthens our claim: stagnant neurons are the primary drivers of plasticity loss, and targeting them is the key to effective adaptation.

\subsection{Computational Overhead and Memory Efficiency}
\label{sec:appendix_overhead}

To address potential concerns regarding the computational overhead and memory pressure introduced by our composite units, we comprehensively profiled the peak GPU memory consumption (in MB) and total training time (in hours) of KNIFE against both layer-level (PI) and neuron-level (ReDo, ReBorn) baselines.

As shown in Table \ref{tab:pi_perf}, PI's layer-level expansion nearly doubles the memory footprint (e.g., reaching over 630 MB in the task-switch setting). Conversely, KNIFE's targeted neuron-level surgery introduces only a minimal memory increase compared to the standard QMIX baseline. Furthermore, when compared with other neuron-level methods (Table \ref{tab:overhead}), although KNIFE requires a moderate temporary memory expansion during the injection phase, its continuous decay and tensor reuse mechanisms keep overall utilization highly efficient. Crucially, KNIFE achieves the highest win rates and the shortest total training times. This proves that our method delivers superior adaptability without imposing significant computational burdens.

\begin{table}[htbp]
\centering
\caption{Empirical Performance and Memory Overhead Comparison against Layer-Level Injection}
\label{tab:pi_perf}
\renewcommand{\arraystretch}{1.1}
\resizebox{\linewidth}{!}{
\begin{tabular}{llcccc}
\toprule
\textbf{Environment} & \textbf{Metric} & \textbf{KNIFE (Ours)} & \textbf{PI @ 25\%} & \textbf{PI @ 50\%} & \textbf{QMIX (Baseline)} \\
\midrule
\multirow{2}{*}{\textbf{5gen\_protoss task-switch}} 
& Win Rate (\%) & $91.17 \pm 1.17$ & $82.55 \pm 0.33$ & $84.45 \pm 1.38$ & $82.50 \pm 2.50$ \\
& Memory (MB)   & 330 & 654 & 630 & 293 \\
\midrule
\multirow{2}{*}{\textbf{5gen\_protoss task-drift}}  
& Win Rate (\%) & $94.68 \pm 0.74$ & $81.48 \pm 0.87$ & $82.27 \pm 1.19$ & $82.75 \pm 1.40$ \\
& Memory (MB)   & 431 & 655 & 679 & 416 \\
\midrule
\multirow{2}{*}{\textbf{5gen\_protoss}}             
& Win Rate (\%) & $60.89 \pm 1.87$ & $53.41 \pm 1.07$ & $54.02 \pm 2.85$ & $50.98 \pm 3.55$ \\
& Memory (MB)   & 176 & 239 & 230 & 166 \\
\bottomrule
\end{tabular}
}
\end{table}

\begin{table}[htbp]
\centering
\caption{Performance, Training Time, and Memory Consumption Comparison against Neuron-Level Methods}
\label{tab:overhead}
\renewcommand{\arraystretch}{1.2}
\resizebox{\linewidth}{!}{
\begin{tabular}{llccc}
\toprule
\textbf{Environment} & \textbf{Metric} & \textbf{KNIFE (Ours)} & \textbf{ReDo} & \textbf{ReBorn} \\
\midrule
\multirow{3}{*}{\textbf{5gen\_protoss}}
& Win Rate (\%) & $\mathbf{60.89 \pm 1.87}$ & $50.50 \pm 2.05$ & $53.41 \pm 2.94$ \\
& Time (h)      & $\mathbf{26.01}$ & 27.70 & 30.73 \\
& Memory (MB)   & 176.63 & $166.88$ & $164.45$ \\
\midrule
\multirow{3}{*}{\textbf{5gen\_protoss task-switch}}
& Win Rate (\%) & $\mathbf{91.17 \pm 1.17}$ & $81.20 \pm 1.36$ & $79.65 \pm 1.34$ \\
& Time (h)      & $\mathbf{25.98}$ & 29.34 & 31.10 \\
& Memory (MB)   & 330.58 & $315.36$ & $300.93$ \\
\bottomrule
\end{tabular}
}
\end{table}

\subsection{Scalability to Large-Scale}%可以在大规模上运行

To explicitly evaluate whether KNIFE can effectively transfer to larger-scale MARL settings we designed an extended sequential continual setting:
\begin{itemize}
    \item \textbf{Large-Scale Agent Scaling:} The environment progressively scales up both the number of agents and enemies from 5 to 40 ($5m \rightarrow 10m \rightarrow 20m \rightarrow 30m \rightarrow 40m$).
\end{itemize}

As shown in Table \ref{tab:large_scale}, scaling the environment up to 40 agents significantly challenges the plasticity of the baseline methods. 

\begin{table}[htbp]
\centering
\caption{Performance (Win Rate \%) on extended large-scale.}
\label{tab:large_scale}
\renewcommand{\arraystretch}{1.2}
\resizebox{\linewidth}{!}{
\begin{tabular}{lccccccc}
\toprule
\textbf{Sequential Setting} & \textbf{QMIX}  & \textbf{ReDo} & \textbf{KNIFE (Ours)} \\
\midrule
\textbf{5m $\rightarrow$ 10m $\rightarrow$ 20m $\rightarrow$ 30m $\rightarrow$ 40m} & $66.45 \pm 17.55$ & $39.06 \pm 13.43$ & $\mathbf{83.59 \pm 5.52}$ \\
\bottomrule
\end{tabular}
}
\end{table}

KNIFE successfully navigates these complex transitions. By effectively mitigating the accumulation of stagnant neurons, KNIFE maintains remarkably high win rates ($83.59\%$) and drastically outperforms all other baselines. This compelling empirical evidence confirms that KNIFE is not limited to simple or small-scale scenarios, but serves as a highly robust and scalable plasticity injection module for large-scale MARL settings.

\subsection{Parameter Sensitivity}%参数敏感性消融 还要加一个和finetune的结果比
\label{app:parameter_sensitivity}

We study the weight decay factor $\gamma \in \{0, 0.99, 0.995, 1\}$, where $\gamma$ controls how many frozen neurons are retained after each pruning stage. The result is shown in Figure~\ref{app:fig:parameters} (a). 

With $\gamma=0$, the model discards past knowledge immediately and performs worst, suggesting that abrupt forgetting harms previously learned coordination. With $\gamma=1$, the outdated knowledge in knowledge neurons is kept without adaptation to other tasks, which leads to a slight performance drop. An intermediate $\gamma$ achieves the best trade-off: it avoids sudden forgetting while gradually removing outdated knowledge, thereby allowing the injected neurons to optimally improve adaptation.

\begin{figure*}[!t]
	\centering
    \begin{minipage}[!t]{\linewidth} % 第一行画图结果
		\centering
        \includegraphics[width=0.24\columnwidth]{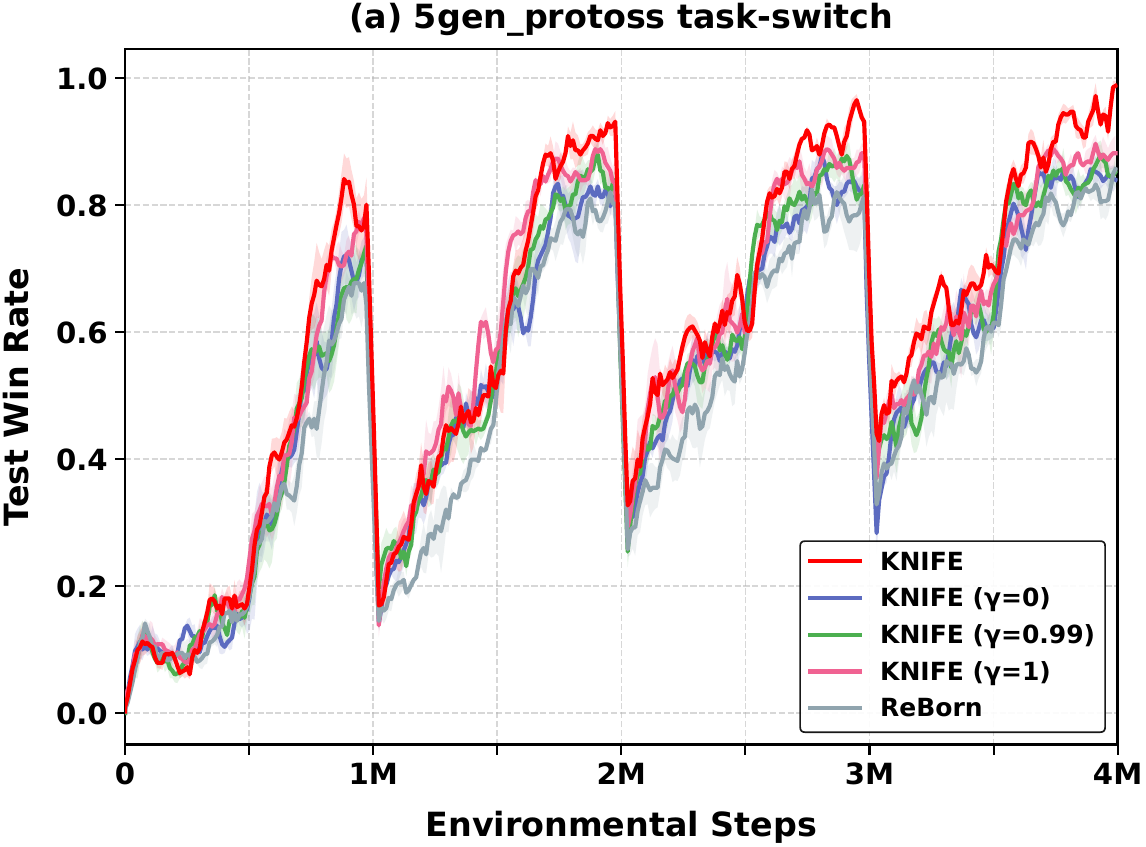} 	
            \includegraphics[width=0.24\columnwidth]{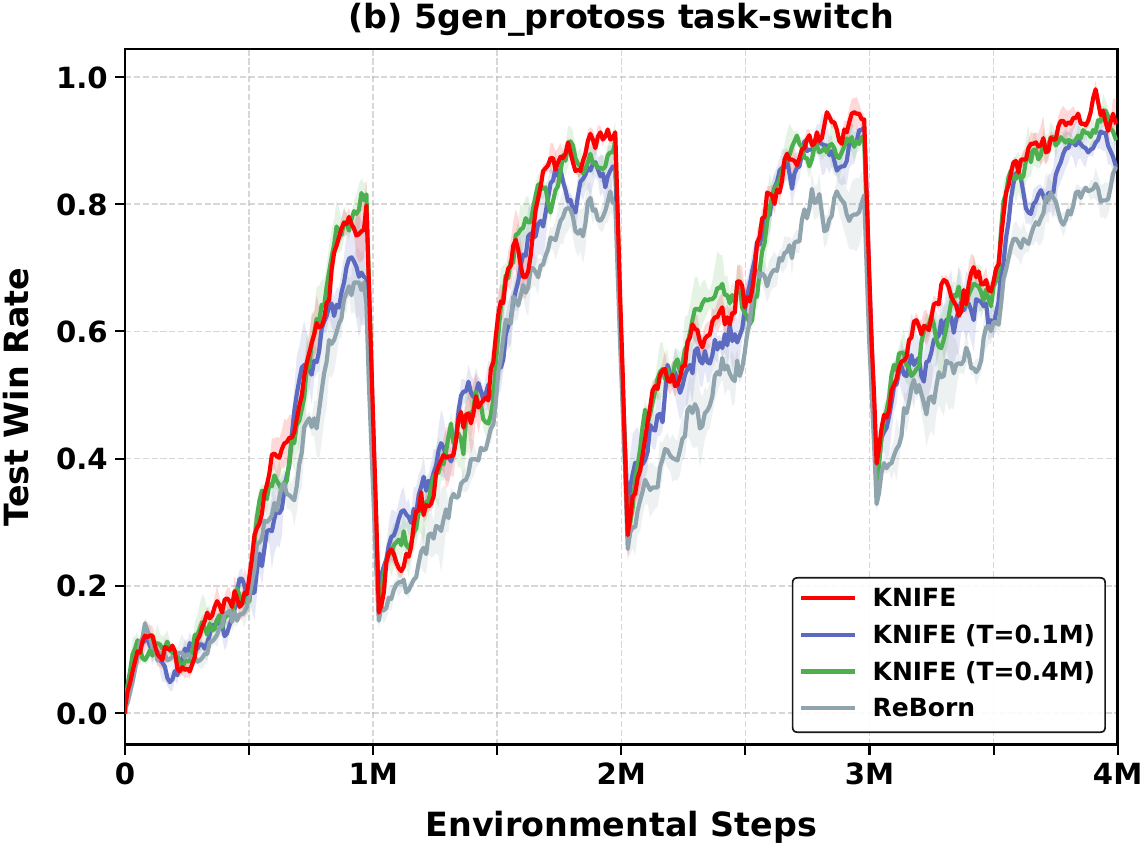} 	
            \includegraphics[width=0.24\columnwidth]{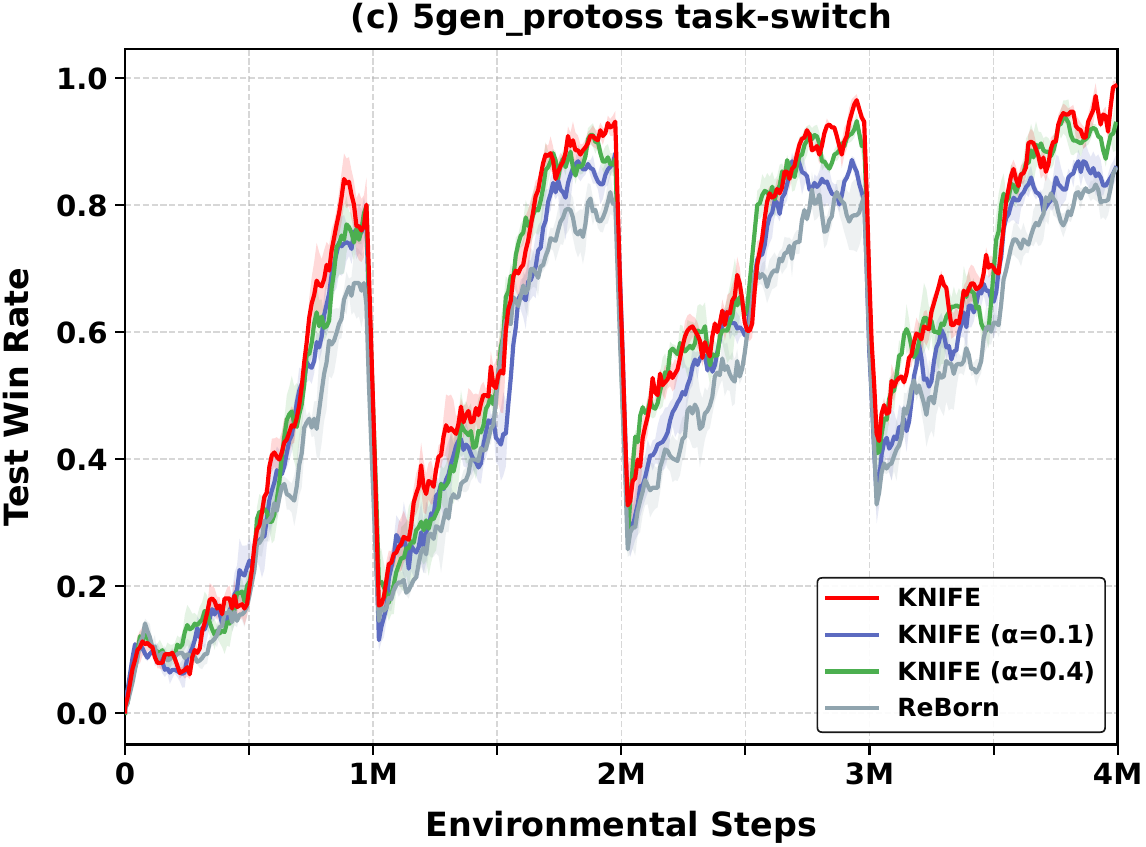} 		
		\includegraphics[width=0.24\columnwidth]{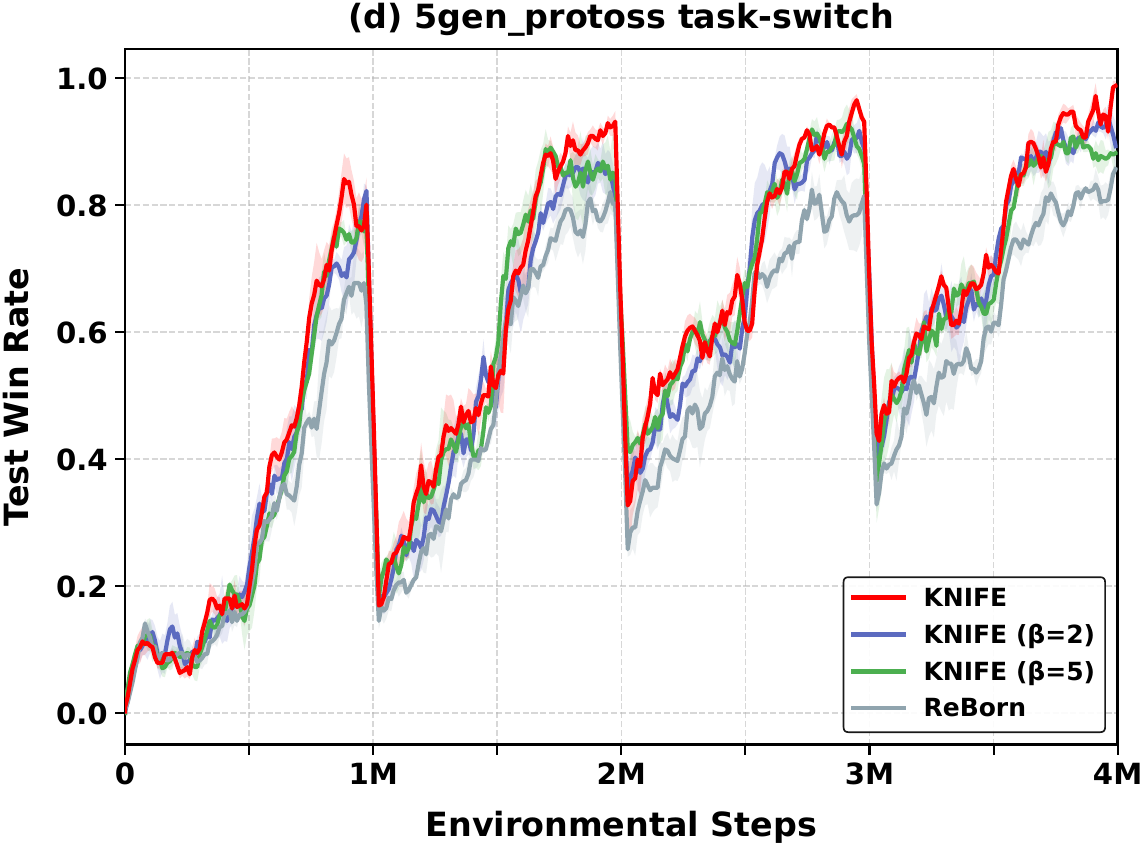} 	
	\end{minipage}
        \caption{Parameter Sensitivity: (a) Weight decaying factor, (b) Execution interval, (c) Stagnant threshold, (d) Volatile threshold}
	\label{app:fig:parameters} 
\end{figure*}

\textbf{Impact of Hyperparameter Tuning.} 
Furthermore, we evaluated the performance gap between a single cross-environment default configuration and a lightly tuned configuration. While KNIFE achieves strong results using identical default hyperparameters across most environments, tuning for a specific scenario can further unlock its potential. As shown in Table \ref{tab:tuned_perf}, lightly tuning the hyperparameters for the specific \textit{5gen\_protoss} environment yields an additional performance improvement, demonstrating the method's flexibility and upper-bound capacity.

\begin{table}[htbp]
\centering
\caption{Performance comparison between default and lightly tuned configurations.}
\label{tab:tuned_perf}
\renewcommand{\arraystretch}{1.2}
\begin{tabular}{lcc}
\toprule
\textbf{Environment} & \textbf{KNIFE (Default)} & \textbf{KNIFE (Tuned)} \\
\midrule
\textbf{5gen\_protoss} & $60.89 \pm 1.87$ & $63.50 \pm 1.45$ \\
\bottomrule
\end{tabular}
\end{table}

% 

% \begin{figure*}[!t]
% 	\begin{minipage}[!t]{\linewidth} % 第一行画图结果
% 		\centering
%     \includegraphics[width=0.32\columnwidth]{ICML26/experiment/smacv2/vd/[vd]5gen_protoss_tasks_v5_500k_test_battle_won_mean.pdf} 
%     \includegraphics[width=0.32\columnwidth]{ICML26/experiment/smacv2/vd/[vd]5gen_protoss_tasks_v6_500k_test_battle_won_mean.pdf}
% 	\includegraphics[width=0.32\columnwidth]{ICML26/experiment/smacv2/vd/[vd]5gen_terran_tasks_v6_500k_test_battle_won_mean.pdf} 	
% 	\end{minipage}
%             \caption{KNIFE can reduce the plasticity loss for QPLEX and QMIX in 5gen\_protoss environment of (a) SMACv2 (task-drift) and (b) SMACv2 (task-drift) (c) 5gen\_terran of SMACv2 (task-switch)}
%     \label{exp:compare:vd}
% \end{figure*}   

\section{Discussion}

\subsection{Societal impact}
This paper studies continual multi-agent reinforcement learning (MARL) and proposes a neuron-level plasticity injection method to improve adaptation under non-stationary reward profiles. The work is evaluated only on standard benchmark environments. Potential positive impacts include improved robustness and reduced retraining when objectives change. As with many RL techniques, improved adaptation could also be misused in unsafe or unmonitored deployments; we do not claim readiness for real-world use.
% \subsection{Limitations and future work}
% We mainly evaluate value-decomposition MARL benchmarks; it remains to be tested how well the diagnosis and method transfer to other MARL architectures and larger-scale settings.Future work will address these broader evaluations.

\subsection{Limitations and future work}
While we have demonstrated the effectiveness of KNIFE across diverse MARL architectures and successfully scaled it to large-scale scenarios (up to 40 agents), our evaluations are currently confined to simulated benchmark environments. The dynamic complexities of real-world physical systems (e.g., severe sim-to-real gaps and unpredictable sensor noise) remain to be fully explored. Additionally, while our pruning mechanism ensures long-term memory efficiency, the initial plasticity injection phase introduces a temporary memory overhead. Future work will investigate memory-constant injection mechanisms and evaluate KNIFE's robustness on physical multi-agent robotic systems.

%%%%%%%%%%%%%%%%%%%%%%%%%%%%%%%%%%%%%%%%%%%%%%%%%%%%%%%%%%%%

%\newpage
%\input{checklist.tex}

\end{document}